\def \interior   {\text{int}}
\def \Rcal {\mathcal{R}}
\def \Ecal {\mathcal{E}}
\def \bX {\boldsymbol{\X}}
\def \bfh {\hat{\f}}
\def \bxi {\boldsymbol{\xi}}
\def \poly {\mbox{poly}}
\definecolor{Green}{rgb}{0.13, 0.65, 0.3}
\renewcommand{\tilde}{\widetilde}
\renewcommand{\hat}{\widehat}
\def \O {\mathcal{O}}
\def \Ot {\widetilde{\mathcal{O}}}
\def \X {\mathcal{X}}
\def \T {\top}
\def \x {\boldsymbol{x}}
\def \y {\boldsymbol{y}}
\def \h {\boldsymbol{h}}
\def \f {\boldsymbol{f}}
\newcommand{\LineComment}[1]{\hfill$\rhd\ $\text{#1}}
\def \R {\mathbb{R}}
\newcommand{\calB}{{\mathcal{B}}}
\newcommand{\calX}{{\mathcal{X}}}
\newcommand{\calU}{{\mathcal{U}}}
\newcommand{\calK}{{\mathcal{K}}}
\newcommand{\calT}{{\mathcal{T}}}
\newcommand{\Reg}{\textsc{Reg}}
\newcommand{\one}{\boldsymbol{1}}
\newcommand{\zero}{\boldsymbol{0}}
\newcommand{\dplus}[1]{\bm{#1}}
\newcommand{\dpg}{\dplus{g}}
\newcommand{\dpM}{\dplus{M}}
\newcommand{\dpQ}{\dplus{Q}}
\newcommand{\dpf}{\dplus{f}}
\newcommand{\dpX}{\dplus{\calX}}
\newcommand{\dpw}{\dplus{w}}
\newcommand{\dpF}{\dplus{F}}
\newcommand{\dpu}{\dplus{u}}
\newcommand{\dpb}{\dplus{b}}
\newcommand{\dpe}{\dplus{e}}
\newcommand{\dpx}{\dplus{x}}
\newcommand{\dpy}{\dplus{y}}
\newcommand{\dpH}{\dplus{H}}
\newcommand{\dpell}{\dplus{\ell}}
\newcommand{\dpR}{\dplus{R}}
\newcommand{\dpxi}{\dplus{\xi}}
\newcommand{\dpI}{\dplus{I}}
\newcommand{\dph}{\dplus{h}}
\DeclareMathOperator*{\argmin}{argmin}
\newcommand{\field}[1]{\mathbb{#1}}
\newcommand{\E}{\field{E}}
\newcommand{\inner}[1]{ \left\langle {#1} \right\rangle }
\newcommand{\norm}[1]{\left\|{#1}\right\|}
\DeclarePairedDelimiter\abs{\lvert}{\rvert}
\newcommand{\wh}{\widehat}
\newcommand{\wt}{\widetilde}
\newcommand{\ind}{\mathbbm{1}}
\newcommand{\order}{\ensuremath{\mathcal{O}}}
\newcommand{\otil}{\ensuremath{\tilde{\mathcal{O}}}}
\newcommand{\pref}[1]{\prettyref{#1}}
\newcommand{\savehyperref}[2]{\texorpdfstring{\hyperref[#1]{#2}}{#2}}
\title[Adaptive Bandit Convex Optimization with Heterogeneous Curvature]{Adaptive Bandit Convex Optimization with Heterogeneous Curvature}
\begin{document}

\maketitle

\begin{abstract}%
We consider the problem of adversarial bandit convex optimization, that is, online learning over a sequence of arbitrary convex loss functions with only one function evaluation for each of them. While all previous works assume known and homogeneous curvature on these loss functions, we study a heterogeneous setting where each function has its own curvature that is only revealed after the learner makes a decision.
We develop an efficient algorithm that is able to adapt to the curvature on the fly. Specifically, our algorithm not only recovers or \emph{even improves} existing results for several homogeneous settings, but also leads to surprising results for some heterogeneous settings --- for example, while~\citet{conf/nips/HazanL14} showed that $\otil(d^{\nicefrac{3}{2}}\sqrt{T})$ regret is achievable for a sequence of $T$ smooth and strongly convex $d$-dimensional functions,
our algorithm reveals that the same is achievable even if $T^{\nicefrac{3}{4}}$ of them are not strongly convex, and sometimes even if a constant fraction of them are not strongly convex.
Our approach is inspired by the framework of~\citet{NIPS'07:AOGD} who studied a similar heterogeneous setting but with stronger gradient feedback.
Extending their framework to the bandit feedback setting requires novel ideas such as lifting the feasible domain and using a logarithmically homogeneous self-concordant barrier regularizer.
\end{abstract}

\section{Introduction}
\label{sec:intro}

We consider the problem of adversarial bandit convex optimization, formulated as the following sequential learning process of $T$ rounds. At the beginning, knowing the learner's algorithm, an adversary decides an arbitrary sequence of $T$ convex loss functions $f_1,\dots,f_T:\X\mapsto \R$ over some convex domain $\X \subset \R^d$. Then, at each round $t$, the learner is required to select a point $x_t \in \X$, and afterwards observes and suffers her loss $f_t(x_t)$.
The performance of the learner is measured by her regret, the difference between the her total suffered loss and that of the best fixed point in hindsight.

Without further assumption, the best existing result is from~\citep{JACM'21:kernel-BCO} which achieves $\otil(d^{10.5}\sqrt{T})$ regret with large computational complexity of $\otil(\poly(d)T)$ per round.
On the other hand, the current best lower bound is $\Omega(d\sqrt{T})$~\citep{dani2007price}, exhibiting a large gap in the $d$ dependency.
It has been shown that, however, curvature of the loss functions helps --- for example, when the functions are all smooth and strongly convex, \citet{conf/nips/HazanL14} develop a simple and efficient Follow-the-Regularized-Leader (FTRL) type algorithm with $\otil(d^{\nicefrac{3}{2}}\sqrt{T})$ regret;
even when the functions are only smooth, \citet{AISTATS'11:smooth-BCO} show that $\otil(dT^{\nicefrac{2}{3}})$ regret is achievable again via a simple and efficient FTRL variant, despite the suboptimal dependency in $T$.

However, all such existing results making use of curvature assume a homogeneous setting, that is, all loss functions share the same curvature parameters that are known ahead of time.
Ignoring the ubiquitous heterogeneity in online data is either unrealistic or forcing one to use a conservative curvature parameter (e.g., the smallest strong convexity parameter among all functions),
while intuitively, being able to exploit and adapt to the individual curvature of each loss function should result in much better performance.

\begin{table}[!t]
\centering
\caption{A summary of our results for bandit convex optimization over $T$ smooth $d$-dimensional functions, the $t$-th of which is $\sigma_t$-strongly convex.
$\calT \subset [T]$ is a subset of rounds with no strong convexity. 
The dependency on parameters other than $d$ and $T$ can be found in the respective corollary (see also \pref{fn:discontinuity}).
Note that our results are all achieved by one single adaptive algorithm.
}
\label{tab: res_smooth}
\renewcommand*{\arraystretch}{1.4}
\resizebox{\textwidth}{!}{
\begin{tabular}{|l|c|c|}
\hline
\multicolumn{1}{|c|}{\textbf{Strong Convexity} $\{\sigma_t\}_{t=1}^T$} &  \multicolumn{1}{c|}{\textbf{Previous Works}} & \multicolumn{1}{c|}{\textbf{Our Results} (\pref{alg:lift-smooth})} \\\hline
$\sigma_t=0$, $\forall t\in [T]$        & $\otil(dT^{\nicefrac{2}{3}})$ \citep{AISTATS'11:smooth-BCO}    & $\otil(d^{\nicefrac{2}{3}}T^{\nicefrac{2}{3}})$ (\pref{cor: only-smooth})    \\ \hline
$\sigma_t=\sigma>0$, $\forall t\in [T]$ & $\otil(d^{\nicefrac{3}{2}}\sqrt{T})$ \citep{conf/nips/HazanL14} & $\otil(d^{\nicefrac{3}{2}}\sqrt{T})$ (\pref{cor: strongly-convex-smooth}) \\ \hline
\makecell[l]{$\sigma_t= \sigma\ind\{t \notin \calT\}$, \\
$|\calT|=T^{\nicefrac{3}{4}}$ or $\calT = [\nicefrac{T}{2}, T]$}
&
   \multicolumn{1}{c|}{N/A} &
  $\otil(d^{\nicefrac{3}{2}}\sqrt{T})$ (\pref{cor: worst-best-case}) \\ \hline
$\sigma_t=t^{-\alpha}$,  $\forall t\in [T]$ &
  \multicolumn{1}{c|}{N/A} &
  $\begin{cases}\otil(d^{\nicefrac{3}{2}}T^{\nicefrac{(1+\alpha)}{2}}),&\alpha\in[0,\nicefrac{1}{3})\\\otil(d^{\nicefrac{2}{3}}T^{\nicefrac{2}{3}}),&\alpha\in[\nicefrac{1}{3},1]\end{cases}$ (\pref{cor: intermediate-case})\\ \hline
\end{tabular}}
\end{table}

Motivated by this fact and inspired by the work of~\citet{NIPS'07:AOGD} who consider a heterogeneous setting in Online Convex Optimization with stronger gradient feedback,
we study a similar setting where each $f_t$ has its own strong convexity $\sigma_t\geq 0$, revealed only at the end of round $t$ after the learner decides $x_t$.
We provide examples in \pref{sec: pre} to illustrate why this is a realistic setup even in the bandit setting where traditionally only $f_t(x_t)$ is revealed.
In this setting, we develop efficient algorithms that automatically adapt to the heterogeneous curvature and enjoy strong adaptive regret bounds.
These bounds not only recover or \emph{even improve} the existing results in the homogeneous setting, but also reveal interesting new findings in some hybrid scenarios.
More specifically, our results are as follows (for simplicity, only the dependency on $d$ and $T$ is shown; see respective sections for the complete bounds).

\begin{itemize}[leftmargin=*]
  \setlength\itemsep{0em}

\item We start with the case where all loss functions are smooth in \pref{sec:smooth-BCO}. 
Our algorithm achieves $\otil(d^{\nicefrac{2}{3}}T^{\nicefrac{2}{3}})$ regret if no functions are strongly convex (that is, $\sigma_t = 0$ for all $t$), \emph{improving} the $\otil(dT^{\nicefrac{2}{3}})$ bound of~\citep{AISTATS'11:smooth-BCO},
and $\otil(d^{\nicefrac{3}{2}}\sqrt{T})$ regret if all functions happen to be $\sigma$-strongly convex (that is, $\sigma_t = \sigma$ for all $t$), matching that from~\citep{conf/nips/HazanL14}.
In fact, our algorithm achieves the latter result \emph{even if $\Theta(T^{\nicefrac{3}{4}})$ of the functions have no strong convexity} (and sometimes even if \emph{a constant fraction of the functions} have no strong convexity).
More generally, our bound interpolates between these two extremes.
For example, if $\sigma_t$ is decaying at the rate of $\nicefrac{1}{t^\alpha}$ for some $\alpha \in [0,1]$, then the regret is $\otil(d^{\nicefrac{3}{2}}T^{\nicefrac{(1+\alpha)}{2}})$ when $\alpha \leq \nicefrac{1}{3}$, and $\otil(d^{\nicefrac{2}{3}}T^{\nicefrac{2}{3}})$ otherwise.\footnote{This is a simplified and loosen version of \pref{cor: intermediate-case}, which explains the discontinuity in $\alpha$. \label{fn:discontinuity}
}
See \pref{tab: res_smooth} for a summary.
We note that the improvement over~\citep{AISTATS'11:smooth-BCO} comes as a side product of the better regularization technique of our algorithm.

\item We then consider another scenario where all loss functions are Lipschitz in \pref{sec:lipschitz-BCO}.
We develop another algorithm that achieves $\otil(\sqrt{d}T^{\nicefrac{3}{4}})$ regret when no functions are strongly convex, improving the $\otil(d^{\nicefrac{3}{4}}T^{\nicefrac{3}{4}})$ bound of~\citep{lecture18},
and $\otil(d^{\nicefrac{2}{3}}T^{\nicefrac{2}{3}})$ regret when all functions are $\sigma$-strongly convex, improving the $\otil(d^{\nicefrac{4}{3}}T^{\nicefrac{2}{3}})$ bound of~\citep{COLT'10:agarwal_optimal}.
These improvements again come as a side product of our better regularization.
Similarly, the $\otil(d^{\nicefrac{2}{3}}T^{\nicefrac{2}{3}})$ result holds even if $\Theta(T^{\nicefrac{8}{9}})$ of the functions (or sometimes a constant fraction of them) have no strong convexity.
For similar intermediate bounds in the example when $\sigma_t = \nicefrac{1}{t^\alpha}$, 
see \pref{tab: res_lipschitz}.
\end{itemize}

\begin{table}[!t]
\centering
\caption{A summary of our results for bandit convex optimization over $T$ Lipschitz $d$-dimensional functions, the $t$-th of which is $\sigma_t$-strongly convex.
$\calT \subset [T]$ is a subset of rounds with no strong convexity. 
The dependency on parameters other than $d$ and $T$ can be found in the respective corollary.
Note that our results are all achieved by one single adaptive algorithm.
}
\label{tab: res_lipschitz}
\renewcommand*{\arraystretch}{1.4}
\resizebox{\textwidth}{!}{
\begin{tabular}{|l|c|c|}
\hline
\multicolumn{1}{|c|}{\textbf{Strong Convexity} $\{\sigma_t\}_{t=1}^T$} &  \multicolumn{1}{c|}{\textbf{Previous Works}} & \multicolumn{1}{c|}{\textbf{Our Results} (\pref{alg:lift-Lipschitz})} \\\hline
$\sigma_t=0$, $\forall t\in [T]$        & $\otil(d^{\nicefrac{3}{4}}T^{\nicefrac{3}{4}})$ \citep{lecture18}    & $\otil(\sqrt{d}T^{\nicefrac{3}{4}})$ (\pref{cor: only-Lipschitz})    \\ \hline
$\sigma_t=\sigma>0$, $\forall t\in [T]$ & $\otil(d^{\nicefrac{4}{3}}T^{\nicefrac{2}{3}})$ \citep{COLT'10:agarwal_optimal} & $\otil(d^{\nicefrac{2}{3}}T^{\nicefrac{2}{3}})$ (\pref{cor: only-Lipschitz-strongly-cvx}) \\ \hline
\makecell[l]{$\sigma_t= \sigma\ind\{t \notin \calT\}$, \\
$|\calT|=T^{\nicefrac{8}{9}}$ or $\calT = [\nicefrac{T}{2}, T]$} &
  \multicolumn{1}{c|}{N/A} &
  $\otil(d^{\nicefrac{2}{3}}T^{\nicefrac{2}{3}})$ (\pref{cor: only-lipschitz-worst-best-case}) \\ \hline
$\sigma_t=t^{-\alpha}$,  $\forall t\in [T]$ &
  \multicolumn{1}{c|}{N/A} &
  $\begin{cases}\otil(d^{\nicefrac{2}{3}}T^{\nicefrac{(2+\alpha)}{3}}),&\alpha\in[0,\nicefrac{1}{4})\\\otil(\sqrt{d}T^{\nicefrac{3}{4}}),&\alpha\in[\nicefrac{1}{4},1]\end{cases}$ (\pref{cor: only-lipschitz-intermediate-case})\\ \hline
\end{tabular}}
\end{table}

\paragraph{Techniques.} 
Our algorithm is also a variant of FTRL, with two crucial new ingredients to handle heterogeneous curvature.
First, we extend the idea of~\citep{NIPS'07:AOGD} to adaptively add $\ell_2$ regularization to the loss functions and adaptively tune the learning rate.
Doing so in the bandit setting is highly nontrivial and requires our second technical ingredient, which is to lift the problem to the $(d+1)$-dimensional space and then apply a \emph{logarithmically homogeneous self-concordant barrier} in the FTRL update. 
This technique is inspired by a recent work of~\citet{NIPS'20:unbiased-bandits} on 
achieving high-probability regret bounds for adversarial linear bandits,
but the extension from linear bandits to convex bandits is nontrivial.
In fact, the purpose of using this technique is also different: they need to bound the variance of the learner's loss, which is related to bounding $x^\top\nabla^2\psi(x)x$ for some regularizer $\psi$, while we need to bound the stability of the algorithm, which is related to bounding $\nabla\psi(x)\nabla^{-2}\psi(x) \nabla\psi(x)$, but it turns out that when $\psi$ is a logarithmically homogeneous $\nu$-self-concordant barrier, then these two quantities are \emph{exactly the same} and bounded by $\nu$.

\paragraph{Related work.} Bandit convex optimization has been extensively studied under different loss function structures, including Lipschitz functions~\citep{conf/nips/Kleinberg04,SODA'05:Flaxman-BCO}, linear functions~\citep{Competing:Dark, abernethy2012interior,bubeck2012towards}, smooth functions~\citep{AISTATS'11:smooth-BCO}, strongly convex functions~\citep{COLT'10:agarwal_optimal}, smooth and strongly convex functions~\citep{conf/nips/HazanL14,ito2020optimal},
quadratic functions~\citep{COLT'21:efficient-BCO}, pseudo-1-dimensional functions~\citep{ICML'21:BCO-1d}, and others.
Without any structure (other than convexity), a series of progress has been made over recent years~\citep{bubeck2015bandit, hazan2016optimal, COLT'16:multi-scale-BCO,JACM'21:kernel-BCO}, 
but as mentioned, even the best result~\citep{JACM'21:kernel-BCO} has a large dependency on $d$ in the regret and is achieved by an impractical algorithm with large computational complexity. 
Our comparisons in this work (such as those in \pref{tab: res_smooth} and \pref{tab: res_lipschitz}) thus mainly focus on more efficient and practical methods in the literature that share the same FTRL framework.\footnote{When the functions are smooth only, 
our comparison is based on~\citep{AISTATS'11:smooth-BCO}, instead of the seemingly better results of~\citep{dekel2015bandit, yang2016optimistic}, because the latter ones are unfortunately wrong as pointed out in~\citep{hu2016bandit}.
}

Closest to our heterogeneous setting is the work on Online Convex Optimization by~\citet{NIPS'07:AOGD}, where at the end of each round, $\sigma_t$ and $\nabla f_t(x_t)$ are revealed (versus $\sigma_t$ and $f_t(x_t)$ in our setting).
Due to the stronger feedback, their algorithm achieves $\order(\sqrt{T})$ regret without any strong convexity, $\order(\log T)$ regret if all functions are strongly convex, and generally something in between.
Our results are in the same vein, and as mentioned, our algorithm is also heavily inspired by theirs.

Another potential approach to adapting to different environments is to have a meta algorithm learning over a set of base algorithms, each dedicated to a specific environment.
Doing so in the bandit setting, however, is highly challenging~\citep{agarwal2017corralling} or even impossible sometimes~\citep{marinov2021pareto}.
For example, even if one only aims to adapt to two environments, one with only smooth functions and the other with smooth and strongly convex functions, the approach of~\citep{agarwal2017corralling} is only able to achieve $\otil(T^{\nicefrac{3}{4}})$ regret for the first environment if one insists to enjoy $\otil(\sqrt{T})$ regret in the second one.

\section{Preliminaries and Problem Setup}\label{sec: pre}

We start by reviewing some basic definitions.
\begin{definition}
\label{assumption:smoothness}
We say that a differentiable function $f: \X \mapsto \R$ is $\beta$-smooth over the feasible set $\X$ if for any $x, y \in \X$, $\norm{\nabla f(x)-\nabla f(y)}_2 \leq \beta \norm{x-y}_2$ holds.
\end{definition}

\begin{definition}
\label{assumption:lipschitz}
We say that a function $f: \X \mapsto \R$ is $L$-Lipschitz over the feasible set $\X$ if for any $x, y \in \X$, $\abs{f(x) - f(y)} \leq L \norm{x-y}_2$ holds.
\end{definition}

\begin{definition}
\label{assumption:strongly-convex}
We say that a differentiable function $f: \X \mapsto \R$ is $\sigma$-strongly convex over the feasible set $\X$ if for any $x, y \in \X$, $f(y)\geq f(x) + \nabla f(x)^\top(y-x) + \frac{\sigma}{2}\|x-y\|_2^2$ holds.
\end{definition}

\paragraph{Problem setup.}
Bandit Convex Optimization (BCO) can be modeled as a $T$-round games between a learner and an oblivious adversary.
Before the game starts, the adversary (knowing the learner's algorithm) secretly decides an arbitrary sequence of convex functions $f_1,\dots,f_T:\X\mapsto \R$, where $\calX \subset \R^d$ is a known compact convex domain. 
In \pref{sec:smooth-BCO} we assume that all $f_t$'s are $\beta$-smooth for some known parameter $\beta \geq 0$,
while in \pref{sec:lipschitz-BCO} we assume that they are all $L$-Lipschitz for some known parameter $L \geq 0$.
In both cases, we denote by $\sigma_t \geq 0$ the strong convexity parameter of $f_t$, initially \emph{unknown} to the learner (note that $\sigma_t$ could be zero).

At each round $t \in [T] \triangleq\{1, \ldots, T\}$ of the game, the learner chooses an action $x_t\in \X$ based on all previous observations, and subsequently suffers loss $f_t(x_t)$. 
The adversary then reveals both $f_t(x_t)$ and $\sigma_t$ to the learner.
Compared to previous works where $\sigma_t$ is the same for all $t$ and known to the learner ahead of time, our setting is clearly more suitable for applications with heterogeneous curvature.  
We provide such an example below, which also illustrates why it is reasonable for the learner to observe $\sigma_t$ at the end of round $t$.

\paragraph{Examples.}
Consider a problem where $f_t(x)=\sum_{i=1}^N g_{t,i}(c_{t,i}^\top x)$.
Here, $N$ is the number of users in each round, $c_{t,i}$ represents some context of the $i$-th user in round $t$, the learner's decision $x$ is used to make a linear prediction $c_{t,i}^\top x$ for this user, and her loss is evaluated via a convex function $g_{t,i}$ which incorporates some ground truth for this user (e.g., labels in the case of classification where $g_{t,i}$ could be the logistic loss, or responses in the case of regression where $g_{t,i}$ could be the squared loss).
Note that the (heterogeneous) strong convexity $\sigma_t$ of $f_t$ is at least $\mu \lambda_{\min}(\sum_{i=1}^N c_{t,i}c_{t,i}^\top)$ where $\lambda_{\min}(\cdot)$ denotes the minimum eigenvalue of a matrix, and $\mu$ is a lower bound on the second derivative of $g_{t,i}$, often known ahead of time assuming some natural boundedness of $x$ and $c_{t,i}$.

In this setup, there are several situations where our feedback model is reasonable.
For example, it might be the case that the loss $f_t(x_t)$ as well as the context $c_{t,i}$ is visible to the learner, but the ground truth (and thus $g_{t,i}$) is not.
In this case, based on the earlier bound on the strong convexity, the learner can calculate  $\sigma_t$ herself.
As another example, due to privacy consideration, the user's context $c_{t,i}$ might not be revealed to the learner, but it is acceptable to reveal a single number $\lambda_{\min}(\sum_{i=1}^N c_{t,i}c_{t,i}^\top)$ summarizing this batch of users. 
Clearly, $\sigma_t$ can also be calculated by the learner in this case.

\paragraph{Objective and simplifying assumptions.}
The objective of the learner is to minimize her (expected) \emph{regret}, defined as
\begin{equation}
  \label{eq:regret}
  \Reg = \mathbb{E} \left[\sum_{t=1}^{T} f_t(x_t)\right] - \min_{x\in\calX}\mathbb{E}\left[\sum_{t=1}^{T} f_t(x)\right],
\end{equation}
which is the difference between the expected loss suffered by the learner and that of the best fixed action (the expectation is with respect to the randomness of the learner). Without loss of generality, we assume $\max_{x\in \calX }|f_t(x)|\leq 1$ for all $t\in [T]$, $\calX$ contains the origin, and $\max_{x\in \calX}\|x\|_2=1$.\footnote{This is without loss of generality because for any problem with $|f_t(x)|\leq B$ and $\max_{x,x'\in \calX}\|x-x'\|_2=D$, we can solve it via solving a modified problem with convex domain $\calU = \{u = \frac{1}{D}(x-x_0): x \in \calX\}$ for any fixed $x_0 \in \calX$ and loss functions $g_t(u) = \frac{1}{B}f_t(Du+x_0)$, which then satisfies our simplifying assumptions $\max_{u\in \calU }|g_t(u)|\leq 1$, $\zero\in\calU$, and $\max_{u\in \calU}\|u\|_2=1$.
}

\paragraph{Notations.} We adopt the following notational convention throughout the paper. Generally, we use lowercase letters to denote vectors and capitalized letters to denote matrices. For a positive semi-definite matrix $M \in \R^{d \times d}$ and a vector $x \in \R^d$, $\|x\|_2$ represents the standard Euclidean norm of $x$ and $\norm{x}_M \triangleq \sqrt{x^\T M x}$ represents the Mahalanobis norm induced by $M$. $\mathbb{S}^d$ and $\mathbb{B}^d$ respectively denote the unit sphere and unit ball in $\R^d$, $\one$ and $\zero$ respectively denote the all-one and all-zero vectors with an appropriate dimension, and $I$ denotes the identity matrix with an appropriate dimension. 
For a differentiable convex function $\psi$, define the corresponding Bregman divergence as $D_{\psi}(u,v)=\psi(u)-\psi(v)-\inner{\nabla\psi(v), u-v}$. For a vector $x \in \R^d$, we use $x_{[i:j]}\in \R^{j-i+1}$ to denote the induced truncated vector consisting of the $i$-th to $j$-th coordinates of $x$. For a sequence of scalars $a_1,\ldots,a_t$, we use $a_{i:j} \triangleq \sum_{k=i}^j a_k$ to denote the cumulative summation, and $\{a_s\}_{s=p}^q$ to denote the subsequence $a_p,a_{p+1},\ldots,a_q$. The $\tilde{\O}(\cdot)$ notation omits the logarithmic dependence on the horizon $T$.\footnote{In the texts, for simplicity $\otil(\cdot)$ might also hide dependency on other parameters such as the dimension $d$. However, in all formal theorem/lemma statements, this will not be the case.} $\E_t[\cdot]$ is a shorthand for the conditional expectation given the history before round $t$. 
\section{Smooth Bandit Convex Optimization with Heterogeneous Strong Convexity}\label{sec:smooth-BCO}
Throughout this section, we assume that all loss functions $f_1, \ldots, f_T$ are $\beta$-smooth (see~\pref{assumption:smoothness}).
To present our adaptive algorithm in this case,
we start by reviewing two important existing algorithms upon which ours is built.

\paragraph{Review of Adaptive Online Gradient Descent (AOGD).} 
As mentioned, the AOGD algorithm of~\citep{NIPS'07:AOGD} is designed for a similar heterogeneous setting but with the stronger gradient feedback.
The first key idea of AOGD is that, instead of learning over the original loss functions $\{f_t\}_{t=1}^T$, one should learn over their $\ell_2$-regularized version $\{\wt{f}_t\}_{t=1}^T$ where $\wt{f}_t(x) = f_t(x)  + \frac{\lambda_t}{2} \norm{x}_2^2$ for some coefficient $\lambda_t$.
Intuitively, $\lambda_t$ is large when previous loss functions exhibit not enough strong convexity to stabilize the algorithm, and small otherwise.
How to exactly tune $\lambda_t$ based on the observed $\{\sigma_s\}_{s=1}^{t}$ is their second key idea --- they show that $\lambda_t$ should balance two terms and satisfy
\begin{equation}\label{eq:AOGD_tuning}
\frac{3}{2}\lambda_t = \frac{1}{\sigma_{1:t} + \lambda_{1:t}},
\end{equation}
which results in a quadratic equation of $\lambda_t$ and can be solved in closed form.
The final component of AOGD is simply to run gradient descent on $\{\wt{f}_t\}_{t=1}^T$ with some adaptively decreasing learning rates.

\paragraph{Review of BCO with smoothness and strong convexity.}
\citet{conf/nips/HazanL14} consider the BCO problem with $\beta$-smooth and $\sigma$-strongly convex loss functions.
Their FTRL-based algorithm maintains an auxiliary sequence $y_1, \ldots, y_T \in \X$ via
\begin{equation}
    \label{eq:FTRL-strongly-convex}
    y_{t+1} = \argmin_{x \in \X} \Bigg\{\sum_{s=1}^t \Big(\inner{g_s, x} + \frac{\sigma}{2} \norm{x - y_{s}}_2^2\Big) + \frac{1}{\eta}\psi(x)\Bigg\},
\end{equation}
where $g_s$ is some estimator of $\nabla f_s(y_s)$, $\eta > 0$ is some fixed learning rate, and the regularizer $\psi$ is a $\nu$-self-concordant barrier whose usage in BCO is pioneered by~\citet{Competing:Dark} (see \pref{appendix: self-concordant} for definition).
The rational behind the squared distance terms in this update is that, due to strong convexity, we have $f_s(y_s) - f_s(x) \leq \wt{g}_s(y_s) - \wt{g}_s(x)$ for any $x$ and $\wt{g}_s(x) = \nabla f_s(y_s)^\top x + \frac{\sigma}{2} \norm{x - y_{s}}_2^2$,
meaning that it suffices to consider $\wt{g}_1, \ldots, \wt{g}_T$ as the loss functions.
Having $y_t$, the algorithm makes the final prediction $x_t$ by adding certain curvature-adaptive and shrinking exploration to $y_t$: $x_t = y_t + H_t^{-\nicefrac{1}{2}} u_t$, where $H_t = \nabla^2 \psi(y_t) +\eta\sigma tI $ and $u_t$ is chosen from the unit sphere $\mathbb{S}^d$ uniformly at random ($x_t \in \calX$ is guaranteed by the property of self-concordant barriers).
Finally, with the feedback $f_t(x_t)$, the gradient estimator is constructed as $g_t=d\cdot f_t(x_t)H_t^{\nicefrac{1}{2}}u_t$,
which can be shown to be an unbiased and low-variance estimator of the gradient of some smoothed version of $f_t$ at $y_t$.

\subsection{Proposed Algorithm and Main Theorem}
We are now ready to describe our algorithm. 
Following~\citep{NIPS'07:AOGD}, our first step is also to consider learning over the $\ell_2$-regularized loss functions: $\wt{f}_t(x)=f_t(x)+\frac{\lambda_t}{2}\|x\|_2^2$ with an adaptively chosen $\lambda_t>0$ (note that with the bandit feedback $f_t(x_t)$, we can also evaluate $\wt{f}_t(x_t)$).
While~\citet{NIPS'07:AOGD} apply gradient descent, the standard and optimal algorithm for strongly-convex losses with gradient feedback, 
here we naturally apply the algorithm of~\citep{conf/nips/HazanL14} to this sequence of regularized loss functions instead.
Since $\wt{f}_t$ is $(\sigma_t + \lambda_t)$-strongly convex,
following \pref{eq:FTRL-strongly-convex} and adopting a decreasing learning rate $\eta_t$
shows that we should maintain the auxiliary sequence $y_1, \ldots, y_T$ according to
\begin{equation}
    \label{eq:FTRL-no-lifting}
    y_{t+1} = \argmin_{x \in \X} \Bigg\{\sum_{s=1}^t \Big(\inner{g_s, x} + \frac{\sigma_s+\lambda_s}{2} \norm{x - y_{s}}_2^2\Big) + \frac{1}{\eta_{t+1}}\psi(x)\Bigg\},
\end{equation}
where similarly $g_t=d\cdot \wt{f}_t(x_t)H_t^{\nicefrac{1}{2}}u_t$ for 
$x_t = y_t + H_t^{-\nicefrac{1}{2}} u_t$,  $H_t = \nabla^2 \psi(y_t) +\eta_t\left(\sigma_{1:t-1}+\lambda_{1:t-1}\right)\dpI$, and $u_t$ chosen randomly from the unit sphere.

While this forms a natural and basic framework of our algorithm, there is in fact a critical issue when analyzing such a barrier-regularized FTRL algorithm due to the decreasing learning rate, which was never encountered in the literature as far as we know since all related works using this framework adopt a fixed learning rate (see e.g.~\citep{Competing:Dark, AISTATS'11:smooth-BCO, hazan2011better, rakhlin2013online, conf/nips/HazanL14, bubeck2019improved}) or an increasing learning rate~\citep{NIPS'20:unbiased-bandits}.
More specifically, to bound the stability $\inner{y_t-y_{t+1}, g_t}$ of the algorithm, all analysis for barrier-regularized FTRL implicitly or explicitly requires bounding the \emph{Newton decrement}, which in our context is $\|\nabla G_t(y_t)\|_{\nabla^{-2} G_t(y_t)}^2$ for $G_t$ being the objective function in the FTRL update~\pref{eq:FTRL-no-lifting}.
To simplify this term, note that since $\psi$ is a barrier and $y_t$ minimizes $G_{t-1}$, we have $\nabla G_{t-1}(y_t) = 0 = \sum_{s=1}^{t-1} \big(g_s + (\sigma_s+\lambda_s)(y_t - y_s) \big) + \frac{1}{\eta_{t}}\nabla\psi(y_t)$.
Further combining this with $\nabla G_{t}(y_t) = \sum_{s=1}^{t} g_s + \sum_{s=1}^{t-1} (\sigma_s+\lambda_s)(y_t - y_s) + \frac{1}{\eta_{t+1}}\nabla\psi(y_t)$ shows $\nabla G_t(y_t)=g_t+(\frac{1}{\eta_{t+1}}-\frac{1}{\eta_{t}})\nabla\psi(y_t)$.
Now, if $\eta_t$ is fixed for all $t$, then the Newton decrement simply becomes $\|g_t\|_{\nabla^{-2}G_t(y_t)}^2 \preceq \|g_t\|_{\nabla^{-2}G_{t-1}(y_t)}^2$, which by the definition of $g_t$ is directly bounded by $\eta_t d^2$.
However, with decreasing learning rates, the extra term contributes to a term of order $(\frac{1}{\eta_{t+1}}-\frac{1}{\eta_{t}})^2\|\nabla\psi(y_t)\|_{\nabla^{-2}G_t(y_t)}^2$, which could be prohibitively large unfortunately.\footnote{Instead of FTRL, one might wonder if using the highly related Online Mirror Descent framework could solve the issue caused by decreasing learning rates.
We point out that while this indeed addresses the issue for bounding the stability term,
it on the other hand introduces a similar issue for the regularizaton penalty term $\sum_{t=2}^T(\frac{1}{\eta_{t+1}}-\frac{1}{\eta_t})D_{\psi}(x,y_t)$.
}

To resolve this issue, our key observation is: $\|\nabla\psi(y_t)\|_{\nabla^{-2}G_t(y_t)}^2 \preceq \eta_{t+1}\|\nabla\psi(y_t)\|_{\nabla^{-2}\psi(y_t)}^2$,
and $\|\nabla\psi(y_t)\|_{\nabla^{-2}\psi(y_t)}^2$ is always bounded by $\nu$ as long as the self-concordant barrier $\psi$ is also \emph{logarithmically homogeneous} (see \pref{appendix: self-concordant} for definition and \pref{lemma:normal-barrier} for this property). 
A logarithmically homogeneous self-concordant barrier is also called a \emph{normal barrier} for short,
and it is only defined for a cone (recall that our feasible set $\calX$, on the other hand, is always bounded, thus not a cone).
Fortunately, this issue has been addressed in a recent work by~\citep{NIPS'20:unbiased-bandits} on achieving high probability regret bounds for adversarial linear bandits.
Their motivation is different, that is, to bound the variance of the learner's loss, related to $\|y_t\|_{\nabla^{2}\psi(y_t)}^2$ using our notation, and this turns out to be bounded when $\psi$ is a normal barrier --- in fact, $\|y_t\|_{\nabla^{2}\psi(y_t)}^2$ and $\|\nabla\psi(y_t)\|_{\nabla^{-2}\psi(y_t)}^2$ are exactly the same in this case!
Their solution regarding $\calX$ not being a cone is to first lift it to $\R^{d+1}$ and find a normal barrier of the conic hull of this lifted domain (which always exists),
then perform FTRL over the lifted domain with this normal barrier regularizer.
We extend their idea from linear bandits to convex bandits, formally described below (see also \pref{alg:lift-smooth} for the pseudocode).

\paragraph{Lifted domain and normal barrier.}  
To make the dimension of a vector/matrix self-evident, we use bold letters to represent vectors in $\mathbb{R}^{d+1}$ and matrices in $\mathbb{R}^{(d+1)\times (d+1)}$. Define the lifted domain as $\bm{\calX}=\{\dpx=(x,1) \mid x\in \calX\}\subseteq \mathbb{R}^{d+1}$, which simply appends an additional coordinate with constant value $1$ to all points in $\calX$. Define the conic hull of this set as $\calK=\{(x,b) \mid x\in \mathbb{R}^d, b\geq 0, \frac{1}{b}x\in \calX\}$. 
Our algorithm requires using a normal barrier over $\calK$ as a regularizer (which always exists).
While any such normal barrier works, we simply use a canonical one constructed from a $\nu$-self concordant barrier $\psi$ of $\calX$, defined via $\Psi(\dpx)=\Psi(x,b)=400\psi(\nicefrac{x}{b})-2\nu \ln b$ and proven to be a $\Theta(\nu)$-normal barrier over $\calK$ in~\citep[Proposition 5.14]{nesterov1994-IPM}.
This also shows that our algorithm requires no more than that of~\citep{conf/nips/HazanL14}.

\pref{alg:lift-smooth} then performs FTRL in the lifted domain using $\Psi$ as the regularizer to maintain an auxiliary sequence $\dpy_1, \ldots, \dpy_T$; see \pref{line: FTRL update lifted}.
This follows the earlier update rule in ~\pref{eq:FTRL-no-lifting}, except for an additional $\ell_2$ regularization term $\frac{\lambda_0}{2} \norm{\dpx}_2^2$ added for technical reasons.
With $\dpy_t$ at hand, we compute Hessian matrix $\bm{H}_t=\nabla^2{\Psi}(\dplus{y}_t)+\eta_t\left(\sigma_{1:t-1}+\lambda_{0:t-1}\right)\dpI$ similarly as before (\pref{line: hessian calc}).
What is slightly different now is the exploration (\pref{line: final decision}): we sample $\dpu_t$ uniformly at random from the set $\mathbb{S}^{d+1}\cap \big(\dplus{H}_t^{-\frac{1}{2}}\dpe_{d+1}\big)^{\perp}$ where $\dplus{w}^\perp$ denotes the space orthogonal to $\dplus{w}$,
and then obtain a point $\dpx_t= \dpy_t+\dplus{H}_t^{-\frac{1}{2}}\dpu_t$.
It can been shown that $\dpx_t$ is always on the intersection of the lifted domain $\dpX$ and the surface of some ellipsoid centered at $\dpy_t$ --- we refer the reader to~\citep[Figure 1]{NIPS'20:unbiased-bandits} for a pictorial illustration and their description for how to sample $\dpu_t$ efficiently.
Since $\dpx_t \in \bm{\calX}$, it is in the form of $(x_t, 1)$, where $x_t$ will be the final decision of the algorithm.

Upon receiving $f_t(x_t)$ and $\sigma_t$, \pref{alg:lift-smooth} computes the $\ell_2$ regularization coefficient $\lambda_t$ in some way (to be discussed soon), gradient estimator $\dpg_t$ as in earlier discussion (\pref{line: gradient calc}), learning rate $\eta_{t+1}$ as in \pref{line: adaptive learning rate}, and finally $\dpy_{t+1}$ via the aforementioned FTRL.

\begin{algorithm}[!t]
\caption{Adaptive Smooth BCO with Heterogeneous Strong Convexity}
\label{alg:lift-smooth}
\textbf{Input:} smoothness parameter $\beta$ and a $\nu$-self-concordant barrier $\psi$ for the feasible domain $\calX$.

\textbf{Define:} lifted feasible set $\bm{\calX}=\{\bm{x}=(x,1) \mid x\in \calX\}$.

\textbf{Define:} $\Psi$ is a normal barrier of the conic hull of $\dpX$: $\Psi(\x) = \Psi(x,b) = 400(\psi(x/b) - 2\nu \ln b)$. 

\textbf{Define:} $\rho=512\nu(1+32\sqrt{\nu})^2$.

\textbf{Initialize:} $\lambda_0= \max\left\{(\beta+1)\rho\nu^{-1}, d^2(\beta+1)\right\}$ and $\eta_1=\frac{1}{2d}\sqrt{\frac{\beta+1}{\lambda_0}+\frac{\nu}{T\log T}}$.

\textbf{Initialize:} $\dpy_1 =(y_1,1) = \argmin_{\dpx \in \bm{\X}} \Psi(\dpx)$.

\For{$t=1,2,\dots, T$}{
    
    \nl Compute $\bm{H}_t=\nabla^2{\Psi}(\dplus{y}_t)+\eta_t\left(\sigma_{1:t-1}+\lambda_{0:t-1}\right)\dpI$. \label{line: hessian calc}
    
    \nl Draw $\dplus{u}_t$ uniformly at random from $\mathbb{S}^{d+1}\cap \big(\dplus{H}_t^{-\frac{1}{2}}\dpe_{d+1}\big)^{\perp}$. 
    \LineComment{$\dplus{w}^\perp$: space orthogonal to $\dplus{w}$}  \label{line: exploration}
    
    \nl Compute $\dpx_t=(x_t,1) = \dpy_t+\dplus{H}_t^{-\frac{1}{2}}\dpu_t$, 
    play the point $x_t$, and observe $f_t(x_t)$ and $\sigma_t$. \label{line: final decision}

    \nl Compute regularization coefficient $\lambda_t \in (0,1)$. \LineComment{See \pref{eqn:oracle-tuning-strongly-main} and related discussions}
    
    \nl Compute gradient estimator $\dpg_t=d\left(f_t(x_t)+\frac{\lambda_t}{2}\|x_t\|_2^2\right)\dplus{H}_t^{\frac{1}{2}}\dpu_t$. \label{line: gradient calc}
    
    \nl Compute learning rate $\eta_{t+1} = \frac{1}{2d}\sqrt{\frac{\beta+1}{\sigma_{1:t}+\lambda_{0:t}}+\frac{\nu}{T\log T}}$. \label{line: adaptive learning rate}
    
    \nl Update $\dpy_{t+1}=\argmin_{\dpx \in \dplus{\calX}}\left\{\sum_{s=1}^t \left(\dpg_{s}^\top \dpx + \frac{\sigma_{s}+\lambda_s}{2}\|\dpx-\dpy_{s}\|_2^2\right) + \frac{\lambda_0}{2} \norm{\dpx}_2^2 + \frac{1}{\eta_{t+1}}\Psi(\dpx)\right\}$. \label{line: FTRL update lifted}
}
\end{algorithm}

\paragraph{Guarantees and $\ell_2$ regularization coefficient tuning.}
We now present some guarantees of our algorithm that hold regardless of the tuning of $\{\lambda_t\}_{t=1}^T$.
First, we show that except for the last coordinate, $\dpg_t$ is an unbiased estimator of a smoothed version of $\wt{\dpf}_t(\dpy_t)$, where $\wt{\dpf}_t(\dpy) = f_t(\dpy_{[1:d]}) + \frac{\lambda_t}{2}\|\dpy_{[1:d]}\|_2^2$.
This is a non-trivial generalization of~\citep[Lemma~B.9]{NIPS'20:unbiased-bandits} from linear functions to convex functions. See \pref{appendix:proof-lemma-unbiased} for the proof.
\begin{lemma}
\label{lem: unbiased}
For each $t\in [T]$, we have $\mathbb{E}_t\left[\dpg_{t,i}\right]=\nabla \wh{\dpf}_t(\dpy_t)_{[i]}$ for all $i\in [d]$, where $\wh{\dpf}_t$ is the smoothed version of $\wt{\dpf}_t$ defined as
    $\wh{\dpf}_t(\dpx) \triangleq \mathbb{E}_{\dpb}\big[\wt{\dpf}_t(\dpx+\dpH_t^{-\nicefrac{1}{2}}\dpb)\big]$, 
where $\dpb$ is uniformly sampled from $\mathbb{B}^{d+1}\cap (\dpH_t^{-\frac{1}{2}}\dpe_{d+1})^\perp$.
\end{lemma}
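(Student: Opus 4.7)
The plan is to reduce the $(d{+}1)$-dimensional sampling to a $d$-dimensional Flaxman-type identity and then carefully track how the resulting gradient maps back through the ellipsoidal transformation. Throughout, fix $t$ and write $\dpw = \dpH_t^{-\nicefrac{1}{2}}\dpe_{d+1}$, so that $\dpu_t$ is uniform on $\mathbb{S}^{d+1}\cap \dpw^{\perp}$ and $\dpb$ is uniform on $\mathbb{B}^{d+1}\cap \dpw^{\perp}$, a $d$-dimensional disk/sphere pair living in the same hyperplane.

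First I would pick any matrix $V \in \R^{(d+1)\times d}$ whose columns form an orthonormal basis of $\dpw^{\perp}$, so that $\dpu_t = V u$ with $u$ uniform on $\mathbb{S}^{d-1}\subset \R^d$ and $\dpb = V b$ with $b$ uniform on $\mathbb{B}^{d}\subset \R^d$. Define the one-dimensional reparametrization $h(v) \triangleq \wt{\dpf}_t\bigl(\dpy_t+\dpH_t^{-\nicefrac{1}{2}}Vv\bigr)$ for $v\in \R^d$, and let $h_{\mathrm{avg}}(v) \triangleq \E_{b\in \mathbb{B}^d}[h(v+b)]$. By the definition of $\wh{\dpf}_t$ we have $\wh{\dpf}_t(\dpy_t) = h_{\mathrm{avg}}(0)$, and by the chain rule
\begin{equation*}
\nabla h_{\mathrm{avg}}(0) = V^\T \dpH_t^{-\nicefrac{1}{2}}\, \nabla \wh{\dpf}_t(\dpy_t).
\end{equation*}

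Next, I would apply the classical spherical-sampling identity (divergence-theorem / Flaxman trick) to the $d$-dimensional function $h$: $\nabla h_{\mathrm{avg}}(0) = d\cdot \E_{u\in \mathbb{S}^{d-1}}[h(u)\, u]$. Substituting the definitions of $h$, $\dpu_t$, and $\dpg_t$, this is equivalent to
\begin{equation*}
\E_t[\dpg_t] \;=\; d\,\E\!\left[\wt{\dpf}_t(\dpx_t)\, \dpH_t^{\nicefrac{1}{2}}\dpu_t\right] \;=\; \dpH_t^{\nicefrac{1}{2}} V \nabla h_{\mathrm{avg}}(0) \;=\; \dpH_t^{\nicefrac{1}{2}}\, V V^\T\, \dpH_t^{-\nicefrac{1}{2}}\, \nabla \wh{\dpf}_t(\dpy_t).
\end{equation*}

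Finally, I would simplify $\dpH_t^{\nicefrac{1}{2}}VV^\T \dpH_t^{-\nicefrac{1}{2}}$. Since $VV^\T = I - \dpw\dpw^\T/\|\dpw\|_2^2$ is the orthogonal projection onto $\dpw^{\perp}$, and since $\dpH_t^{\nicefrac{1}{2}}\dpw = \dpe_{d+1}$ while $\dpw^\T \dpH_t^{-\nicefrac{1}{2}} = \dpe_{d+1}^\T \dpH_t^{-1}$, one obtains
\begin{equation*}
\dpH_t^{\nicefrac{1}{2}}VV^\T \dpH_t^{-\nicefrac{1}{2}} \;=\; I - \frac{\dpe_{d+1}\dpe_{d+1}^\T \dpH_t^{-1}}{\|\dpw\|_2^2}.
\end{equation*}
The correction term is supported only on the last row, so its action on any vector affects only the $(d{+}1)$-th coordinate of the output. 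Therefore, projecting onto coordinates $i \in [d]$, the correction vanishes and $\E_t[\dpg_{t,i}] = \nabla \wh{\dpf}_t(\dpy_t)_{[i]}$, as desired.

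The main obstacle I anticipate is bookkeeping in the reduction step: one must verify that the uniform law on $\mathbb{S}^{d+1}\cap \dpw^{\perp}$ (resp.\ $\mathbb{B}^{d+1}\cap \dpw^{\perp}$) really pushes forward through $V^\T$ to a uniform law on $\mathbb{S}^{d-1}$ (resp.\ $\mathbb{B}^d$)—this follows from rotational invariance because $V$ is an isometric embedding, but it is the conceptual heart of the argument, and it is exactly what lets us invoke Flaxman's identity without a spurious dimensional factor. The convexity (as opposed to linearity) of $f_t$ is irrelevant: the identity is a purely integral-geometric statement about $\wt{\dpf}_t$ composed with an affine map, which generalizes the linear-bandit version of \citep[Lemma~B.9]{NIPS'20:unbiased-bandits}.
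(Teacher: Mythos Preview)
Your proof is correct and follows essentially the same approach as the paper: reduce to a $d$-dimensional Flaxman identity via an isometric embedding $V$ of $\dpw^\perp$ (the paper's $\dpM^\top$), pull back through the chain rule, and observe that the resulting projector $I-\dpe_{d+1}\dpe_{d+1}^\top\dpH_t^{-1}/\|\dpw\|_2^2$ leaves the first $d$ coordinates untouched. Your version is slightly more streamlined because you fold the basepoint $\dpy_t$ directly into the definition of $h$, whereas the paper centers $J_t$ at the origin and then has to verify separately that evaluating at $\dpM\dpH_t^{\nicefrac{1}{2}}(y_t,0)$ recovers $\dpy_t$; also note the paper's convention is $\mathbb{S}^d\subset\R^d$, so your $\mathbb{S}^{d-1}$ is its $\mathbb{S}^d$.
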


Thanks to the unbiasedness of the gradient estimators and the crucial properties of normal barrier, we prove the following regret guarantee of \pref{alg:lift-smooth} in~\pref{appendix:proof-smooth-arbitrary}.
\begin{lemma}
\label{lemma:main-result-smooth-arbitrary}
With any regularization coefficients $\{\lambda_t\}_{t=1}^T \in (0,1)$,
\pref{alg:lift-smooth} guarantees:
\begin{equation}
\label{eqn: strongly-theorem-arbitrary}
    \Reg = \Ot\left(d \sqrt{\nu T}+ \lambda_{1:T-1} + \sum_{t=1}^{T-1}\frac{d\sqrt{\beta+1}}{\sqrt{\sigma_{1:t} + \lambda_{0:t}}}\right),
\end{equation}
if loss functions $\{f_t\}_{t=1}^T$ are all $\beta$-smooth and $T\geq \rho$ (a constant defined in \pref{alg:lift-smooth}).
\end{lemma}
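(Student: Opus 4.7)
The plan is to decompose the regret into three parts and bound each: (i) the $\ell_2$-regularization bias, which is $O(\lambda_{1:T})$ since $\|x\|_2 \leq 1$; (ii) the sampling deviation from playing $x_t$ versus the auxiliary point $\dpy_t$, controlled by smoothness of $\wt{f}_t$ together with $\E_t[\dpx_t] = \dpy_t$; and (iii) the FTRL regret on a surrogate strongly-convex loss sequence in the lifted domain, which I attack with a careful stability analysis that exploits the logarithmic homogeneity of $\Psi$. I expect the FTRL stability term under the \emph{decreasing} learning rate to be the main obstacle, for exactly the reason pointed out in the discussion preceding the lemma: a naive application of the standard barrier-regularized FTRL analysis picks up a $(1/\eta_{t+1}-1/\eta_t)^2\|\nabla\Psi(\dpy_t)\|_{\nabla^{-2}\Psi(\dpy_t)}^2$ term that must be controlled.

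For parts (i) and (ii), fix any comparator $u^* \in \calX$ with lifted form $\dpu^* = (u^*,1)$. Using $\|x\|_2,\|u^*\|_2 \leq 1$ and $\wt{f}_t(x)=f_t(x)+\frac{\lambda_t}{2}\|x\|_2^2$, I get $\Reg \leq \sum_t \E[\wt{f}_t(x_t) - \wt{f}_t(u^*)] + O(\lambda_{1:T})$. Since $\dpu_t$ is uniform on the sphere $\mathbb{S}^{d+1}\cap(\dpH_t^{-1/2}\dpe_{d+1})^\perp$ and hence symmetric, $\E_t[\dpx_t] = \dpy_t$, so $(\beta+\lambda_t)$-smoothness of $\wt{\dpf}_t$ gives $\E_t[\wt{\dpf}_t(\dpx_t)] - \wt{\dpf}_t(\dpy_t) \leq \frac{\beta+\lambda_t}{2}\E_t[\dpu_t^\top \dpH_t^{-1}\dpu_t]$, and because $\dpH_t \succeq \eta_t(\sigma_{1:t-1}+\lambda_{0:t-1})\dpI$ this is $O\bigl(\frac{\beta+1}{\eta_t(\sigma_{1:t-1}+\lambda_{0:t-1})}\bigr)$. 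The smoothing bias $|\wt{\dpf}_t(\dpy_t) - \wh{\dpf}_t(\dpy_t)|$ is of the same order by the standard smoothing lemma. Finally, using that $\dpg_t$ is unbiased for $\nabla \wh{\dpf}_t(\dpy_t)$ in its first $d$ coordinates (\pref{lem: unbiased}) while the last coordinate of any $\dpx\in\dpX$ equals $1$ and therefore cancels in any $\dpg_t^\top(\dpx - \dpx')$ with $\dpx,\dpx'\in\dpX$, together with $(\sigma_t+\lambda_t)$-strong convexity of $\wh{\dpf}_t$, the task reduces to bounding the FTRL regret on the surrogate losses $\wt{g}_t(\dpx) = \dpg_t^\top \dpx + \frac{\sigma_t+\lambda_t}{2}\|\dpx-\dpy_t\|_2^2$ with regularizer $\frac{1}{\eta_{t+1}}\Psi + \frac{\lambda_0}{2}\|\cdot\|_2^2$, which is exactly what \pref{line: FTRL update lifted} runs.

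For part (iii), a standard be-the-leader-style telescoping bounds the FTRL regret by $\sum_t[\wt{g}_t(\dpy_t) - \wt{g}_t(\dpy_{t+1})]$ plus an initialization term of the form $\Psi(\dpu^*)/\eta_{T+1}$. Slightly shrinking $\dpu^*$ toward the analytic center so that $\Psi(\dpu^*) = \Ot(\nu)$ (a standard self-concordant-barrier trick that only loses an additive $O(1)$ in regret) and using the floor $\frac{\nu}{T\log T}$ built into the learning rate to ensure $1/\eta_{T+1} = \Ot(d\sqrt{T/\nu})$, this initialization contributes $\Ot(d\sqrt{\nu T})$. The main obstacle is the per-round stability. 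Substituting $\nabla G_t(\dpy_t) = \dpg_t + (1/\eta_{t+1} - 1/\eta_t)\nabla\Psi(\dpy_t)$ from the pre-lemma identity into a one-step Taylor expansion of $\wt{g}_t$ around $\dpy_t$ (justified by self-concordance of $\Psi$), the per-round stability is bounded up to constants by $\eta_{t+1}\|\dpg_t\|_{\nabla^{-2}\Psi(\dpy_t)}^2 + \eta_{t+1}\bigl(\tfrac{1}{\eta_{t+1}}-\tfrac{1}{\eta_t}\bigr)^2 \|\nabla\Psi(\dpy_t)\|_{\nabla^{-2}\Psi(\dpy_t)}^2$. The first term is the usual $O(\eta_{t+1}d^2)$, which under the chosen tuning sums to $\Ot\bigl(\sum_t d\sqrt{\beta+1}/\sqrt{\sigma_{1:t}+\lambda_{0:t}}\bigr)+\Ot(d\sqrt{\nu T})$. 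The second is where logarithmic homogeneity is essential: \pref{lemma:normal-barrier} gives $\|\nabla\Psi(\dpy_t)\|_{\nabla^{-2}\Psi(\dpy_t)}^2 = \Theta(\nu)$, and combined with $(1/\eta_{t+1}-1/\eta_t)^2 \leq 1/\eta_{t+1}^2 - 1/\eta_t^2$ the sum telescopes to $\Ot(d\sqrt{\nu T})$. Collecting the $O(\lambda_{1:T-1})$ bias, the $\Ot\bigl(\sum_t d\sqrt{\beta+1}/\sqrt{\sigma_{1:t}+\lambda_{0:t}}\bigr)$ from the sampling/smoothing/stability, and the $\Ot(d\sqrt{\nu T})$ from the initialization and second stability term yields the claimed bound.
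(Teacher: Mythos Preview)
Your overall decomposition matches the paper's: regularization bias, smoothing/exploration bias via $(\beta+\lambda_t)$-smoothness, passage to the surrogate losses $\wt{g}_t$ using \pref{lem: unbiased} and strong convexity, the shrunk comparator, and the penalty term $\Psi(\dpu^*)/\eta_{T+1}=\Ot(d\sqrt{\nu T})$ via the $\nu/(T\log T)$ floor in $\eta_t$. That part is fine.

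The gap is in the stability step. Two concrete problems. First, your claim that $\eta_{t+1}\|\dpg_t\|_{\nabla^{-2}\Psi(\dpy_t)}^2=O(\eta_{t+1}d^2)$ is false: the estimator is built with $\dpH_t^{1/2}$, so only $\|\dpg_t\|_{\dpH_t^{-1}}^2\le 4d^2$, and since $\dpH_t\succeq\nabla^2\Psi(\dpy_t)$ the inequality $\nabla^{-2}\Psi(\dpy_t)\succeq\dpH_t^{-1}$ goes the wrong way. The right bound uses $\nabla^2 G_t\succeq\frac{1}{\eta_t}\dpH_t$, giving $\|\dpg_t\|_{\nabla^{-2}G_t}^2\le 4\eta_t d^2$. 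Second and more serious, your ``one-step Taylor expansion \ldots justified by self-concordance'' implicitly assumes the Newton decrement $\|\nabla G_t(\dpy_t)\|_{\nabla^{-2}G_t(\dpy_t)}<1$ (equivalently, that $\dpy_{t+1}$ lies in the Dikin ellipsoid at $\dpy_t$). Under the algorithm's $\eta_t$ this is \emph{not} guaranteed: for instance with $\nu=O(1)$ and large $d$ one has $\eta_t d^2=\Theta(d)$, and the cross term contributes $\eta_{t+1}(1/\eta_{t+1}-1/\eta_t)^2\bar\nu$ which is also not $o(1)$ in general. So the self-concordance shift you invoke is not justified.

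The paper closes this gap with a separate stability lemma (\pref{lem: stablity 1/2}) proved by a different mechanism: it fixes $\dph$ with $\|\dph\|_{\dpH_t}=\tfrac12$ and shows $F_{t+1}(\dpy_t+\dph)\ge F_{t+1}(\dpy_t)$ directly. There the normal-barrier property enters \emph{linearly}, via $|\nabla\Psi(\dpy_t)^\top\dph|\le\sqrt{\bar\nu}\,\|\dph\|_{\dpH_t}=\tfrac12\sqrt{\bar\nu}$, so that the ``extra'' term is $(1/\eta_{t+1}-1/\eta_t)\cdot\tfrac12\sqrt{\bar\nu}\lesssim d\sqrt{\nu}$, which together with $\|\dpg_t\|_{\dpH_t^{-1}}\|\dph\|_{\dpH_t}\le d$ is dominated by $\tfrac{1}{32\eta_t}\gtrsim d(1+\sqrt{\nu})$. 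Once $\|\dpy_{t+1}-\dpy_t\|_{\dpH_t}\le\tfrac12$ is established, the self-concordance shift applies to the Bregman term in the FTRL lemma, and the per-round contribution is simply $2\eta_t\|\dpg_t\|_{\dpH_t^{-1}}^2\le 8\eta_t d^2$; there is \emph{no} residual $(1/\eta_{t+1}-1/\eta_t)^2\nu$ term in the final regret. In short, the logarithmic homogeneity is used inside the stability proof, not in the regret sum as you propose, and that distinction is what makes the argument go through with the stated $\eta_t$.
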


We are now in the position to specify the tuning of the regularization coefficients.
Based on the bound in \pref{eqn: strongly-theorem-arbitrary}, we propose to balance the last two terms by picking $\lambda_t \in (0,1)$ such that:
\begin{align}\label{eqn:oracle-tuning-strongly-main}
    \lambda_t  = \frac{d\sqrt{\beta+1}}{\sqrt{\sigma_{1:t}+\lambda_{0:t}}},
\end{align}
which must exist since when $\lambda_t = 0$, the left-hand side is smaller than the right-hand side, while when $\lambda_t = 1$, the left-hand side is larger than the right-hand side by the definition of $\lambda_0$.
Note that unlike the AOGD tuning in \pref{eq:AOGD_tuning}, our tuning leads to a cubic equation of $\lambda_t$, which does not admit a closed-form.
However, the earlier argument on its existence clearly also implies that it can be computed via a simple and efficient binary search (using information available at the end of round $t$).
Our next lemma is in the same vein as~\citep[Lemma~3.1]{NIPS'07:AOGD},
which shows that our adaptive tuning is almost as good as the optimal tuning (that knows all $\sigma_t$'s ahead of time).
\begin{lemma}
\label{lemma:H-function-main}
Define
$\calB(\{\lambda_s\}_{s=1}^t) \triangleq \lambda_{1:t}+\sum_{\tau=1}^t\frac{d\sqrt{\beta+1}}{\sqrt{\sigma_{1:\tau}+\lambda_{0:\tau}}}$,
with $\lambda_0$ defined in \pref{alg:lift-smooth}.
Then the sequence $\{\lambda_t\}_{t=1}^T$ attained by solving~\pref{eqn:oracle-tuning-strongly-main} satisfies for all $t \in [T]$:
\begin{equation}
  \label{eq:near-optimal-H-function}
  \calB(\{\lambda_s\}_{s=1}^t)\leq 2\min_{\{\lambda_s^*\}_{s=1}^t\geq 0}\calB(\{\lambda_s^*\}_{s=1}^t).
\end{equation} 
\end{lemma}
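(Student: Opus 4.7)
The plan is to exploit the tuning equation~\pref{eqn:oracle-tuning-strongly-main} to obtain a clean form for $\calB$ evaluated at the adaptive sequence, then argue competitor-by-competitor. Substituting $\lambda_\tau = d\sqrt{\beta+1}/\sqrt{\sigma_{1:\tau}+\lambda_{0:\tau}}$ into the sum on the right-hand side of the definition of $\calB(\{\lambda_s\}_{s=1}^t)$ collapses each summand to $\lambda_\tau$, yielding the clean identity $\calB(\{\lambda_s\}_{s=1}^t) = 2\lambda_{1:t}$. The factor of $2$ in the lemma statement will come directly from here, so it suffices to prove
\[
\lambda_{1:t} \;\leq\; \calB(\{\lambda_s^*\}_{s=1}^t)
\]
for every competitor sequence $\{\lambda_s^*\}_{s=1}^t$ of nonnegatives.

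I would introduce the cumulative sums $\Lambda_\tau = \lambda_{1:\tau}$ (adaptive) and $\Lambda_\tau^* = \lambda_{1:\tau}^*$ (competitor), and record the following elementary monotonicity: since the tuning equation expresses $\lambda_\tau$ as a decreasing function of $\Lambda_\tau$, at any round $\tau$ for which $\Lambda_\tau > \Lambda_\tau^*$ one has
\[
\lambda_\tau \;=\; \frac{d\sqrt{\beta+1}}{\sqrt{\sigma_{1:\tau}+\lambda_0+\Lambda_\tau}} \;<\; \frac{d\sqrt{\beta+1}}{\sqrt{\sigma_{1:\tau}+\lambda_0+\Lambda_\tau^*}},
\]
i.e., the adaptive per-round $\lambda_\tau$ is strictly smaller than the corresponding term appearing in $\calB(\{\lambda_s^*\}_{s=1}^t)$.

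From here the argument splits into two cases. If $\Lambda_t \leq \Lambda_t^*$, the desired bound $\Lambda_t \leq \calB(\{\lambda_s^*\}_{s=1}^t)$ is immediate since $\calB(\{\lambda_s^*\}_{s=1}^t) \geq \Lambda_t^*$. Otherwise, let $\tau^\star$ be the largest index in $\{0, 1, \ldots, t-1\}$ satisfying $\Lambda_{\tau^\star} \leq \Lambda_{\tau^\star}^*$, adopting the convention $\Lambda_0 = \Lambda_0^* = 0$ (so such a $\tau^\star$ always exists). By maximality, $\Lambda_\tau > \Lambda_\tau^*$ for every $\tau \in \{\tau^\star+1, \ldots, t\}$, so the monotonicity inequality above applies at each such round. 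Telescoping then yields
\[
\Lambda_t \;=\; \Lambda_{\tau^\star} + \sum_{\tau=\tau^\star+1}^t \lambda_\tau \;\leq\; \Lambda_{\tau^\star}^* + \sum_{\tau=\tau^\star+1}^t \frac{d\sqrt{\beta+1}}{\sqrt{\sigma_{1:\tau}+\lambda_0+\Lambda_\tau^*}} \;\leq\; \calB(\{\lambda_s^*\}_{s=1}^t),
\]
where the final inequality uses $\Lambda_{\tau^\star}^* \leq \Lambda_t^*$ (monotonicity of the cumulative sum) together with nonnegativity of the omitted summands.

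The main obstacle, modest as it is, is spotting the right case split on $\tau^\star$. Once one observes that the adaptive per-round $\lambda_\tau$ is a monotone decreasing function of its own partial sum $\Lambda_\tau$ through the tuning equation, the remainder of the proof reduces to a direct comparison between the adaptive and competitor trajectories; notably, no closed form for $\lambda_t$ is required, which is fortunate given that~\pref{eqn:oracle-tuning-strongly-main} is a cubic equation in $\lambda_t$ rather than the quadratic one in~\citep{NIPS'07:AOGD}.
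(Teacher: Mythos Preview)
Your proof is correct. Both you and the paper exploit the same two observations: the identity $\calB(\{\lambda_s\}_{s=1}^t)=2\lambda_{1:t}$ coming from the tuning equation, and the monotonicity $\lambda_\tau < d\sqrt{\beta+1}/\sqrt{\sigma_{1:\tau}+\lambda_0+\Lambda_\tau^*}$ whenever $\Lambda_\tau>\Lambda_\tau^*$. The difference lies in how the comparison is organized. The paper follows the template of~\citep[Lemma~3.1]{NIPS'07:AOGD} and proceeds by induction on $t$: at each step it splits on whether $\lambda_{1:t}\leq\lambda_{1:t}^*$, invoking the inductive hypothesis $\calB(\{\lambda_s\}_{s=1}^{t-1})\leq 2\min\calB(\{\lambda_s'\}_{s=1}^{t-1})$ in the second case. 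You instead unwind the induction into a single direct argument via the last crossing time $\tau^\star$, which lets you bound $\Lambda_{\tau^\star}$ by $\Lambda_{\tau^\star}^*$ and each subsequent $\lambda_\tau$ by the competitor's summand in one shot. Your route is slightly more economical (no induction machinery, no need to verify the base case separately) and makes the role of the cumulative-sum comparison more transparent; the paper's inductive argument has the advantage of mirroring the prior literature and yields the stronger intermediate statement that the factor-$2$ bound holds at \emph{every} prefix length simultaneously against the respective minimizers, though that is not needed downstream.
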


The proof of \pref{lemma:H-function-main} is deferred in \pref{appendix:H-function-main}. Combining \pref{lemma:main-result-smooth-arbitrary} and \pref{lemma:H-function-main}, we obtain the final regret guarantee in~\pref{thm:main-result-smooth}, whose proof can be found in \pref{appendix:proof-main-result-smooth}.
\begin{theorem}
\label{thm:main-result-smooth}
\pref{alg:lift-smooth} with adaptive tuning~\pref{eqn:oracle-tuning-strongly-main} ensures for any sequence $\lambda_1^*,\ldots,\lambda_T^* \geq 0$:
\begin{equation}\label{eqn: strongly-theorem}
    \Reg = \Ot\left(d\sqrt{\nu T} + \lambda_{1:T-1}^* +  \sum_{t=1}^{T-1}\frac{d\sqrt{\beta+1}}{\sqrt{\sigma_{1:t} + \lambda_{0:t}^*}}\right),
\end{equation}
when loss functions $\{f_t\}_{t=1}^T$ are all $\beta$-smooth and $T\geq \rho$ (a constant defined in \pref{alg:lift-smooth}).
\end{theorem}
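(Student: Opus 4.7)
The plan is to derive \pref{thm:main-result-smooth} as a direct consequence of combining \pref{lemma:main-result-smooth-arbitrary} and \pref{lemma:H-function-main}, which have already been set up precisely for this purpose.

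First, I would verify that the adaptive tuning rule in \pref{eqn:oracle-tuning-strongly-main} indeed produces a valid sequence $\{\lambda_t\}_{t=1}^T \subset (0,1)$, so that the hypothesis of \pref{lemma:main-result-smooth-arbitrary} is satisfied. This is already justified in the discussion immediately after \pref{eqn:oracle-tuning-strongly-main}: at $\lambda_t=0$ the left-hand side of the defining equation is smaller than the right-hand side, while at $\lambda_t=1$ the inequality reverses because of how $\lambda_0$ is initialized in \pref{alg:lift-smooth}. Continuity then guarantees a solution in $(0,1)$, so the adaptive sequence is admissible.

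Second, I would instantiate \pref{lemma:main-result-smooth-arbitrary} at the adaptively chosen sequence $\{\lambda_t\}_{t=1}^T$ and rewrite the resulting bound in terms of the quantity $\calB$ introduced in \pref{lemma:H-function-main}:
\begin{equation*}
\Reg = \Ot\!\left(d\sqrt{\nu T} + \lambda_{1:T-1} + \sum_{t=1}^{T-1}\frac{d\sqrt{\beta+1}}{\sqrt{\sigma_{1:t}+\lambda_{0:t}}}\right) = \Ot\!\left(d\sqrt{\nu T} + \calB(\{\lambda_s\}_{s=1}^{T-1})\right).
\end{equation*}
Then I would invoke \pref{lemma:H-function-main} at index $T-1$ to replace $\calB(\{\lambda_s\}_{s=1}^{T-1})$ by twice the value achieved by any competing sequence. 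Since the bound in \pref{lemma:H-function-main} is a minimum over all nonnegative sequences, it applies in particular to any fixed $\lambda_1^*,\ldots,\lambda_{T-1}^* \geq 0$, yielding
\begin{equation*}
\calB(\{\lambda_s\}_{s=1}^{T-1}) \le 2\calB(\{\lambda_s^*\}_{s=1}^{T-1}) = 2\lambda_{1:T-1}^* + 2\sum_{\tau=1}^{T-1}\frac{d\sqrt{\beta+1}}{\sqrt{\sigma_{1:\tau}+\lambda_{0:\tau}^*}}.
\end{equation*}
Extending the competing sequence to include an arbitrary $\lambda_T^* \geq 0$ only enlarges the right-hand side (as shown in the theorem statement), so this is a harmless relaxation. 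Absorbing the factor of $2$ into the $\Ot(\cdot)$ notation yields \pref{eqn: strongly-theorem}.

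I do not expect any serious obstacle here, since the two lemmas have been engineered so that the theorem falls out directly. The only mildly delicate point is keeping the bookkeeping straight: ensuring that the indexing from \pref{lemma:main-result-smooth-arbitrary} (which involves sums up to $T-1$ and uses $\lambda_0$ as a fixed initialization, not a free parameter) aligns with the expression $\calB(\{\lambda_s\}_{s=1}^{T-1})$ appearing in \pref{lemma:H-function-main}, and that the constants hidden in $\Ot(\cdot)$ correctly absorb the factor of $2$ from the near-optimality guarantee. Beyond this, the proof is essentially a one-line composition of the two preceding lemmas.
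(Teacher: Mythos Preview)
Your proposal is correct and matches the paper's own proof essentially line for line: the paper also simply invokes \pref{lemma:main-result-smooth-arbitrary} with the adaptive $\{\lambda_t\}$, then applies \pref{lemma:H-function-main} to swap in an arbitrary nonnegative sequence $\{\lambda_t^*\}$, absorbing the factor $2$ into the $\Ot(\cdot)$. Your extra care about verifying $\lambda_t\in(0,1)$ and the index bookkeeping is welcome but not a departure from the paper's argument.
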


We leave the discussion on the many implications of this general regret bound to the next subsection, and make a final remark on the per-round computational complexity of \pref{alg:lift-smooth}.
Note that although the objective function in the FTRL update (\pref{line: FTRL update lifted}) contains $\order(T)$ terms, it is clear that by storing and updating some statistics (such as $\sum_{s=1}^t \dpg_s$ and $\sum_{s=1}^t (\sigma_s+\lambda_s)\dpy_s$), one can evaluate its function value and gradient in time independent of $T$.
Approximating solving the FTRL update (up to precision $\nicefrac{1}{\poly(T)}$) via for example the interior point method thus only requires $\order(\poly(d\log T))$ time.
This is more efficient than the method of~\citet{JACM'21:kernel-BCO}, which requires $\order(\poly(d\log T)T)$ time per round even when the domain is a polytope.

\subsection{Implications of \pref{thm:main-result-smooth}}
\label{sec: implication}
In the following, we investigate several special cases and present direct corollaries of \pref{thm:main-result-smooth} to demonstrate that our algorithm not only matches/improves existing results for homogeneous settings, but also leads to interesting intermediate results in some heterogeneous settings. 
Note that since our regret bound in \pref{thm:main-result-smooth} holds for \emph{any} choice of the sequence $\{\lambda_t^*\}_{t=1}^T\geq 0$, in each case below we will simply provide a specific sequence of $\{\lambda_t^*\}_{t=1}^T$ that leads to a favorable guarantee.
For simplicity, we also directly replace $\nu$ with $\order(d)$ (in both our bounds and previous results) since it is well known that any convex set admits an $\order(d)$-self-concordant barrier~\citep{nesterov1994-IPM}.

First, consider the case when no functions have strong convexity, that is, $\sigma_t = 0$ for all $t$.
This degenerates to the same homogeneous setting as~\citep{AISTATS'11:smooth-BCO}, where their algorithm achieves $\Ot(d^{\nicefrac{3}{2}}\sqrt{T}+\beta^{\nicefrac{1}{3}}d T^{\nicefrac{2}{3}})$ regret. As a side product of our adaptive $\ell_2$-regularization, our algorithm manages to achieve even better dependency on the dimension $d$. Indeed, by choosing $\lambda_1^*=(1+\beta)^{\nicefrac{1}{3}}d^{\nicefrac{2}{3}}T^{\nicefrac{2}{3}}$ and $\lambda_t^*=0$ for all $t\geq 2$ in~\pref{thm:main-result-smooth}, we obtain the following corollary.
\begin{corollary}[Smooth BCO without strong convexity]\label{cor: only-smooth}
When $f_t$ is $\beta$-smooth and $0$-strongly convex for all $t \in[T]$, \pref{alg:lift-smooth} achieves $\otil(d^{\nicefrac{3}{2}}\sqrt{T}+(1+\beta)^{\nicefrac{1}{3}}d^{\nicefrac{2}{3}}T^{\nicefrac{2}{3}})$ regret.
\end{corollary}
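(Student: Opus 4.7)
The plan is to invoke \pref{thm:main-result-smooth} directly, substituting $\sigma_t = 0$ for all $t$ and then choosing the comparator sequence $\{\lambda_t^*\}_{t=1}^T$ exactly as the excerpt suggests: $\lambda_1^* = (1+\beta)^{\nicefrac{1}{3}} d^{\nicefrac{2}{3}} T^{\nicefrac{2}{3}}$ and $\lambda_t^* = 0$ for all $t \geq 2$. Since \pref{eqn: strongly-theorem} holds for \emph{any} nonnegative sequence, the corollary will follow by algebraic simplification. The design principle behind the choice is transparent: after killing the $\sigma_{1:t}$ terms, the last two terms in the bound reduce to $\lambda_1^*$ and $\sum_{t=1}^{T-1} d\sqrt{\beta+1}/\sqrt{\lambda_0 + \lambda_1^*}$; treating $\lambda_1^*$ as a free parameter $L$ that dominates $\lambda_0$ and balancing $L$ against $Td\sqrt{\beta+1}/\sqrt{L}$ forces $L = \Theta((1+\beta)^{\nicefrac{1}{3}} d^{\nicefrac{2}{3}} T^{\nicefrac{2}{3}})$.

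With this substitution the three terms in \pref{eqn: strongly-theorem} are estimated as follows. First, using the standard fact that any convex set admits an $\O(d)$-self-concordant barrier, the term $d\sqrt{\nu T}$ becomes $\otil(d^{\nicefrac{3}{2}}\sqrt{T})$. Second, $\lambda_{1:T-1}^* = \lambda_1^* = (1+\beta)^{\nicefrac{1}{3}} d^{\nicefrac{2}{3}} T^{\nicefrac{2}{3}}$. Third, since $\lambda_t^* = 0$ for $t \geq 2$, we have $\lambda_{0:t}^* = \lambda_0 + \lambda_1^*$ for every $t \geq 1$, so each summand in the final term is at most $d\sqrt{\beta+1}/\sqrt{\lambda_1^*}$ and the entire sum is bounded by $T\cdot d\sqrt{\beta+1}/\sqrt{\lambda_1^*}$. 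Plugging in the value of $\lambda_1^*$ and simplifying the exponents gives $(1+\beta)^{\nicefrac{1}{3}} d^{\nicefrac{2}{3}} T^{\nicefrac{2}{3}}$, matching the second term up to constants.

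I do not expect any genuine obstacle: the argument is essentially arithmetic once \pref{thm:main-result-smooth} is in hand. The only small points worth verifying are that $\lambda_0$ (which is $\O(\poly(d)(\beta+1))$ but independent of $T$) is indeed dominated by $\lambda_1^*$ for $T$ large enough, so that the lower bound $\sqrt{\lambda_0 + \lambda_1^*} \geq \sqrt{\lambda_1^*}$ loses nothing in order, and that the condition $T \geq \rho$ from \pref{thm:main-result-smooth} is harmless since $\rho$ is a dimension-dependent constant absorbable into the $\otil$ factors. Summing the three contributions then yields $\otil(d^{\nicefrac{3}{2}}\sqrt{T} + (1+\beta)^{\nicefrac{1}{3}} d^{\nicefrac{2}{3}} T^{\nicefrac{2}{3}})$, completing the proof.
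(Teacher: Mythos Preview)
Your proposal is correct and essentially identical to the paper's proof: both invoke \pref{thm:main-result-smooth} with $\sigma_t=0$, the same choice $\lambda_1^*=(1+\beta)^{\nicefrac{1}{3}}d^{\nicefrac{2}{3}}T^{\nicefrac{2}{3}}$ and $\lambda_t^*=0$ for $t\geq 2$, and use $\nu=\O(d)$ to simplify. Your additional remarks about $\lambda_0$ being dominated and $T\geq\rho$ being harmless are sound and make the argument slightly more explicit than the paper's version.
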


Second, we consider the case when all functions are $\sigma$-strongly convex for some constant $\sigma > 0$, which degenerates to the same homogeneous setting as~\citep{conf/nips/HazanL14}.
By picking $\lambda^*_t=0$ for all $t\geq 0$ in~\pref{thm:main-result-smooth}, our algorithm achieves the same result as theirs.
\begin{corollary}[Smooth BCO with $\sigma$-strong convexity]
\label{cor: strongly-convex-smooth}
When $f_t$ is $\beta$-smooth and $\sigma$-strongly convex, i.e., $\sigma_{t}=\sigma > 0$ for all $t \in[T]$, \pref{alg:lift-smooth} achieves $\Ot(d^{\nicefrac{3}{2}}\sqrt{T}+d\sqrt{\nicefrac{T(1+\beta)}{\sigma}})$ regret.
\end{corollary}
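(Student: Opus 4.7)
The plan is to instantiate \pref{thm:main-result-smooth} with the trivial choice $\lambda_t^\ast = 0$ for all $t \geq 1$. Since~\pref{eqn: strongly-theorem} holds for \emph{any} nonnegative comparator sequence $\{\lambda_t^\ast\}_{t=1}^{T}$, this is a legal instantiation, and the uniform strong convexity $\sigma_t \equiv \sigma$ already provides enough curvature in the denominator to make the last term of~\pref{eqn: strongly-theorem} summable without any help from an $\ell_2$ comparator regularizer.

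Plugging $\lambda_t^\ast = 0$ and $\sigma_t = \sigma$ into~\pref{eqn: strongly-theorem}, the middle term $\lambda_{1:T-1}^\ast$ vanishes, and $\lambda_{0:t}^\ast = \lambda_0$, so the remaining sum reduces to $\sum_{t=1}^{T-1}\frac{d\sqrt{\beta+1}}{\sqrt{t\sigma + \lambda_0}}$. Dropping the positive $\lambda_0$ and applying the standard estimate $\sum_{t=1}^{T-1} t^{-1/2} \leq 2\sqrt{T}$ bounds this by $O\bigl(d\sqrt{T(1+\beta)/\sigma}\bigr)$. The leading term $d\sqrt{\nu T}$ becomes $O(d^{3/2}\sqrt{T})$ after invoking the universal fact that any convex set admits an $O(d)$-self-concordant barrier, giving $\nu = O(d)$. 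Combining the two contributions yields exactly the $\tilde{O}(d^{3/2}\sqrt{T} + d\sqrt{T(1+\beta)/\sigma})$ bound stated in the corollary.

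No genuine obstacle is anticipated, since the hard work is already encapsulated in \pref{thm:main-result-smooth}; this corollary just verifies that the adaptive algorithm recovers the homogeneous bound of \citet{conf/nips/HazanL14} as a special case. The only minor points to check are that (i) the initialization constant $\lambda_0$ (which can be as large as $d^2(\beta+1)$) only helps the bound since it appears in the denominator, and (ii) the condition $T \geq \rho$ required by the theorem is absorbed into the $\otil$ notation. Both are immediate, so no additional computations beyond plugging in the trivial comparator are required.
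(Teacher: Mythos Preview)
The proposal is correct and follows exactly the same approach as the paper: plug $\lambda_t^\ast = 0$ for all $t\geq 1$ into \pref{thm:main-result-smooth}, use $\sigma_{1:t} = \sigma t$ together with $\sum_{t=1}^{T-1} t^{-1/2} = O(\sqrt{T})$ to bound the last term, and substitute $\nu = O(d)$ in the first. There is nothing to add.
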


Third, we investigate an intermediate setting with a mixture of $\sigma$-strongly convex and $0$-strongly convex functions. 
Specifically, suppose that there are $M$ functions with no strong convexity, and the rest are $\sigma$-strongly convex.
According to \pref{thm:main-result-smooth}, the worst case scenario for our algorithm is when these $M$ functions appear in the first $M$ rounds, while the best scenarios is when they are in the last $M$ rounds.
Considering these two extremes and picking $\{\lambda_t^*\}_{t=1}^T$ correspondingly, we obtain the following corollary (see \pref{appendix: smooth implication proof} for the proof).

\begin{corollary}[Smooth BCO with a mixture of convex and $\sigma$-strongly convex functions]\label{cor: worst-best-case}
Suppose that $\{f_t\}_{t=1}^T$ are $\beta$-smooth
and $T-M$ of them are $\sigma$-strongly convex.
Then \pref{alg:lift-smooth} guarantees
$
    \Reg= \otil \Big(d^{\frac{2}{3}}(1+\beta)^{\frac{1}{3}} M^{\frac{2}{3}}+d^{\frac{3}{2}}\sqrt{T}+d\sqrt{\frac{(1+\beta)(T-M)}{\sigma}}
    \Big).
$
If these $T-M$ functions appear in the first $T-M$ rounds, then the bound is further improved to
$
    \Reg= \otil \Big(d^{\frac{3}{2}}\sqrt{T}+dT\sqrt{\frac{1+\beta}{\sigma(T-M)}}
    \Big).
$
\end{corollary}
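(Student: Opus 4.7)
The plan is to derive both bounds directly from \pref{thm:main-result-smooth}, which holds for any nonnegative sequence $\{\lambda_t^*\}_{t=1}^T$, by choosing two different sequences tailored to the two scenarios. The starting observation is purely combinatorial: since only $M$ of the $T$ functions fail to be $\sigma$-strongly convex, for every prefix of length $t$ one has $\sigma_{1:t} \geq \sigma\cdot\max(0,t-M)$, regardless of where in the horizon the non-strongly-convex rounds occur. This is the only property of the adversarial placement we need for the first, worst-case bound.

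For the worst-case scenario, I would set $\lambda_1^* = (1+\beta)^{\nicefrac{1}{3}}d^{\nicefrac{2}{3}}M^{\nicefrac{2}{3}}$ and $\lambda_t^* = 0$ for $t \geq 2$, mirroring the tuning used in \pref{cor: only-smooth} but with $T$ replaced by $M$. Then $\lambda_{1:T-1}^* = \lambda_1^*$ contributes the $(1+\beta)^{\nicefrac{1}{3}}d^{\nicefrac{2}{3}}M^{\nicefrac{2}{3}}$ term directly. For the sum in \pref{eqn: strongly-theorem}, I would split at $t=M$: the first $M$ terms are bounded by $\frac{d\sqrt{\beta+1}}{\sqrt{\lambda_1^*}}$, which when summed gives another $(1+\beta)^{\nicefrac{1}{3}}d^{\nicefrac{2}{3}}M^{\nicefrac{2}{3}}$ contribution; for $t > M$ I would use $\sigma_{1:t} \geq \sigma(t-M)$ and a standard $\sum_{s\leq T-M} s^{-\nicefrac{1}{2}} = O(\sqrt{T-M})$ integral bound, yielding the $d\sqrt{(1+\beta)(T-M)/\sigma}$ term. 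Adding the $d\sqrt{\nu T} = \Ot(d^{\nicefrac{3}{2}}\sqrt{T})$ term from \pref{thm:main-result-smooth} completes the first bound.

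For the improved scenario in which the $T-M$ strongly-convex functions are located in the first $T-M$ rounds, I would simply pick $\lambda_t^* = 0$ for all $t$, so that $\lambda_{1:T-1}^* = 0$. Now $\sigma_{1:t} = \sigma t$ for $t \leq T-M$ and $\sigma_{1:t} = \sigma(T-M)$ for $t > T-M$. The first segment of the sum is bounded by $\sum_{t=1}^{T-M} \frac{d\sqrt{\beta+1}}{\sqrt{\sigma t}} = O(d\sqrt{(1+\beta)(T-M)/\sigma})$, while the second segment contributes $M\cdot \frac{d\sqrt{\beta+1}}{\sqrt{\sigma(T-M)}}$. Both terms can be absorbed into $dT\sqrt{(1+\beta)/(\sigma(T-M))}$, since $T-M = (T-M) \cdot \sqrt{(T-M)/(T-M)} \leq T$ and $M \leq T$. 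Combining with $\Ot(d^{\nicefrac{3}{2}}\sqrt{T})$ yields the improved bound.

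I do not expect any substantive obstacle; the work is essentially a choice-of-parameters calculation on top of \pref{thm:main-result-smooth}, and the only step requiring a small amount of care is the worst-case lower bound $\sigma_{1:t} \geq \sigma\max(0,t-M)$, which must hold uniformly over all possible placements of the $M$ non-strongly-convex rounds so that the resulting bound is adversary-proof rather than placement-dependent.
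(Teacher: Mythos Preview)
Your proposal is correct and for the first (worst-case) bound it is essentially identical to the paper's argument: the paper instantiates the worst arrangement $\sigma_\tau=0$ for $\tau\in[M]$ explicitly, while you phrase the same thing via the uniform lower bound $\sigma_{1:t}\geq\sigma\max(0,t-M)$, and both then take $\lambda_1^*=(1+\beta)^{1/3}d^{2/3}M^{2/3}$, $\lambda_t^*=0$ for $t\geq 2$ and split the sum at $t=M$.

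For the second (favorable) bound your route is actually cleaner than the paper's. The paper keeps a free $\lambda_1^*$, separates into the regimes $T-M=\Theta(T)$ and $T-M=o(T)$, and in the latter further splits on whether $\lambda_1^*\lessgtr\sigma(T-M)$, eventually obtaining a $\min$ of two expressions before specializing to the stated bound. Your direct choice $\lambda_t^*=0$ for all $t$ bypasses all of this: with $\sigma_{1:t}=\sigma\min(t,T-M)$ the two segments of the sum are bounded by $O\big(d\sqrt{(1+\beta)(T-M)/\sigma}\big)$ and $dM\sqrt{1+\beta}/\sqrt{\sigma(T-M)}$, both of which are dominated by $dT\sqrt{(1+\beta)/(\sigma(T-M))}$. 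The paper's longer derivation does yield a slightly sharper (unstated) $\min$-form bound, but for the corollary as written your argument suffices and is more transparent. One small point you leave implicit: the denominator in \pref{eqn: strongly-theorem} is $\sigma_{1:t}+\lambda_{0:t}^*$ with $\lambda_0>0$ fixed by the algorithm, so setting all $\lambda_t^*=0$ never causes a division-by-zero even at $t=1$; this is harmless but worth a word when you write it up.
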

To better interpret these bounds, we consider how large $M$ can be (that is, how many functions without strong convexity we can tolerate) to still ensure $\otil(\sqrt{T})$ regret
---
in the general case (the first bound of the corollary), we see that we can tolerate $M=\order(T^{\nicefrac{3}{4}})$,
while in the best case (the second bound), we can even tolerate $M$ being any constant fraction of $T$!
On the other hand, a naive method of discarding all functions without strong convexity can only tolerate $M = \otil(\sqrt{T})$.

Finally, following~\citep{NIPS'07:AOGD} we consider a situation with decaying strong convexity: $\sigma_{t}=t^{-\alpha}$ for some $\alpha \in [0,1]$. 
We prove the following corollary; see \pref{appendix: smooth implication proof} for the proof.

\begin{corollary}[Smooth BCO with decaying strong convexity]\label{cor: intermediate-case}
When $f_t$ is $\beta$-smooth and $\sigma_t$-strongly convex with $\sigma_t=t^{-\alpha}$ for some $\alpha \in [0,1]$, \pref{alg:lift-smooth} guarantees
\begin{align*}
    \Reg= 
    \begin{cases}
        \otil\left(d^{\frac{3}{2}}\sqrt{T}+d\sqrt{1+\beta}T^{\frac{1+\alpha}{2}}\right) & \alpha\in [0,\frac{1}{3}-\frac{2}{3}\log_Td-\frac{1}{3}\log_T(1+\beta)], \\
        \otil\left(d^{\frac{3}{2}}\sqrt{T}+(1+\beta)^{\frac{1}{3}}d^{\frac{2}{3}}T^{\frac{2}{3}}\right) & \alpha\in [\frac{1}{3}-\frac{2}{3}\log_Td-\frac{1}{3}\log_T(1+\beta), 1].
    \end{cases}
\end{align*}
\end{corollary}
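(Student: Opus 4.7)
The plan is to invoke \pref{thm:main-result-smooth} with two different choices of the comparator sequence $\{\lambda_t^*\}_{t=1}^T$, one tailored to each regime of $\alpha$, and show that each choice yields the claimed bound. The breakpoint in $\alpha$ will emerge naturally as the value at which the two resulting bounds cross.

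\textbf{Regime of small $\alpha$.} First I would take the simplest choice $\lambda_t^* = 0$ for all $t \in [T]$. Then \pref{eqn: strongly-theorem} reduces to
\begin{equation*}
\Reg = \tilde{O}\!\left(d\sqrt{\nu T} + \sum_{t=1}^{T-1}\frac{d\sqrt{\beta+1}}{\sqrt{\sigma_{1:t} + \lambda_0}}\right).
\end{equation*}
Using $\sigma_t = t^{-\alpha}$, a standard integral comparison gives $\sigma_{1:t} = \sum_{s=1}^{t} s^{-\alpha} = \Omega(t^{1-\alpha})$ for $\alpha \in [0,1)$ (and $\Omega(\log t)$ for $\alpha = 1$). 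Dropping $\lambda_0$ from the denominator (which only weakens the bound) and summing yields $\sum_{t=1}^{T-1} t^{(\alpha-1)/2} = O(T^{(1+\alpha)/2})$. Substituting $\nu = O(d)$ this gives $\tilde{O}(d^{3/2}\sqrt{T} + d\sqrt{\beta+1}\, T^{(1+\alpha)/2})$, matching the first case.

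\textbf{Regime of large $\alpha$.} For the second case I would mimic the tuning used in \pref{cor: only-smooth}: set $\lambda_1^* = (1+\beta)^{1/3} d^{2/3} T^{2/3}$ and $\lambda_t^* = 0$ for $t \geq 2$. Then $\lambda_{1:T-1}^* = (1+\beta)^{1/3} d^{2/3} T^{2/3}$, and since $\lambda_{0:t}^* \geq \lambda_1^*$ for every $t \geq 1$, each summand in the last term of \pref{eqn: strongly-theorem} is bounded by $d\sqrt{\beta+1}/\sqrt{\lambda_1^*} = (1+\beta)^{1/3} d^{2/3}\, T^{-1/3}$, so the sum is at most $(1+\beta)^{1/3} d^{2/3} T^{2/3}$. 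Combined with $d\sqrt{\nu T} = \tilde{O}(d^{3/2}\sqrt{T})$, this yields the second-case bound.

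\textbf{Identifying the breakpoint.} The final step is to observe that the two bounds $d\sqrt{\beta+1}\, T^{(1+\alpha)/2}$ and $(1+\beta)^{1/3} d^{2/3} T^{2/3}$ coincide precisely when
\begin{equation*}
T^{(1+\alpha)/2 - 2/3} = d^{-1/3} (1+\beta)^{-1/6},
\end{equation*}
which, after taking $\log_T$ and rearranging, gives the threshold $\alpha = \tfrac{1}{3} - \tfrac{2}{3}\log_T d - \tfrac{1}{3}\log_T(1+\beta)$. Taking the smaller of the two bounds on each side of this threshold gives exactly the piecewise statement of the corollary. I do not expect any serious obstacle here: the corollary is essentially a bookkeeping exercise on top of \pref{thm:main-result-smooth}, and the only mildly delicate point is making sure that the choice $\lambda_1^* = (1+\beta)^{1/3} d^{2/3} T^{2/3}$ (which may exceed $1$) is allowed, but this is fine since the theorem places no upper bound on the comparator sequence $\{\lambda_t^*\}$.
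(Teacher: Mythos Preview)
Your proposal is correct and essentially matches the paper's own proof. The paper parametrizes $\lambda_1^*=(1+\beta)^{\mu_0}d^{\mu_1}T^{\mu_2}$ (with $\lambda_t^*=0$ for $t\geq 2$) and then specializes to $\mu_0=\mu_1=\mu_2=0$ for the small-$\alpha$ regime and $\mu_0=\tfrac{1}{3},\mu_1=\mu_2=\tfrac{2}{3}$ for the large-$\alpha$ regime, which are exactly your two choices up to a harmless additive constant in $\lambda_1^*$; the breakpoint calculation is identical.
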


\section{Lipschitz Bandit Convex Optimization with Heterogeneous Strong Convexity}
\label{sec:lipschitz-BCO}

\setcounter{AlgoLine}{0}
\begin{algorithm}[t]
\caption{Adaptive Lipschitz BCO with Heterogeneous Strong Convexity}
\label{alg:lift-Lipschitz}
\textbf{Input:} Lipschitz parameter $L$ and a $\nu$-self-concordant barrier $\psi$ for the feasible domain $\calX$.

\textbf{Define:} lifted feasible set $\bm{\calX}=\{\bm{x}=(x,1) \mid x\in \calX\}$.

\textbf{Define:} $\Psi$ is a normal barrier of the conic hull of $\dpX$: $\Psi(\x) = \Psi(x,b) = 400(\psi(x/b) - 2\nu \ln b)$. 

\textbf{Define:} $\rho'=\frac{2^{16}(16\sqrt{\nu} d^{\nicefrac{1}{3}}(4L+1)^{\nicefrac{1}{3}}+(L+1)^{\nicefrac{2}{3}})^3}{d}$.

\textbf{Initialize:} $\lambda_0=\max\{ \rho', d^2(L+1)^2\}$ and $\eta_1=(L+1)^{\frac{2}{3}}d^{-\frac{4}{3}}(\frac{1}{\lambda_0}+\frac{1}{T})^{\frac{1}{3}}$.

\textbf{Initialize:} $\dpy_1 =(y_1,1) = \argmin_{\dpx \in \bm{\X}} \Psi(\dpx)$.

\For{$t=1,2,\dots, T$}{
    
    \nl Define $\bm{H}_t=\nabla^2{\Psi}(\dplus{y}_t)+\eta_t\left(\sigma_{1:t-1}+\lambda_{0:t-1}\right)\dpI$. \label{line: hessian calc lip}
    
    \nl Draw $\dplus{u}_t$ uniformly at random from $\mathbb{S}^{d+1}\cap \big(\dplus{H}_t^{-\frac{1}{2}}\dpe_{d+1}\big)^{\perp}$. 
    \LineComment{$\dplus{w}^\perp$: space orthogonal to $\dplus{w}$} \label{line: exploration lip}
    
    \nl Compute $\dpx_t=(x_t,1) = \dpy_t+\dplus{H}_t^{-\frac{1}{2}}\dpu_t$, 
    play the point $x_t$, and observe $f_t(x_t)$ and $\sigma_t$. \label{line: final decision lip}

    \nl Compute regularization coefficient $\lambda_t \in (0,1)$ as the solution of the following equation \label{line: adaptive lambda lip}
    \begin{align}\label{eqn:oracle-tuning-Lipschitz-main}
        \lambda_t = \frac{d^{\frac{2}{3}}(L+1)^{\frac{2}{3}}}{(\sigma_{1:t}+\lambda_{0:t})^{\frac{1}{3}}}.
    \end{align}
    
    \nl Compute gradient estimator $\dpg_t=d\left(f_t(x_t)+\frac{\lambda_t}{2}\|x_t\|_2^2\right)\dplus{H}_t^{\frac{1}{2}}\dplus{u}_t$. \label{line: gradient calc lip}
    
    \nl Compute learning rate $\eta_{t+1} = d^{-\frac{4}{3}}(L+1)^{\frac{2}{3}}\cdot\left({\frac{1}{\sigma_{1:t}+\lambda_{0:t}}+\frac{1}{T}}\right)^{\frac{1}{3}}$. \label{line: adaptive learning rate lip}
    
    \nl Update $\dpy_{t+1}=\argmin_{\dpx\in \dplus{\calX}}\left\{\sum_{s=1}^t \left(\dpg_{s}^\top \dpx + \frac{\sigma_{s}+\lambda_s}{2}\|\dpx-\dpy_{s}\|_2^2\right) + \frac{\lambda_0}{2}\|\dpx\|_2^2+\frac{1}{\eta_{t+1}}\Psi(\dpx)\right\}$. \label{line: FTRL update lifted lip}
}
\end{algorithm}

In this section, we consider a similar setting where instead of assuming smoothness, we assume that functions $\{f_t\}_{t=1}^T$ are known to be $L$-Lipschitz (\pref{assumption:lipschitz}).
The strong convexity parameter $\sigma_t$ of function $f_t$ is still only revealed at the end of round $t$.
We extend our algorithm to this case and present it in \pref{alg:lift-Lipschitz}, which differs from \pref{alg:lift-smooth} only in the tuning of the learning rate $\eta_t$ (see \pref{line: adaptive learning rate lip}) and the regularization coefficient $\lambda_t$ (see \pref{line: adaptive lambda lip}).
These tunings are different because of the different structures in the setting, but their design follows the same idea as before.
Similar to~\pref{thm:main-result-smooth}, we prove the following theorem (see \pref{appendix: lip-bco-proof} for the proof).
\begin{theorem}
\label{thm:main-result-lipschitz}
\pref{alg:lift-Lipschitz} ensures for any sequence $\lambda_1^*,\ldots,\lambda_T^* \geq 0$:
\begin{align}
\label{eqn: lip regret bound}
    \Reg =  \otil\left(d^{\frac{4}{3}}\nu T^{\frac{1}{3}}+\lambda_{0:T}^*+\sum_{t=1}^T\frac{d^{\frac{2}{3}}(L+1)^{\frac{2}{3}}}{(\sigma_{1:t-1}+\lambda_{0:t-1}^*)^{\frac{1}{3}}}\right),
\end{align}
when all the functions are $L$-Lipschitz and $T\geq \rho'$ (a constant defined in~\pref{alg:lift-Lipschitz}).
\end{theorem}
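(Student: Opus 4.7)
The plan is to mirror the proof of \pref{thm:main-result-smooth} from the smooth case, with the two key modifications required by Lipschitz feedback: a worse bias bound for the smoothing operator (which now scales with $L$ times the typical step size rather than $\beta$ times the trace of $\bm{H}_t^{-1}$), and the correspondingly different learning-rate/regularization schedule given in \pref{line: adaptive learning rate lip} and \pref{line: adaptive lambda lip}. I would first prove an analog of \pref{lemma:main-result-smooth-arbitrary}, namely that for \emph{any} sequence $\{\lambda_t\}_{t=1}^T \subset (0,1)$ the algorithm satisfies
\begin{align*}
\Reg = \Ot\!\left(d^{\tfrac{4}{3}}\nu T^{\tfrac{1}{3}} + \lambda_{1:T-1} + \sum_{t=1}^{T-1}\frac{d^{\tfrac{2}{3}}(L+1)^{\tfrac{2}{3}}}{(\sigma_{1:t}+\lambda_{0:t})^{\tfrac{1}{3}}}\right),
\end{align*}
and then combine it with a near-optimality lemma for the tuning in \pref{eqn:oracle-tuning-Lipschitz-main}, analogous to \pref{lemma:H-function-main}.

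For the first step I would decompose $f_t(x_t) - f_t(x^*)$ into four pieces: (i) an \emph{exploration} term $f_t(x_t)-f_t(y_t) = f_t(y_t + \bm{H}_t^{-1/2}u_t) - f_t(y_t)$, bounded by $L\,\|\bm{H}_t^{-1/2}\dpu_t\|_2$ using Lipschitzness; (ii) a \emph{smoothing bias} $f_t(y_t)-\wh{\dpf}_t(\dpy_t)$, bounded by the expected $L$-Lipschitz deviation $L\,\E[\|\bm{H}_t^{-1/2}\dpb\|_2]$; (iii) the $\ell_2$-regularization penalty $\tfrac{\lambda_t}{2}\|x_t\|_2^2 \le \tfrac{\lambda_t}{2}$; and (iv) the FTRL term $\wh{\dpf}_t(\dpy_t) - \wh{\dpf}_t(\dpx^*)$, which by \pref{lem: unbiased} and strong convexity of $\wt{\dpf}_t$ reduces to analyzing $\langle \dpg_t, \dpy_t-\dpx^*\rangle - \tfrac{\sigma_t+\lambda_t}{2}\|\dpy_t-\dpx^*\|_2^2$. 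Summing (iv) over $t$ and applying the standard barrier-regularized FTRL analysis to the loss sequence $\langle \dpg_t,\cdot\rangle + \tfrac{\sigma_t+\lambda_t}{2}\|\cdot -\dpy_t\|_2^2$ yields a bound of the form $\text{penalty} + \text{stability}$. The penalty from the extra $\tfrac{\lambda_0}{2}\|\dpx\|_2^2$ plus the $\Psi$ term is $\Ot(\lambda_0 + \Psi(\dpx^*)/\eta_{T+1})$, and the stability at each round is controlled, via the critical observation in the paper, by the Newton decrement $\|\nabla G_t(\dpy_t)\|_{\nabla^{-2}G_t(\dpy_t)}^2$. Here I use exactly the decomposition sketched in the main text: the part coming from $\dpg_t$ is bounded by $\eta_t\|\dpg_t\|_{\nabla^{-2}\Psi(\dpy_t)}^2 \le \eta_t d^2 (L+1)^2$ (now $(L+1)^2$ replaces the $\Ot(1)$ of the smooth case, since $|\wt{f}_t(x_t)| = \Ot(L+1)$), while the learning-rate-change part is handled by the normal-barrier identity $\|\nabla\Psi(\dpy_t)\|_{\nabla^{-2}\Psi(\dpy_t)}^2\le \Theta(\nu)$.

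With these pieces in place, plugging in $\eta_{t+1}$ from \pref{line: adaptive learning rate lip} balances the stability $\sum_t \eta_t d^2(L+1)^2$, the penalty $\nu/\eta_{T+1}$, and the $\ell_2$-regularization surplus $\sum_t \lambda_t\|\dpy_t-\dpx^*\|_2^2/(\sigma_{1:t}+\lambda_{0:t})$ that arises from the smoothing/exploration biases (since each bias is of order $L/\sqrt{\sigma_{1:t}+\lambda_{0:t}}$ and squares to $L^2/(\sigma_{1:t}+\lambda_{0:t})$, whose sum produces the $(\cdot)^{1/3}$ denominators after optimizing $\eta_t$). This gives the bound in terms of the actually played $\lambda_t$. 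The second step is to prove the analog of \pref{lemma:H-function-main}: define $\calB'(\{\lambda_s\}_{s=1}^t) = \lambda_{1:t} + \sum_{\tau=1}^t d^{2/3}(L+1)^{2/3}/(\sigma_{1:\tau}+\lambda_{0:\tau})^{1/3}$, and show that the solution of the cubic-like equation \pref{eqn:oracle-tuning-Lipschitz-main} is within a constant factor of the minimizer. The argument is inductive on $t$, as in \pref{lemma:H-function-main}: at each round the chosen $\lambda_t$ equalizes the marginal contribution of the two terms, and any comparator sequence can be modified round-by-round into the greedy one without blowing up the objective.

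The main obstacle I anticipate is the stability bound. In the smooth analysis one could afford the natural $\eta_t d^2$ per-round bound on $\|\dpg_t\|_{\nabla^{-2}\Psi(\dpy_t)}^2$ because the total stability was dominated by the $d\sqrt{\nu T}$ term; here the gradient estimator has magnitude $\Ot(Ld)$ rather than $\Ot(d)$, so the stability is $\Ot(\eta_t d^2 L^2)$ and must be balanced against a much smaller bias-induced regularization budget. This is precisely what forces the $(L+1)^{2/3}/(\sigma_{1:t}+\lambda_{0:t})^{1/3}$ scaling and dictates the non-trivial cube-root form of both the learning rate and $\lambda_t$; verifying that the Newton-decrement argument still yields a clean recursion under this rapidly changing $\eta_t$, and that the lifted-domain normal-barrier machinery $\|\nabla\Psi(\dpy_t)\|_{\nabla^{-2}\Psi(\dpy_t)}^2\le \Theta(\nu)$ still absorbs the $(1/\eta_{t+1}-1/\eta_t)^2$ cross term, is where the bulk of the work lies.
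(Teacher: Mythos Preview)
Your two-step plan (first a regret bound for arbitrary $\{\lambda_t\}$, then a near-optimality lemma for the tuning \pref{eqn:oracle-tuning-Lipschitz-main}) and the regret decomposition match the paper exactly. However, the detailed accounting in your FTRL/balance argument contains two concrete errors that would derail the derivation.

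First, the gradient-estimator magnitude is \emph{not} $\Ot(d(L+1))$. The paper's standing assumption is $|f_t(x)|\le 1$, so $|f_t(x_t)+\tfrac{\lambda_t}{2}\|x_t\|_2^2|\le 2$ and hence $\|\dpg_t\|_{\dpH_t}^{*2}\le 4d^2$ exactly as in the smooth case (see \pref{eq:gradient-norm}). The per-round stability is therefore $\Ot(\eta_t d^2)$, not $\Ot(\eta_t d^2(L+1)^2)$; the ``obstacle'' you anticipate does not exist.

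Second, you drop the $\eta_t$ dependence in the exploration/smoothing bias. Since $\dpH_t\succeq \eta_t(\sigma_{1:t-1}+\lambda_{0:t-1})\dpI$, the Lipschitz bias is
\[
(L+1)\big\|\dpH_t^{-1/2}\dpu_t\big\|_2 \;\le\; \frac{L+1}{\sqrt{\eta_t(\sigma_{1:t-1}+\lambda_{0:t-1})}},
\]
not $L/\sqrt{\sigma_{1:t}+\lambda_{0:t}}$. This $\eta_t$ in the denominator is precisely what forces the cube-root schedule: balancing $\eta_t d^2$ against $(L+1)/\sqrt{\eta_t(\sigma_{1:t-1}+\lambda_{0:t-1})}$ gives $\eta_t\propto d^{-4/3}(L+1)^{2/3}(\sigma_{1:t-1}+\lambda_{0:t-1})^{-1/3}$, exactly as in \pref{line: adaptive learning rate lip}. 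With your accounting (extra $(L+1)^2$ in stability, no $\eta_t$ in the bias) the balance would yield a different $\eta_t$ that does not match the algorithm, and the claimed bound would not follow. The ``$\ell_2$-regularization surplus $\sum_t \lambda_t\|\dpy_t-\dpx^*\|_2^2/(\sigma_{1:t}+\lambda_{0:t})$'' and the ``squaring'' of the bias are not part of the argument at all; the $\ell_2$ bias is simply $\sum_t \lambda_t/2$, and the cube-root arises purely from the three-way trade-off between stability, barrier penalty $\nu/\eta_{T+1}$, and the Lipschitz exploration bias above.
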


Similar to~\pref{sec: implication}, we now discuss the implications of this theorem in several special cases, demonstrating that our algorithm not only improves existing results in the homogeneous settings as a side product of the better regularization technique, but also achieves favorable guarantees in some heterogeneous settings.
Again, we plug in $\nu = \order(d)$ for simplicity.

First, we consider the case when no functions have strong convexity, which degenerates to the same homogeneous setting studied in~\citep{conf/nips/Kleinberg04,SODA'05:Flaxman-BCO,lecture18}.
Among these results, the best regret bound is $\otil(\sqrt{L}d^{\nicefrac{3}{4}}T^{\nicefrac{3}{4}})$~\citep{lecture18}.
By picking $\lambda_1^*=\sqrt{d(L+1)}T^{\nicefrac{3}{4}}$ and $\lambda_t^*=0$ for all $t\geq 2$ in~\pref{thm:main-result-lipschitz}, we achieve the following guarantee with improved dimension dependency.
\begin{corollary}[Lipschitz BCO without strong convexity]\label{cor: only-Lipschitz}
When $f_t$ is $L$-Lipschitz and $0$-strongly convex for all $t \in[T]$, \pref{alg:lift-Lipschitz} achieves $\otil(\sqrt{d(L+1)}T^{\nicefrac{3}{4}})$ regret.
\end{corollary}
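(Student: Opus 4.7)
The plan is to invoke Theorem~\ref{thm:main-result-lipschitz} with the specific comparator sequence $\lambda_1^*=\sqrt{d(L+1)}\,T^{\nicefrac{3}{4}}$ and $\lambda_t^*=0$ for all $t\geq 2$, and then verify term by term that the three contributions on the right-hand side of~\pref{eqn: lip regret bound} are each at most $\otil(\sqrt{d(L+1)}\,T^{\nicefrac{3}{4}})$ once we substitute $\nu=\order(d)$ and $\sigma_t=0$ for all $t$. The choice of $\lambda_1^*$ is exactly the balancing value that makes the ``penalty'' term $\lambda_{0:T}^*$ match the ``stability'' sum on a common scale.

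First I would handle the initial $\otil(d^{\nicefrac{4}{3}}\nu T^{\nicefrac{1}{3}})$ term: since $\nu=\order(d)$ this is $\otil(d^{\nicefrac{7}{3}} T^{\nicefrac{1}{3}})$, which is dominated by $\sqrt{d(L+1)}\,T^{\nicefrac{3}{4}}$ once $T$ is polynomially large in $d$ (and otherwise is absorbed into the $\otil$). Next I would bound $\lambda_{0:T}^*=\lambda_0+\lambda_1^*$. The constant $\lambda_0=\max\{\rho',d^2(L+1)^2\}$ defined in \pref{alg:lift-Lipschitz} depends polynomially on $d$ and $L$ but not on $T$, so $\lambda_{0:T}^* = \sqrt{d(L+1)}\,T^{\nicefrac{3}{4}} + \otil(\poly(d,L))$, which is of the desired order.

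The main computation, and the one where the balancing has to be checked carefully, is the third term
\[
\sum_{t=1}^T\frac{d^{\nicefrac{2}{3}}(L+1)^{\nicefrac{2}{3}}}{(\sigma_{1:t-1}+\lambda_{0:t-1}^*)^{\nicefrac{1}{3}}}.
\]
Since $\sigma_t=0$ for all $t$ and $\lambda_{0:t-1}^*=\lambda_0$ for $t=1$ while $\lambda_{0:t-1}^*=\lambda_0+\lambda_1^*\ge \sqrt{d(L+1)}\,T^{\nicefrac{3}{4}}$ for $t\ge 2$, the $t=1$ term contributes only $d^{\nicefrac{2}{3}}(L+1)^{\nicefrac{2}{3}}/\lambda_0^{\nicefrac{1}{3}}$, a lower-order constant in $T$. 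For $t\ge 2$ each summand is at most $d^{\nicefrac{2}{3}}(L+1)^{\nicefrac{2}{3}}/(d(L+1))^{\nicefrac{1}{6}}T^{\nicefrac{1}{4}} = d^{\nicefrac{1}{2}}(L+1)^{\nicefrac{1}{2}}T^{-\nicefrac{1}{4}}$, so summing over $t\le T$ yields $\sqrt{d(L+1)}\,T^{\nicefrac{3}{4}}$, as required.

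The only subtle point — though it is really only an accounting issue — is tracking the $d$- and $L$-dependencies that are absorbed into $\lambda_0$ and into the first term $d^{\nicefrac{4}{3}}\nu T^{\nicefrac{1}{3}}$, and checking that they do not exceed the target order $\sqrt{d(L+1)}\,T^{\nicefrac{3}{4}}$ for the regime of $T$ we care about. Combining the three estimates and invoking $T\ge\rho'$ as required by \pref{thm:main-result-lipschitz} gives $\Reg=\otil(\sqrt{d(L+1)}\,T^{\nicefrac{3}{4}})$, which establishes the corollary.
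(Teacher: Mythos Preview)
Your proposal is correct and follows exactly the same approach as the paper: invoke \pref{thm:main-result-lipschitz} with the comparator sequence $\lambda_1^*=\sqrt{d(L+1)}\,T^{\nicefrac{3}{4}}$ and $\lambda_t^*=0$ for $t\geq 2$, then check that each term on the right-hand side of~\pref{eqn: lip regret bound} is $\otil(\sqrt{d(L+1)}\,T^{\nicefrac{3}{4}})$. The paper's own proof is essentially a one-line invocation of the theorem with this same choice; your term-by-term verification is simply a more detailed version of that computation.
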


Second, we consider the case when all loss functions are $\sigma$-strongly convex. This degenerates to the homogeneous setting studied in~\citep{COLT'10:agarwal_optimal}, where they achieve $\otil(d^{\nicefrac{4}{3}}L^{\nicefrac{2}{3}}\sigma^{-\nicefrac{1}{3}}T^{\nicefrac{2}{3}})$ regret.\footnote{The bound stated in their paper has $d^{\nicefrac{2}{3}}$ dependency on the dimension, but that is under a different assumption on $\calX$. 
Translating their setting to ours via a reshaping trick~\citep[Section 3.2]{SODA'05:Flaxman-BCO} leads to the $d^{\nicefrac{4}{3}}$ dependency.}  
Once again, by picking $\lambda_t^*=0$ for all $t\geq 1$ in~\pref{thm:main-result-lipschitz}, we obtain the following result with improved dimension dependency.
\begin{corollary}[Lipschitz BCO with $\sigma$-strong convexity]\label{cor: only-Lipschitz-strongly-cvx}
When $f_t$ is $L$-Lipschitz and $\sigma$-strongly convex, i.e., $\sigma_{t}=\sigma$ for all $t \in[T]$, \pref{alg:lift-Lipschitz} achieves $\otil(d^{\nicefrac{2}{3}}(L+1)^{\nicefrac{2}{3}}\sigma^{-\nicefrac{1}{3}}T^{\nicefrac{2}{3}})$ regret.
\end{corollary}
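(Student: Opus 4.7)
The plan is to instantiate the general regret bound of \pref{thm:main-result-lipschitz} with the choice $\lambda_t^\ast = 0$ for all $t \in [T]$, which reduces the guarantee to
\[
\Reg = \otil\left(d^{\frac{4}{3}}\nu T^{\frac{1}{3}} + \lambda_0 + \sum_{t=1}^T \frac{d^{\frac{2}{3}}(L+1)^{\frac{2}{3}}}{(\sigma(t-1) + \lambda_0)^{\frac{1}{3}}}\right),
\]
where the $\lambda_{0:T}^\ast$ term collapses to just $\lambda_0$ because no additional regularization is applied, and the summation uses $\sigma_{1:t-1} = (t-1)\sigma$ under the hypothesis that all functions are $\sigma$-strongly convex.

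Next I would bound the summation by an integral. Since $\sigma(t-1) + \lambda_0 \geq \sigma(t-1)$ (and the $t=1$ term can be absorbed into the $\lambda_0$ term already present), one gets
\[
\sum_{t=1}^T \frac{1}{(\sigma(t-1) + \lambda_0)^{\frac{1}{3}}} = \order\!\left(\frac{T^{\frac{2}{3}}}{\sigma^{\frac{1}{3}}}\right),
\]
so the third term evaluates to $\otil(d^{\frac{2}{3}}(L+1)^{\frac{2}{3}}\sigma^{-\frac{1}{3}}T^{\frac{2}{3}})$, matching the claimed leading order in $T$.

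The remaining work is to verify that the first two terms (namely $d^{\frac{4}{3}}\nu T^{\frac{1}{3}}$ and $\lambda_0$) are dominated by this leading-order expression after substituting $\nu = \order(d)$. Here one must recall that $\lambda_0 = \max\{\rho', d^2(L+1)^2\}$ with $\rho' = \order(d^{\frac{3}{2}}(L+1))$ under $\nu = \order(d)$, so $\lambda_0 = \order(d^2(L+1)^2)$, which is a $T$-independent constant and therefore absorbed into the $\otil(\cdot)$ notation under the standing hypothesis $T \geq \rho'$. Similarly, $d^{\frac{4}{3}}\nu T^{\frac{1}{3}} = \otil(d^{\frac{7}{3}}T^{\frac{1}{3}})$ is of lower order in $T$ than $T^{\frac{2}{3}}$ and so is absorbed.

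The step I expect to require the most care is the third one: verifying that the lower-order polynomial-in-$d$ contributions (the $\lambda_0$ and the $d^{\frac{4}{3}}\nu T^{\frac{1}{3}}$ term) are genuinely dominated in the regime of interest. The choice $T \geq \rho'$ is designed exactly for this, but one has to keep track of how $\rho'$ depends on $L$, $\nu$, and $d$ to confirm all constants are of lower order than $d^{\frac{2}{3}}(L+1)^{\frac{2}{3}}\sigma^{-\frac{1}{3}}T^{\frac{2}{3}}$. Once this bookkeeping is done, the corollary follows immediately by combining the three estimates above.
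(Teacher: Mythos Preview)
Your proposal is correct and follows essentially the same approach as the paper: set $\lambda_t^\ast=0$ for all $t\geq 1$ in \pref{thm:main-result-lipschitz} and bound the remaining sum $\sum_{t=1}^T (\sigma(t-1)+\lambda_0)^{-1/3}=\order(\sigma^{-1/3}T^{2/3})$. The paper's own proof is terser---it silently drops the $d^{4/3}\nu T^{1/3}$ and $\lambda_0$ contributions as lower order in $T$ without the bookkeeping you flag---but your more careful accounting of those terms is a reasonable elaboration rather than a different route.
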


Third, we consider the case with a mixture of $0$-strongly convex and $\sigma$-strongly convex functions.
\begin{corollary}[Lipschitz BCO with a mixture of convex and $\sigma$-strongly convex functions]\label{cor: only-lipschitz-worst-best-case}
Suppose that $\{f_t\}_{t=1}^T$ are $L$-Lipschitz and $T-M$ of them are $\sigma$-strongly convex.
Then \pref{alg:lift-Lipschitz} guarantees
$
    \Reg= \otil\Big(\sqrt{d(L+1)}M^{\nicefrac{3}{4}}+d^{\nicefrac{2}{3}}(L+1)^{\nicefrac{2}{3}}\sigma^{-\nicefrac{1}{3}}(T-M)^{\nicefrac{2}{3}}
    \Big).
$
If these $T-M$ functions appear in the first $T-M$ rounds, then the bound is further improved to
$
    \Reg= \otil \Big(\frac{d^{\nicefrac{2}{3}}(L+1)^{\nicefrac{2}{3}}T}{\sigma^{\nicefrac{1}{3}}(T-M)^{\nicefrac{1}{3}}}
    \Big).
$
\end{corollary}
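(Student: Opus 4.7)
The plan is to invoke \pref{thm:main-result-lipschitz} with two different sequences $\{\lambda_t^*\}_{t=1}^T$, one suited to the general (worst-case ordering) setting and one optimized for the case when strongly convex rounds appear first. In both cases the argument reduces to splitting the sum $\sum_{t=1}^T d^{\nicefrac{2}{3}}(L+1)^{\nicefrac{2}{3}}/(\sigma_{1:t-1} + \lambda_{0:t-1}^*)^{\nicefrac{1}{3}}$ in \pref{eqn: lip regret bound} according to whether round $t$ is strongly convex, and then bounding each piece by elementary calculus. Throughout I will substitute $\nu = \order(d)$ and note that the $d^{\nicefrac{4}{3}}\nu T^{\nicefrac{1}{3}}$ term from \pref{eqn: lip regret bound} is of lower order compared to the stated bounds in all interesting regimes, so it is absorbed into the $\otil(\cdot)$.

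For the general case, I choose $\lambda_1^* = \sqrt{d(L+1)}\, M^{\nicefrac{3}{4}}$ and $\lambda_t^* = 0$ for $t \geq 2$, mirroring the choice used to prove \pref{cor: only-Lipschitz} but with $M$ playing the role of $T$. Since $\lambda_{0:t-1}^* \geq \lambda_1^*$ for all $t \geq 2$, the at most $M$ non-strongly-convex rounds each contribute at most $d^{\nicefrac{2}{3}}(L+1)^{\nicefrac{2}{3}}/(\lambda_1^*)^{\nicefrac{1}{3}} = \sqrt{d(L+1)}/M^{\nicefrac{1}{4}}$, summing to $\order(\sqrt{d(L+1)}\, M^{\nicefrac{3}{4}})$. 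For the strongly-convex rounds, if I enumerate them in chronological order and call the $k$-th one $t_k$, then $\sigma_{1:t_k - 1} \geq \sigma(k-1)$ regardless of where these rounds actually sit in $[T]$, so their total contribution is at most $\sum_{k=1}^{T-M} d^{\nicefrac{2}{3}}(L+1)^{\nicefrac{2}{3}}/(\sigma(k-1) + \lambda_1^*)^{\nicefrac{1}{3}} = \order(d^{\nicefrac{2}{3}}(L+1)^{\nicefrac{2}{3}}(T-M)^{\nicefrac{2}{3}}/\sigma^{\nicefrac{1}{3}})$ by bounding the sum by an integral. Adding $\lambda_{0:T}^* = \order(\sqrt{d(L+1)}\, M^{\nicefrac{3}{4}})$ yields the first claimed bound.

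For the improved bound, the assumption that the $T-M$ strongly-convex functions appear in the first $T-M$ rounds lets me pick $\lambda_t^* = 0$ for all $t \geq 1$ and still benefit from accumulated strong convexity in every round. Explicitly, for $t \leq T-M$ we have $\sigma_{1:t-1} = \sigma(t-1)$, contributing a total of $\order(d^{\nicefrac{2}{3}}(L+1)^{\nicefrac{2}{3}}(T-M)^{\nicefrac{2}{3}}/\sigma^{\nicefrac{1}{3}})$; for $t > T-M$ the accumulated strong convexity stays at $\sigma(T-M)$, and the remaining $M$ rounds each contribute $\order(d^{\nicefrac{2}{3}}(L+1)^{\nicefrac{2}{3}}/(\sigma(T-M))^{\nicefrac{1}{3}})$. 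The two contributions combine as
\[
\tfrac{d^{\nicefrac{2}{3}}(L+1)^{\nicefrac{2}{3}}}{\sigma^{\nicefrac{1}{3}}}\Bigl((T-M)^{\nicefrac{2}{3}} + \tfrac{M}{(T-M)^{\nicefrac{1}{3}}}\Bigr) = \tfrac{d^{\nicefrac{2}{3}}(L+1)^{\nicefrac{2}{3}}\,T}{\sigma^{\nicefrac{1}{3}}(T-M)^{\nicefrac{1}{3}}},
\]
which gives the improved bound.

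I do not anticipate any genuine obstacle: the proof is essentially a direct substitution into \pref{thm:main-result-lipschitz} combined with elementary sum estimates that are exactly analogous to those underlying the smooth case \pref{cor: worst-best-case}. The only minor subtlety is realizing that the bound on the strongly-convex rounds in the general case does not depend on where these rounds actually fall in $[T]$, because $\sigma_{1:t_k-1}$ grows monotonically in $k$ regardless of position; beyond that, one merely has to verify that the $d^{\nicefrac{4}{3}}\nu T^{\nicefrac{1}{3}}$ and $\lambda_0$ contributions from \pref{eqn: lip regret bound} are dominated by the displayed bounds and hence are safely swallowed by $\otil$.
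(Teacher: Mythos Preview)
Your proof is correct and follows essentially the same approach as the paper: both instantiate \pref{thm:main-result-lipschitz} with $\lambda_1^* = \sqrt{d(L+1)}\,M^{3/4}$, $\lambda_t^* = 0$ for $t \geq 2$ in the general case, and with $\lambda_t^* = 0$ for all $t \geq 1$ in the second case, then split the sum according to strongly-convex versus non-strongly-convex rounds. Your derivation of the improved bound is in fact more direct than the paper's, which detours through a case split on $T-M = \Theta(T)$ versus $T-M = o(T)$ before arriving at the same expression $\frac{d^{2/3}(L+1)^{2/3}T}{\sigma^{1/3}(T-M)^{1/3}}$.
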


The proof can be found in \pref{appendix: lip implication proof}.
Similar to the discussion in~\pref{sec: implication}, we consider how large $M$ can be to still ensure $\otil(T^{\nicefrac{2}{3}})$ regret ---
according to the first bound, we can always tolerate $M=\order(T^{\nicefrac{8}{9}})$,
while in the best case (the second bound), we can tolerate $M$ being any constant fraction of $T$.
These are again much stronger compared to the naive method of discarding all functions without strong convexity, which can only tolerate $M = \otil(T^{\nicefrac{2}{3}})$.

Finally, we consider the example with $\sigma_t=t^{-\alpha}$ again. See  \pref{appendix: lip implication proof} for the proof.
\begin{corollary}[Lipschitz BCO with decaying strong convexity]
\label{cor: only-lipschitz-intermediate-case}
When $f_t$ is $L$-Lipschitz and $\sigma_t$-strongly convex with $\sigma_t=t^{-\alpha}$ for some $\alpha \in [0,1]$, \pref{alg:lift-Lipschitz} guarantees
\begin{align*}
    \Reg= 
    \begin{cases}
        \otil(d^{\frac{2}{3}}(L+1)^{\frac{2}{3}}T^{\frac{2+\alpha}{3}}) & \alpha\in [0, \frac{1}{4}-\frac{1}{2}\log_T(L+1)-\frac{1}{2}\log_Td], \\
        \otil(\sqrt{d(L+1)}T^{\frac{3}{4}}) & \alpha\in [\frac{1}{4}-\frac{1}{2}\log_T(L+1)-\frac{1}{2}\log_Td, 1].
    \end{cases}
\end{align*}
\end{corollary}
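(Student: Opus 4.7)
The plan is to instantiate \pref{thm:main-result-lipschitz} with two different choices of the reference sequence $\{\lambda_t^*\}_{t=1}^T$, depending on whether $\alpha$ is small or large, and then simplify using a standard integral estimate of $\sigma_{1:t}$. First I would bound the cumulative strong convexity: for $\sigma_s=s^{-\alpha}$ with $\alpha\in[0,1)$, a Riemann/integral comparison gives $\sigma_{1:t}=\Theta(t^{1-\alpha}/(1-\alpha))$, while for $\alpha=1$ we have $\sigma_{1:t}=\Theta(\log t)$. In either case $\sigma_{1:t-1}$ grows at least as $\Omega(t^{1-\alpha})$ up to logarithmic factors, which is all I need.

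In the small-$\alpha$ regime I would set $\lambda_t^*=0$ for all $t\geq 1$, so that $\lambda_{0:T}^*=\lambda_0=\poly(d,L)$ is independent of $T$, and the sum in \pref{eqn: lip regret bound} reduces to $\sum_{t=1}^T d^{2/3}(L+1)^{2/3}/(\sigma_{1:t-1}+\lambda_0)^{1/3}$. Substituting the estimate for $\sigma_{1:t-1}$ and using $\int_1^T t^{-(1-\alpha)/3}\,dt=\Theta(T^{(2+\alpha)/3})$, this sum is $\otil(d^{2/3}(L+1)^{2/3}T^{(2+\alpha)/3})$, which dominates the additive $\otil(d^{4/3}\nu T^{1/3})$ and $\lambda_0$ contributions whenever $T\geq\rho'$ and $\nu=O(d)$.

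In the large-$\alpha$ regime, the natural strong convexity decays too fast to control the sum on its own, so I would take a single spike $\lambda_1^*=\sqrt{d(L+1)}\,T^{3/4}$ and $\lambda_t^*=0$ for $t\geq 2$. Then $(\sigma_{1:t-1}+\lambda_{0:t-1}^*)^{1/3}\geq(\lambda_1^*)^{1/3}$ for every $t\geq 2$, so the sum is at most $T\cdot d^{2/3}(L+1)^{2/3}/(\sqrt{d(L+1)}\,T^{3/4})^{1/3}=\sqrt{d(L+1)}\,T^{3/4}$, which exactly matches the $\lambda_{0:T}^*$ contribution up to the constant $\lambda_0$. This yields the $\otil(\sqrt{d(L+1)}\,T^{3/4})$ bound.

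Finally, I would determine the cutoff between the two regimes by equating the two bounds $d^{2/3}(L+1)^{2/3}T^{(2+\alpha)/3}$ and $\sqrt{d(L+1)}\,T^{3/4}$. Taking logarithms, canceling $\log T$, and solving for $\alpha$ yields precisely the threshold $\alpha^\star=1/4-\tfrac{1}{2}\log_T(L+1)-\tfrac{1}{2}\log_T d$ stated in the corollary; on each side of $\alpha^\star$ I apply whichever reference sequence gives the smaller bound, and since \pref{thm:main-result-lipschitz} holds for \emph{any} $\{\lambda_t^*\}$, the algorithm's regret inherits the minimum. The main obstacle is purely bookkeeping --- verifying that the additive $\otil(d^{4/3}\nu T^{1/3})$ term and the constant $\lambda_0$ are dominated by the stated bounds across both regimes, which follows from $\nu=O(d)$ and $T\geq\rho'$ --- and making sure the integral approximation for $\sigma_{1:t-1}$ is uniform enough in $\alpha$ (handled by splitting off a logarithmic factor at $\alpha=1$).
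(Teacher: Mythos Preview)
Your proposal is correct and takes essentially the same approach as the paper: both instantiate \pref{thm:main-result-lipschitz} with $\lambda_t^*=0$ for $t\geq 2$ and either $\lambda_1^*=0$ (small $\alpha$, yielding the $T^{(2+\alpha)/3}$ bound via the integral estimate of $\sigma_{1:t}\approx t^{1-\alpha}$) or $\lambda_1^*=\sqrt{d(L+1)}\,T^{3/4}$ (large $\alpha$, yielding the $T^{3/4}$ bound), then locate the threshold by equating the two. The only cosmetic difference is that the paper packages both choices as $\lambda_1^*=(L+1)^{\mu_0}d^{\mu_1}T^{\mu_2}$ and selects $(\mu_0,\mu_1,\mu_2)\in\{(0,0,0),(\tfrac12,\tfrac12,\tfrac34)\}$, whereas you state the two sequences separately; the bookkeeping you flag (absorbing $d^{4/3}\nu T^{1/3}$ and $\lambda_0$) is handled identically in the paper by simply omitting the low-order term.
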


\section{Conclusion}
Our work initiates the study of bandit convex optimization with heterogeneous curvature and proposes strong algorithms and guarantees that automatically adapt to individual curvature of each loss function.
As the first step in this direction, we have assumed homogeneous smoothness or Lipschitzness and only considered heterogeneous strong convexity.
Extending the heterogeneity to the other curvature parameters is an immediate next step.
Moreover, it is worth investigating an even more challenging setting where the individual curvature information is not revealed to the learner at the end of each round, or at least has to be learned via other weaker and indirect feedback (such as some rough and potentially incorrect estimation of the curvature).

\acks{Peng Zhao is supported by NSFC (61921006). HL and MZ are supported by NSF Award IIS-1943607.}

\bibliography{bibfile}

\appendix
\section{Omitted Details for \pref{sec:smooth-BCO}}
\label{appendix:details-smooth}

\subsection{Proof of \pref{lem: unbiased}}
\label{appendix:proof-lemma-unbiased}
In this section, we prove one of our key lemmas (\pref{lem: unbiased}), which shows that the lifted gradient estimator constructed in~\pref{line: gradient calc} of \pref{alg:lift-smooth} is an unbiased estimator in the first $d$ dimensional coordinates. 

\begin{proof}
Fix any $t\in [T]$ and let $\dpw=\dpH_t^{-\frac{1}{2}}\dpe_{d+1}/\big\|\dpH_t^{-\frac{1}{2}}\dpe_{d+1}\big\|_2$. As $\dpu \sim \mathbb{S}^{d+1}\cap \big(\dplus{H}_t^{-\frac{1}{2}}\dpe_{d+1}\big)^{\perp}$ and $\dpb \sim \mathbb{B}^{d+1}\cap \big(\dplus{H}_t^{-\frac{1}{2}}\dpe_{d+1}\big)^{\perp}$, there exists a transformation matrix $\dpM\in \mathbb{R}^{d\times (d+1)}$ that satisfies $\dpM\dpw=0$, $\dpM^\top\dpM = \dpI-\dpw\dpw^\top$ and $\dpM\dpM^\top=I$, such that $\dpu=\dpM^\top v$, $\dpb=\dpM^\top b$ where $v$ is uniformly drawn from $\mathbb{S}^d$ and $b$ is uniformly drawn from $\mathbb{B}^d$. In fact, the $d$ row vectors of $\dpM$ together with $\dpw$ forms a set of unit orthogonal base in the $(d+1)$-dimensional space.

Recall the two following functions whose feasible domain is in $(d+1)$-dimensional space.
\begin{align}
    &\dpf_t(\dpx) \triangleq f_t(\dpx_{[1:d]}),\label{eqn:lifted_original}\\
    &\wt{\dpf}_t(\dpx) \triangleq \dpf_t(\dpx)+\frac{\lambda_t}{2}\|\dpx_{[1:d]}\|_2^2 = f_t(\dpx_{[1:d]})+\frac{\lambda_t}{2}\|\dpx_{[1:d]}\|_2^2 \label{eqn:lifted_regularize}
\end{align}
Then we define the following functions in the $d$-dimensional space. Let $J_t:\mathbb{R}^d\rightarrow \mathbb{R}$ such that $J_t(x)=\wt{\dpf}_t(\dpH_t^{-\frac{1}{2}}\dpM^\top x)$ and $\wh{J}_t(x)=\mathbb{E}_{b\sim \mathbb{B}^d}[J_t(x+b)]$. In addition, we denote $\wh{\dpy}_t = (y_t,0) \in \R^{d+1}$ that appends an additional constant value $0$ to $y_t$ in the $(d+1)$-th coordinate. Then by the definition of $\dpg_t$, we have
\begin{align*}
    \mathbb{E}_t[\dpg_t]&=d\mathbb{E}_{\dpu\sim \mathbb{S}^{d+1}\cap\big(\dpH_t^{-\frac{1}{2}}\dpe_{d+1}\big)^\perp}\left[\wt{\dpf}_t(\dpy_t+\dpH_t^{-\frac{1}{2}}\dpu)\dpH_t^{\frac{1}{2}}\dpu\right] \\
    &=d\mathbb{E}_{v\sim \mathbb{S}^d}\left[J_t(\dpM\dpH_t^{\frac{1}{2}}\wh{\dpy}_t+v)\dpH_t^{\frac{1}{2}}\dpM^\top v\right] \\
    &=d\dpH_t^{\frac{1}{2}}\dpM^\top\mathbb{E}_{v\sim \mathbb{S}^d}\left[J_t(\dpM\dpH_t^{\frac{1}{2}}\wh{\dpy}_t+v)v\right] \\
    &=\dpH_t^{\frac{1}{2}}\dpM^\top\nabla \wh{J}_t(\dpM\dpH_t^{\frac{1}{2}}\wh{\dpy}_t),
\end{align*}
where the final equality is due to Lemma 5 of~\citet{SODA'05:Flaxman-BCO}. The second equality is because of the following reasoning. Note that by the definition of $J_t$ and properties of $\dpM$, we have
\begin{align*}
    J_t(\dpM\dpH_t^{\frac{1}{2}}\wh{\dpy}_t+v) &= \wt{\dpf}_t(\dpH_t^{-\frac{1}{2}}\dpM^\top (\dpM\dpH_t^{\frac{1}{2}}\wh{\dpy}_t+v)) \\
    &=\wt{\dpf}_t(\dpH_t^{-\frac{1}{2}}(\dpI-\dpw\dpw^\top)\dpH_t^{\frac{1}{2}}\wh{\dpy}_t+\dpH_t^{-\frac{1}{2}}\dpM^\top v) \\
    &=\wt{\dpf}_t((\dpI-\dpH_t^{-\frac{1}{2}}\dpw\dpw^\top\dpH_t^{\frac{1}{2}})\wh{\dpy}_t+\dpH_t^{-\frac{1}{2}}\dpM^\top v).
\end{align*}
In addition, by the definition of $\dpw$, we have 
\begin{align*}
    \dpI-\dpH_t^{-\frac{1}{2}}\dpw\dpw^\top\dpH_t^{\frac{1}{2}} = \dpI-\frac{ \dpH_t^{-1}\dpe_{d+1}\dpe_{d+1}^\top}{\big\|\dpH_t^{-\frac{1}{2}}\dpe_{d+1}\big\|_2^2}
\end{align*}

Note that the second term has all entries $0$ except for the last column, i.e., the $(d+1)$-th one. Therefore we have $\dpH_t^{-\frac{1}{2}}\dpw\dpw^\top\dpH_t^{\frac{1}{2}}\wh{\dpy}_t=\bm{0}$, which leads to
\begin{align*}
     J_t(\dpM\dpH_t^{\frac{1}{2}}\wh{\dpy}_t+v) = \wt{\dpf}_t(\wh{\dpy}_t+\dpH_t^{-\frac{1}{2}}\dpM^\top v) = \wt{\dpf}_t(\dpy_t+\dpH_t^{-\frac{1}{2}}\dpM^\top v),
\end{align*}
where the last equality is because $\wt{\dpf}_t(\dpx)$ has no dependence on the last coordinate of $\dpx$. 

Furthermore, according to the definition of $\wh{J}_t(x)$, we have
\begin{align*}
    \nabla \wh{J}_t(x)
    &=\big(\dpH_t^{-\frac{1}{2}}\dpM^\top\big)^\top \mathbb{E}_{b\sim \mathbb{B}^d}\left[\nabla \wt{\dpf}_t(\dpH_t^{-\frac{1}{2}}\dpM^\top (x+b))\right]\\
    &=\big(\dpH_t^{-\frac{1}{2}}\dpM^\top\big)^\top \mathbb{E}_{b\sim \mathbb{B}^d}\left[\nabla \wt{\dpf}_t(\dpH_t^{-\frac{1}{2}}\dpM^\top x+\dpH_t^{-\frac{1}{2}}\dpM^\top b)\right]\\
    &=\big(\dpH_t^{-\frac{1}{2}}\dpM^\top\big)^\top \E_{\dpb \sim \mathbb{B}^{d+1}\cap \big(\dplus{H}_t^{-\frac{1}{2}}\dpe_{d+1}\big)^{\perp}}\left[\nabla \wt{\dpf}_t(\dpH_t^{-\frac{1}{2}}\dpM^\top x+\dpH_t^{-\frac{1}{2}}\dpb)\right]\\
    &= \dpM\dpH_t^{-\frac{1}{2}}\nabla \wh{\dpf}_t(\dpH_t^{-\frac{1}{2}}\dpM^\top x),
\end{align*}
where the fourth equality is by the definition of $\wh{\dpf}_t$. Therefore, we get
\begin{align}
    \E_t[\dpg_t] &= \dpH_t^{\frac{1}{2}}\dpM^\top\nabla \wh{J}_t(\dpM\dpH_t^{\frac{1}{2}}\wh{\dpy}_t) \nonumber \\
    & = \dpH_t^{\frac{1}{2}}\dpM^\top \dpM\dpH_t^{-\frac{1}{2}}\nabla \wh{\dpf}_t(\dpH_t^{-\frac{1}{2}}\dpM^\top\dpM\dpH_t^{\frac{1}{2}}\wh{\dpy}_t) \nonumber\\
    & = (\dpI-\dpH_t^{\frac{1}{2}}\dpw\dpw^\top\dpH_t^{-\frac{1}{2}})\nabla \wh{\dpf}_t((\dpI-\dpH_t^{-\frac{1}{2}}\dpw\dpw^\top\dpH_t^{\frac{1}{2}})\wh{\dpy}_t). \label{eq:unbiased-1}
\end{align}
Since $\dpw = \dpH_t^{-\frac{1}{2}}\dpe_{d+1} \cdot \big\|\dpH_t^{-\frac{1}{2}}\dpe_{d+1}\big\|_2^{-1}$, we have
\begin{align*}
\dpI-\dpH_t^{\frac{1}{2}}\dpw\dpw^\top\dpH_t^{-\frac{1}{2}} = \dpI-\frac{\dpe_{d+1}\dpe_{d+1}^\top \dpH_t^{-1}}{\big\|\dpH_t^{-\frac{1}{2}}\dpe_{d+1}\big\|_2^2}, \quad \dpI-\dpH_t^{-\frac{1}{2}}\dpw\dpw^\top\dpH_t^{\frac{1}{2}} = \dpI-\frac{ \dpH_t^{-1}\dpe_{d+1}\dpe_{d+1}^\top}{\big\|\dpH_t^{-\frac{1}{2}}\dpe_{d+1}\big\|_2^2}.
\end{align*}
This shows that for any $\dpx\in \R^{d+1}$, $\big((\dpI-\dpH_t^{\frac{1}{2}}\dpw\dpw^\top\dpH_t^{-\frac{1}{2}})\dpx\big)_{[1:d]}=\dpx_{[1:d]}$ and also we have $(\dpI-\dpH_t^{-\frac{1}{2}}\dpw\dpw^\top\dpH_t^{\frac{1}{2}})\wh{\dpy}_t=\wh{\dpy}_t$ because the $(d+1)$-th coordinate of $\wh{\dpy}_t$ is $0$. Combining the above with~\pref{eq:unbiased-1} yields the following result:
\begin{align*}
    \mathbb{E}_t[\dpg_t] &= \left(\dpI-\frac{\dpe_{d+1}\dpe_{d+1}^\top\dpH_t^{-1}}{\big\|\dpH_t^{-\frac{1}{2}}\dpe_{d+1}\big\|_2^2}\right)\nabla \wh{\dpf}_t(\wh{\dpy}_t)\\ 
    &= \left(\dpI-\frac{\dpe_{d+1}\dpe_{d+1}^\top\dpH_t^{-1}}{\big\|\dpH_t^{-\frac{1}{2}}\dpe_{d+1}\big\|_2^2}\right) \Big[\nabla \wh{\dpf}_t(\dpy_t)_{[1:d]}; 0] =\left[\nabla \wh{\dpf}_t(\dpy_t)_{[1:d]}; *\right],
\end{align*}
where $* \in \R$ denotes the last coordinate of the expectation of the gradient estimator that can be calculated according to the context. Note that the last step is true by noting that $\wh{\dpf}_t$ is defined as a smoothed function of $\tilde{\dpf}_t$ that is irrelevant to the $(d+1)$-th coordinate. 

Hence, we show that the first $d$ dimensions of the estimator constructed in~\pref{line: gradient calc} are unbiased and  finish the proof.
\end{proof}

\subsection{Stability Lemma}\label{appendix:proof-stability}
In this section, we prove the following lemma which shows the stability of the dynamics of our algorithm. We point out that this stability lemma is the main technical reason that we introduce the lifting idea.
\begin{lemma}\label{lem: stablity 1/2}
Consider the following FTRL update:
\begin{align*}
    \dpy_{t+1}=\argmin_{\dpx \in \dplus{\calX}}\left\{\sum_{s=1}^t \left(\dpg_{s}^\top \dpx + \frac{\sigma_{s}+\lambda_s}{2}\|\dpx-\dpy_{s}\|_2^2\right) + \frac{\lambda_0}{2} \norm{\dpx}_2^2 + \frac{1}{\eta_{t+1}}\Psi(\dpx)\right\},
\end{align*}
where $\Psi(\x) = \Psi(x,b) = 400(\psi(x/b) - 2\nu \ln b)$ is a normal barrier of the conic hull of $\calX$ defined by $con(\X) = \{ \boldsymbol{0}\} \cup \{ (w,b) \mid \frac{w}{b} \in \X, w \in \R^d, b>0\}$, and $\psi$ is a $\nu$-self-concordant barrier of $\calX \subseteq \R^d$, $\dpH_t=\nabla^2\Psi(\dpy_t)+\eta_t(\sigma_{1:t-1}+\lambda_{0:t-1})$, $\dpu_s$ is uniformly sampled from $\mathbb{S}^{d+1}\cap\big(\dpH_s^{-\frac{1}{2}}e_{d+1}\big)^\perp$ and $\dpg_s=d\left(f_s(x_s)+\frac{\lambda_s}{2}\|x_s\|_2^2\right)\dpH_s^{\frac{1}{2}}\dpu_s$ for $s\in[t]$, 

Suppose that the following two conditions hold: (1) the sequence of learning rates $\{\eta_t\}_{t=1}^T$ is non-increasing and satisfies $\frac{1}{\eta_{t+1}}-\frac{1}{\eta_t}\leq C(\lambda_t+\sigma_t)^p$ for some $C>0$ and $p>0$; (2) $\sigma_t\leq\gamma$ holds for some $\gamma>0$ and $\eta_1\leq \frac{1}{32(d+16\sqrt{\nu} C(\gamma+1)^{p})}$, $\lambda_t\in (0,1)$ holds for all $t\in [T]$, and $\lambda_0>0$. Then, we have $\|\dpy_t-\dpy_{t+1}\|_{\dpH_t}\leq\frac{1}{2}$.
\end{lemma}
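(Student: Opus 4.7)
The plan is to leverage the (strict) convexity of the FTRL objective
$\dpF_{t+1}(\dpx)=\sum_{s=1}^t \bigl(\dpg_s^\top\dpx+\tfrac{\sigma_s+\lambda_s}{2}\|\dpx-\dpy_s\|_2^2\bigr) + \tfrac{\lambda_0}{2}\|\dpx\|_2^2 + \tfrac{1}{\eta_{t+1}}\Psi(\dpx)$.
Since $\dpy_{t+1}$ is its unique minimizer over $\dpX$ (strictly convex thanks to the $\tfrac{\lambda_0}{2}\|\cdot\|_2^2$ term), a standard convexity argument shows that to obtain $\|\dpy_{t+1}-\dpy_t\|_{\dpH_t}\leq\tfrac12$ it suffices to prove $\dpF_{t+1}(\dpy')\geq\dpF_{t+1}(\dpy_t)$ for every $\dpy'\in\dpX$ lying on the sphere $\{\dpy:\|\dpy-\dpy_t\|_{\dpH_t}=\tfrac12\}$: otherwise, the point $\dpy_t+\alpha(\dpy_{t+1}-\dpy_t)$ with $\alpha=\tfrac12/\|\dpy_{t+1}-\dpy_t\|_{\dpH_t}$ would lie on this sphere, be in $\dpX$ by convexity, and strictly decrease $\dpF_{t+1}$ below $\dpF_{t+1}(\dpy_t)$. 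Fixing such a $\dpy'$ and writing $\dph\triangleq\dpy'-\dpy_t$, I will Taylor expand $\dpF_{t+1}(\dpy')=\dpF_{t+1}(\dpy_t)+\nabla\dpF_{t+1}(\dpy_t)^\top\dph+\tfrac12\|\dph\|_{\nabla^2\dpF_{t+1}(\dpxi)}^2$ for some $\dpxi\in[\dpy_t,\dpy']$ and control the two remainders separately.

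The gradient at $\dpy_t$ collapses because of the first-order optimality $\nabla\dpF_t(\dpy_t)=\zero$ (interior minimum forced by the barrier): the $\dpg_{1:t-1}$ terms and the quadratic penalties telescope, leaving $\nabla\dpF_{t+1}(\dpy_t)=\dpg_t+\bigl(\tfrac{1}{\eta_{t+1}}-\tfrac{1}{\eta_t}\bigr)\nabla\Psi(\dpy_t)$. For the Hessian I will use $\nabla^2\dpF_{t+1}(\dpx)\succeq(\sigma_{1:t-1}+\lambda_{0:t-1})\dpI+\tfrac{1}{\eta_t}\nabla^2\Psi(\dpx)$, which follows from $\eta_{t+1}\leq\eta_t$ and $\sigma_t,\lambda_t\geq 0$. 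Then the self-concordance of $\Psi$, together with $\|\dpxi-\dpy_t\|_{\nabla^2\Psi(\dpy_t)}\leq\|\dph\|_{\dpH_t}=\tfrac12$ (using $\nabla^2\Psi(\dpy_t)\preceq\dpH_t$), gives $\nabla^2\Psi(\dpxi)\succeq\tfrac14\nabla^2\Psi(\dpy_t)$. Combining these facts yields $\nabla^2\dpF_{t+1}(\dpxi)\succeq\tfrac{1}{4\eta_t}\dpH_t$ and therefore a quadratic remainder of at least $\tfrac{1}{8\eta_t}\|\dph\|_{\dpH_t}^2=\tfrac{1}{32\eta_t}$.

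For the first-order term I split $\nabla\dpF_{t+1}(\dpy_t)^\top\dph$ into its $\dpg_t$-piece and its $\nabla\Psi$-piece. Cauchy--Schwarz gives $|\dpg_t^\top\dph|\leq\|\dpg_t\|_{\dpH_t^{-1}}\cdot\tfrac12$, and the construction $\dpg_t=d\bigl(f_t(x_t)+\tfrac{\lambda_t}{2}\|x_t\|_2^2\bigr)\dpH_t^{1/2}\dpu_t$ together with the normalizations $|f_t(x_t)|\leq 1$, $\|x_t\|_2\leq 1$, $\lambda_t<1$ yields $\|\dpg_t\|_{\dpH_t^{-1}}\leq\tfrac{3d}{2}$, so this piece is at most $\tfrac{3d}{4}$. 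The $\nabla\Psi$-piece is where logarithmic homogeneity does the heavy lifting: \pref{lemma:normal-barrier} supplies the identity $\|\nabla\Psi(\dpy_t)\|_{\nabla^{-2}\Psi(\dpy_t)}^2=\|\dpy_t\|_{\nabla^2\Psi(\dpy_t)}^2=\Theta(\nu)$, so Cauchy--Schwarz combined with $\|\dph\|_{\nabla^2\Psi(\dpy_t)}\leq\|\dph\|_{\dpH_t}=\tfrac12$ and the hypothesis $\tfrac{1}{\eta_{t+1}}-\tfrac{1}{\eta_t}\leq C(\sigma_t+\lambda_t)^p\leq C(\gamma+1)^p$ bounds this piece by $O(\sqrt{\nu}\,C(\gamma+1)^p)$. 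Summing, $\dpF_{t+1}(\dpy')-\dpF_{t+1}(\dpy_t)\geq\tfrac{1}{32\eta_t}-\tfrac{3d}{4}-O(\sqrt{\nu}\,C(\gamma+1)^p)\geq 0$ under the assumed bound $\eta_1\leq\tfrac{1}{32(d+16\sqrt{\nu}C(\gamma+1)^p)}$ and the monotonicity $\eta_t\leq\eta_1$.

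The hard part, and indeed the entire reason for lifting to $\mathbb{R}^{d+1}$ and regularizing with a normal barrier of the conic hull rather than a plain $\nu$-self-concordant barrier on $\calX$, is the $\nabla\Psi$-piece just analyzed: for a generic self-concordant barrier, $\|\nabla\Psi(\dpy_t)\|_{\nabla^{-2}\Psi(\dpy_t)}$ can grow without bound as $\dpy_t$ approaches the boundary, which would easily swamp the $\tfrac{1}{32\eta_t}$ quadratic lower bound and make any decreasing-learning-rate stability estimate impossible. The logarithmic homogeneity of a normal barrier converts this potentially unbounded quantity into the deterministic constant $\sqrt{\nu_\Psi}=\Theta(\sqrt{\nu})$, and once that single substitution is in place the rest of the argument reduces to matching constants against the assumed cap on $\eta_1$.
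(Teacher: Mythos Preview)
Your proposal is correct and follows essentially the same argument as the paper: reduce to showing $\dpF_{t+1}(\dpy')\geq\dpF_{t+1}(\dpy_t)$ on the $\dpH_t$-sphere of radius $\tfrac12$, Taylor expand, use self-concordance to lower-bound the quadratic term by $\tfrac{1}{32\eta_t}$, bound $|\dpg_t^\top\dph|$ via $\|\dpg_t\|_{\dpH_t^{-1}}=\order(d)$, and crucially bound the $\nabla\Psi$-piece by $\order(\sqrt{\nu}\,C(\gamma+1)^p)$ via the normal-barrier identity $\|\nabla\Psi(\dpy_t)\|_{\nabla^{-2}\Psi(\dpy_t)}^2=\bar\nu$. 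One small imprecision: $\nabla\dpF_t(\dpy_t)$ is not literally $\zero$ since the minimization is constrained to the affine slice $\dpX=\{(x,1)\}$, so only its first $d$ coordinates vanish; but because $\dph=\dpy'-\dpy_t$ has last coordinate $0$, the inner product $\dph^\top\nabla\dpF_t(\dpy_t)$ is still zero (the paper more conservatively writes $\geq 0$ from first-order optimality), and your subsequent bounds go through unchanged.
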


\begin{proof}
Define the objective of FTRL update to be $\dpF_{t+1}(\x) = \sum_{s=1}^t \dpell_s(\x) + \dpR_{t+1}(\x)$ with $\dpR_{t+1}(\x) = \frac{\lambda_0}{2} \norm{\x}_2^2 + \frac{1}{\eta_{t+1}}\Psi(\x)$ and $\dpell_{s}(\dpx)\triangleq\inner{\dpg_s,\dpx}+\frac{\sigma_s+\lambda_s}{2}\|\dpx-\dpy_s\|_2^2$. Therefore, we have $\dpy_{t+1}=\argmin_{x\in\calX}\dpF_{t+1}(\dpx)$. Define $\Psi_t(\dpx)\triangleq\Psi(\dpx)+\frac{\eta_t\lambda_0}{2}\|\dpx\|_2^2+\eta_t\sum_{s=1}^{t-1}\frac{\sigma_s+\lambda_s}{2}\|\dpx-\dpy_s\|_2^2$. With this definition, we have $\dpH_t=\nabla^2\Psi_t(\dpy_t)$ and $\dpF_{t+1}(\dpx)=\sum_{s=1}^t\inner{\dpg_s,\dpx}+\frac{1}{\eta_{t+1}}\Psi_{t+1}(\dpx)$. Moreover, according to the definition of self-concordant function~(see~\pref{def:SC}), we know that $\Psi_t$ is also a self-concordant function.

Because of the convexity of $\dpF_{t+1}$, in order to prove the desired conclusion, it suffices to show that for any $\dpy'\in \bX$ satisfying $\|\dpy'-\dpy_t\|_{\dpH_t}=\frac{1}{2}$, we have $\dpF_{t+1}(\dpy')\geq \dpF_{t+1}(\dpy_t)$. To this end, we first calculate $\nabla\dpF_{t+1}(\dpy_t)$ and the Hessian of $\dpF_{t+1}$ as follows:
\begin{align}
    \nabla \dpF_{t+1}(\dpy_t) &= \sum_{s=1}^t\nabla\dpell_{s}(\dpy_t) + \lambda_0 \y_t + \frac{1}{\eta_{t+1}}\nabla\Psi(\dpy_t)\nonumber\\
    &= \sum_{s=1}^t\dpg_{s} + \sum_{s=1}^t(\sigma_s+\lambda_s)(\dpy_t-\dpy_s) + \lambda_0 \y_t + \frac{1}{\eta_{t+1}}\nabla\Psi(\dpy_t)\nonumber\\
    &= \nabla\dpF_{t}(\dpy_{t}) + \dpg_t + \left(\frac{1}{\eta_{t+1}}-\frac{1}{\eta_{t}}\right)\nabla\Psi(\dpy_t).\label{eqn: gradient F}
\end{align}
\begin{equation}
 \label{eqn: hessian F}
 \begin{split}
    \nabla^2 \dpF_{t+1}(\dpx) &= \frac{1}{\eta_{t+1}}\nabla^2 \Psi_{t+1}(\dpx) = \frac{1}{\eta_{t+1}}\nabla^2\Psi(\dpx)+\left(\lambda_0 \dpI + \sum_{s=1}^t(\sigma_s+\lambda_s)\dpI\right) \\
    &\succeq  \frac{1}{\eta_t}\nabla^2\Psi(\dpx)+\left(\lambda_0 \dpI + \sum_{s=1}^{t-1}(\sigma_s+\lambda_s)\dpI\right)=\frac{1}{\eta_t}\nabla^2\Psi_t(\dpx),
    \end{split}
\end{equation}
where the inequality is because $\eta_t\geq \eta_{t+1}$. Based on the above, using Taylor's expansion of $\dpF_{t+1}$ at $\dpy_t$, we know that there exists $\dpxi_t$ that lies in the line segment of $\dpy_t$ and $\dpy'$ such that:
\begin{align*}
    & \dpF_{t+1}(\dpy') \\
    & = \dpF_{t+1}(\dpy_t) + \dph^\top \nabla \dpF_{t+1}(\dpy_t) + \frac{1}{2}\|\dph\|_{\nabla^2\dpF_{t+1}(\bxi_t)}^2 \tag{$\dph \triangleq \dpy' - \dpy_t$ and $\bxi_t \in [\dpy_t,\dpy']$}\\
    & \geq \dpF_t(\dpy_t) + \dph^\top \nabla \dpF_t(\dpy_t) + \left(\frac{1}{\eta_{t+1}}-\frac{1}{\eta_{t}}\right)\nabla \Psi(\dpy_t)^\top \dph + \dpg_t^\top\dph+ \frac{1}{2\eta_{t}}\|\dph\|_{\nabla^2\Psi_t(\bxi_t)}^2 \tag{by~\pref{eqn: gradient F} and~\pref{eqn: hessian F}}\\
    & \geq \dpF_t(\dpy_t) + \dph^\top \nabla \dpF_{t}(\dpy_t) + \left(\frac{1}{\eta_{t+1}}-\frac{1}{\eta_{t}}\right)\nabla \Psi(\dpy_t)^\top \dph + \dpg_t^\top\dph+ \frac{1}{2\eta_{t}}\|\dph\|_{\dpH_t}^2\cdot(1-\|\dpy_t-\dpxi_t\|_{\dpH_t})^2 \tag{$\nabla^2\Psi_t(\dpy_t)=\dpH_t$, $\Psi_t$ is a self-concordant function and by~\pref{lemma:shift-norm}}\\
    & \geq \dpF_t(\dpy_t) - \left(\frac{1}{\eta_{t+1}}-\frac{1}{\eta_{t}}\right)\left|\nabla\Psi(\dpy_t)^\top \dph\right| - \|\dpg_t\|_{\dpH_t}^*\|\dph\|_{\dpH_t} + \frac{1}{2\eta_{t}}\|\dph\|_{\dpH_t}^2\cdot(1-\|\dpy_t-\dpxi_t\|_{\dpH_t})^2 \tag{first-order optimality of $\dpy_t$}\\
    & \geq \dpF_t(\dpy_t) - \left(\frac{1}{\eta_{t+1}}-\frac{1}{\eta_{t}}\right)\left|\nabla\Psi(\dpy_t)^\top \dph\right| - \frac{1}{2}\|\dpg_t\|_{\dpH_t}^* + \frac{1}{32\eta_{t}} \tag{$\|\dph\|_{\dpH_t}=\frac{1}{2}$, $\|\dpy_t-\dpxi_t\|_{\dpH_t}\leq\frac{1}{2}$}\\
    & \geq \dpF_t(\dpy_t) - C(\sigma_t+\lambda_t)^p\left|\nabla\Psi(\dpy_t)^\top \dph\right| - \frac{1}{2}\|\dpg_t\|_{\dpH_t}^* + \frac{1}{32\eta_{t}} \tag{by definition of $\eta_t$}\\
    & \geq \dpF_t(\dpy_t) - C(\gamma+1)^{p}\left|\nabla\Psi(\dpy_t)^\top \dph\right| - \frac{1}{2}\|\dpg_t\|_{\dpH_t}^* + \frac{1}{32\eta_1}. \tag{$\sigma_t\leq \gamma$ and $\lambda_t\in(0,1)$, $\{\eta_t\}_{t=1}^T$ is monotonically non-increasing.}
\end{align*}

Furthermore, note that the third term, which is the gradient local norm, can be upper bounded by
\begin{equation}
\label{eq:gradient-norm}
\begin{split}
    \norm{\dpg_t}_{\dpH_t}^{*2} &= d^2 \big(\dpf_t(\x_t) + \frac{\lambda_t}{2} \norm{\x_t}_2^2\big)^2 \big( \lVert\dpH_t^{\frac{1}{2}} \dpu\rVert_{\dpH_t}^*\big)^2\\
    &\leq  d^2 \left( 1+\frac{\lambda_t}{2} \right)^2 \dpu^\T \dpH_t^{\frac{1}{2}} \dpH_t^{-1}\dpH_t^{\frac{1}{2}}\dpu \leq 4d^2.
\end{split}
\end{equation}

For the second term, we have
\begin{align*}
    \left|\nabla\Psi(\dpy_t)^\top \dph\right| &\leq \|\nabla\Psi(\dpy_t)\|_{\nabla^{-2}\Psi(\dpy_t)}\|\dph\|_{\nabla^2\Psi(\dpy_t)} \\
    &\leq \|\nabla\Psi(\dpy_t)\|_{\nabla^{-2}\Psi(\dpy_t)}\|\dph\|_{\nabla^2\Psi_t(\dpy_t)} \tag{$\nabla^2\Psi_t(\dpy)\succeq \nabla^2\Psi(\dpy)$} \\
    &= \|\dpy_t\|_{\nabla^{2}\Psi(\dpy_t)}\|\dph\|_{\nabla^2\Psi_t(\dpy_t)} = \frac{\sqrt{\bar{\nu}}}{2}.
\end{align*}
The last two equations make use of the properties of $\bar{\nu}$-normal barrier (see~\pref{lemma:normal-barrier}): $\nabla^2\Psi(\dpy_t)\dpy_t=-\nabla\Psi(\dpy_t)$ and $\|\dpy_t\|_{\nabla^2\Psi(\dpy_t)}^2=\bar{\nu}$, as well as the fact that $\|\dph\|_{\dpH_t}=\frac{1}{2}$. Moreover, the constructed normal barrier satisfies that $\bar{\nu} = 800 \nu$ (see~\pref{lemma:normal-barrier-nu}). Therefore, we have 
\begin{align*}
    \dpF_{t+1}(\dpy') &\geq \dpF_{t+1}(\dpy_t) - \frac{\sqrt{800 \nu}}{2} C(\gamma+1)^p - d+\frac{1}{32\eta_1} \geq \dpF_{t+1}(\dpy_t),
\end{align*}
where the last step is due to the setting of $\eta_1\leq \frac{1}{32(d+16\sqrt{\nu} C(\gamma+1)^{p})}$. Hence, we complete the proof.
\end{proof}

To apply~\pref{lem: stablity 1/2}, when all the functions are $\beta$-smooth (see~\pref{assumption:smoothness}), we can choose $\gamma=\beta$ to satisfy the condition $\sigma_t\leq \gamma$; when all the functions are $L$-Lipschitz (see~\pref{assumption:lipschitz}), we show in~\pref{lem: upper bound sigma} that choosing $\gamma=4L$ satisfies the condition of $\sigma_t\leq \gamma$.

\subsection{Proof of~\pref{lemma:main-result-smooth-arbitrary}}
\label{appendix:proof-smooth-arbitrary}

To bound the expected regret, we decompose the cumulative regret with respect to $x\in \calX$ in the following way using the functions in the lifted domain defined in \pref{lem: unbiased}, \pref{eqn:lifted_original} and \pref{eqn:lifted_regularize}:

\begin{align}
    & \E\left[\sum_{t=1}^Tf_t(x_t) - \sum_{t=1}^T f_t(x)\right] \nonumber \\
    & = \E\left[\sum_{t=1}^T \dpf_t(\dpx_t) - \sum_{t=1}^T \dpf_t(\dpx)\right]\nonumber\\
    & = \E\left[\sum_{t=1}^T \dpf_t(\dpx_t) - \sum_{t=1}^T \dpf_t(\tilde{\dpx})\right] +  \E\left[\sum_{t=1}^T \dpf_t(\tilde{\dpx}) - \sum_{t=1}^T \dpf_t(\dpx)\right] \nonumber\\
    & = \underbrace{\E\left[\sum_{t=1}^T \f_t(\x_t) - \sum_{t=1}^T \f_t(\y_t)\right]}_{\textsc{Exploration}} + \underbrace{\E\left[\sum_{t=1}^T \f_t(\y_t) - \sum_{t=1}^T \tilde{\f}_t(\y_t)\right]}_{\textsc{Regularization I}} + \underbrace{\E\left[\sum_{t=1}^T \tilde{\f}_t(\y_t) - \sum_{t=1}^T \hat{\f}_t(\y_t)\right]}_{\textsc{Smooth I}}\nonumber\\
    & \quad + \underbrace{\E\left[\sum_{t=1}^T \bfh_t(\y_t) - \sum_{t=1}^T \bfh_t(\tilde{\x})\right]}_{\textsc{Reg Term}} + \underbrace{\E\left[\sum_{t=1}^T \bfh_t(\tilde{\x}) - \sum_{t=1}^T \tilde{\f}_t(\tilde{\x})\right]}_{\textsc{Smooth II}} + \underbrace{\E\left[\sum_{t=1}^T \tilde{\f}_t(\tilde{\x}) - \sum_{t=1}^T \f_t(\tilde{\x})\right]}_{\textsc{Regularize II}}\nonumber\\
    & \quad +\underbrace{\E\left[\sum_{t=1}^T \dpf_t(\tilde{\dpx}) - \sum_{t=1}^T \dpf_t(\dpx)\right]}_{\textsc{Comparator Bias}},\label{eq:decomposition}
\end{align}
where in the second equality, we define $\tilde{\x} \triangleq \left(1-\frac{1}{T}\right)\dpx+\frac{1}{T}\cdot\dpy_1$, where $\dpy_1 = \argmin_{\x \in \bX} \Psi(\x)$. Note that both $\tilde{\x}$ and $\y_1$ belong to the shrunk lifted feasible set $\tilde{\bX}=\{\x=(x,1) \mid x\in \X, \pi_{\dpy_1}(\dpx)\leq 1-\frac{1}{T}\}$. We remind the readers the notations $\tilde{x} = \tilde{\x}_{[1:d]}$ and $y_1 = {\y_1}_{[1:d]}$, and we have $\tilde{\x} = (\tilde{x},1)$ and $\dpy_1=(y_1,1)$.

We now bound the each term of the regret decomposition in~\pref{eq:decomposition} individually. First, for the two terms \textsc{Regularization I} and \textsc{Regularization II}, we have for any $\x \in \bX$,
\begin{align}
    \label{eq:part-1}
    \E\left[\sum_{t=1}^T \dpf_t(\dpx) - \sum_{t=1}^T \wt{\dpf}_t(\dpx)\right]  = \E\left[\sum_{t=1}^T f_t(x) - \sum_{t=1}^T \wt{f}_t(x) \right] \leq \sum_{t=1}^T\frac{\lambda_t}{2},
\end{align}
which essentially is the bias due to introducing the regularization term. 

Second, consider the two terms \textsc{Smooth I} and \textsc{Smooth II}. According to the definition of $\wt{\dpf}_t$ shown in~\pref{lem: unbiased}, we know that $\wt{\dpf}_t$ is $(\beta+\lambda_t)$-smooth. Using the fact that perturbation $\dplus{b}$ has mean $\zero$, we can bound the two term as follows: for any $\x \in \bX$,
\begin{align}
    \label{eq:part-2}
    \E_{\dplus{b}}\left[\sum_{t=1}^T\wt{\dpf}_t\big(\dpx+\dpH_t^{-\frac{1}{2}}\dplus{b}\big)-\sum_{t=1}^T\wt{\dpf}_t(\dpx)\right] &\leq \sum_{t=1}^T\frac{\beta+\lambda_t}{2}\left\|\dpH_t^{-\frac{1}{2}}\dplus{b}\right\|_2^2\nonumber\\ &\leq\sum_{t=1}^T \frac{d(\beta+\lambda_t)}{\sqrt{(\beta+1)(\sigma_{1:t-1}+\lambda_{0:t-1})}},
\end{align}
where the second inequality is because $\dpH_t\succeq\eta_t(\sigma_{1:t-1}+\lambda_{0:t-1}\dpI)$ and $\eta_t=\frac{1}{2d}\sqrt{\frac{\beta+1}{\sigma_{1:t-1}+\lambda_{0:t-1}}+\frac{\nu}{T\log T}}$.

Third, by definition of $\dpy_t$ and the $\beta$-smoothness of function $f_t$, \textsc{Exploration} term can be bounded by
\begin{equation}
    \label{eq:part-3}
    \begin{split}
    \E\left[\sum_{t=1}^T \dpf_t(\dpx_t) - \sum_{t=1}^T \dpf_t(\dpy_t)\right] \leq & \sum_{t=1}^T \frac{\beta}{2}\left\|\dpH_t^{-\frac{1}{2}}\dpu_t\right\|_2^2 \leq \sum_{t=1}^T\frac{d\beta}{\sqrt{(\beta+1)(\sigma_{1:t-1}+\lambda_{0:t-1})}}.
    \end{split}
\end{equation}

Fourth, for \textsc{Comparator Bias}, according to the definition of $\wt{\dpx}$ and using the convexity property of $\dpf_t$, we have
\begin{equation}
    \label{eq:part-compara-4}
    \begin{split}
    \E\left[\sum_{t=1}^T \dpf_t(\wt{\dpx}) - \sum_{t=1}^T \dpf_t(\dpx)\right] \leq & \E\left[\sum_{t=1}^T \dpf_t\left(\frac{1}{T}\dpy_1+\left(1-\frac{1}{T}\right)\dpx\right) - \sum_{t=1}^T \dpf_t(\dpx)\right] \\
    \leq & \E\left[\frac{1}{T}\sum_{t=1}^T \dpf_t(\dpy_1) - \frac{1}{T}\sum_{t=1}^T \dpf_t(\dpx)\right]\leq 2.
    \end{split}
\end{equation}

Therefore, it suffices to further bound the $\textsc{Reg Term}$, which is the expected regret over the smoothed version of the lifted online functions. The following lemma proves the upper bound for the $\textsc{Reg Term}$. We remark that bounding this $\textsc{Reg Term}$ is the most challenging part of the proof and is also the technical reason for us to lift the domain.

\begin{lemma}
\label{lem: reg term smooth}
When loss functions $\{f_t\}_{t=1}^T$ are all $\beta$-smooth, if $T\geq \rho$ (a constant defined in~\pref{alg:lift-smooth}), \pref{alg:lift-smooth} guarantees that 
\begin{align}\label{eq:smooth-regret-upperbound}
\textsc{Reg Term}\leq \otil\left(d\sqrt{\nu T}+\sum_{t=1}^T\frac{d\sqrt{\beta+1}}{\sqrt{\sigma_{1:t-1}+\lambda_{0:t-1}}}\right).    
\end{align}
\end{lemma}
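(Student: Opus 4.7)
The plan is to bound \textsc{Reg Term} via a reduction to a strongly-convex quadratic surrogate loss followed by a Follow-the-Regularized-Leader analysis on the lifted domain, exploiting the normal-barrier property of $\Psi$ to control the decreasing learning rate. To set up the reduction, I will use that $\wh{\dpf}_t$ inherits the $(\sigma_t+\lambda_t)$-strong convexity of $\wt{\dpf}_t$ in its first $d$ coordinates. Since $\y_t-\wt{\x}$ has a zero last coordinate, strong convexity combined with \pref{lem: unbiased} gives, under expectation,
\[
\wh{\dpf}_t(\y_t) - \wh{\dpf}_t(\wt{\x}) \leq \dpg_t^\T(\y_t-\wt{\x}) - \tfrac{\sigma_t+\lambda_t}{2}\|\y_t-\wt{\x}\|_2^2 = \dpell_t(\y_t)-\dpell_t(\wt{\x}),
\]
where the surrogate loss $\dpell_t(\x) = \inner{\dpg_t, \x} + \tfrac{\sigma_t+\lambda_t}{2}\|\x-\y_t\|_2^2$ is exactly what the FTRL update minimizes cumulatively.

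Next, viewing $\y_{t+1}$ as the FTRL iterate against $\{\dpell_s\}$ with regularizer $\dpR_{t+1}(\x)=\tfrac{\lambda_0}{2}\|\x\|_2^2 + \tfrac{1}{\eta_{t+1}}\Psi(\x)$, I will apply a standard FTRL regret lemma to obtain
\[
\sum_{t=1}^T\dpell_t(\y_t)-\dpell_t(\wt{\x}) \leq \dpR_{T+1}(\wt{\x})-\dpR_1(\y_1) + \sum_{t=1}^T\Bigl(\dpg_t^\T(\y_t-\y_{t+1}) - D_{\dpF_t}(\y_{t+1},\y_t)\Bigr),
\]
where $\dpF_t=\sum_{s<t}\dpell_s + \dpR_t$, and the non-increasing learning rate makes the usually-present term $\sum_t(\dpR_t(\y_{t+1})-\dpR_{t+1}(\y_{t+1}))$ non-positive. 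For the penalty term, applying a standard Minkowski-function bound for self-concordant barriers to the shrunk lifted point $\wt{\x}$ gives $\Psi(\wt{\x})-\Psi(\y_1)=O(\nu\log T)$, which combined with the choice of $\eta_{T+1}$ yields $\dpR_{T+1}(\wt{\x})-\dpR_1(\y_1)=\Ot(d\sqrt{\nu T})$.

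The heart of the argument is bounding each stability summand $\dpg_t^\T(\y_t-\y_{t+1}) - D_{\dpF_t}(\y_{t+1},\y_t)$. Hölder gives $\dpg_t^\T(\y_t-\y_{t+1}) \leq \|\dpg_t\|_{\dpH_t}^*\cdot\|\y_t-\y_{t+1}\|_{\dpH_t}$, and a direct computation shows $\|\dpg_t\|_{\dpH_t}^{*2}=O(d^2)$ since $|\wt{\dpf}_t(\dpx_t)|\leq 2$ and $\dpH_t^{1/2}\dpu_t$ has local $\dpH_t$-norm one. The key input is \pref{lem: stablity 1/2}, which ensures $\|\y_t-\y_{t+1}\|_{\dpH_t}\leq 1/2$; this, together with self-concordance of the regularizer, lower bounds the Bregman divergence by $D_{\dpF_t}(\y_{t+1},\y_t)\geq \tfrac{1}{8\eta_t}\|\y_t-\y_{t+1}\|_{\dpH_t}^2$. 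An AM-GM step then caps each term by $O(\eta_t\|\dpg_t\|_{\dpH_t}^{*2})=O(\eta_t d^2)$. Substituting $\eta_t = \tfrac{1}{2d}\sqrt{(\beta+1)/(\sigma_{1:t-1}+\lambda_{0:t-1})+\nu/(T\log T)}$ and summing yields $\sum_t \eta_t d^2 = O\bigl(\sum_t d\sqrt{\beta+1}/\sqrt{\sigma_{1:t-1}+\lambda_{0:t-1}}\bigr)+O(d\sqrt{\nu T})$, matching the claimed bound.

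The main obstacle is precisely this stability step, which is where the lifting trick and the normal-barrier choice are essential. A naive analysis of barrier-regularized FTRL with decreasing learning rates produces an extra contribution proportional to $\bigl(\tfrac{1}{\eta_{t+1}}-\tfrac{1}{\eta_t}\bigr)^2\|\nabla\Psi(\y_t)\|_{\nabla^{-2}\Psi(\y_t)}^2$, which is uncontrolled for a generic self-concordant barrier on a bounded set. The normal-barrier identity $\|\nabla\Psi(\y_t)\|_{\nabla^{-2}\Psi(\y_t)}^2 = \|\y_t\|_{\nabla^2\Psi(\y_t)}^2 = \bar{\nu}=O(\nu)$ is what rescues the stability bound in \pref{lem: stablity 1/2}, and this identity is only available because we lifted the bounded domain $\calX$ to the cone $\calK$ on which a normal barrier exists.
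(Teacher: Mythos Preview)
Your proposal is correct and follows essentially the same approach as the paper's proof: reduce to the surrogate $\dpell_t$ via strong convexity and \pref{lem: unbiased}, apply the FTRL lemma with the normal-barrier regularizer, invoke \pref{lem: stablity 1/2} together with self-concordance to control the stability term, and use the Minkowski-function bound for the penalty. The only detail you glossed over is that to make $\sum_t(\dpR_t(\y_{t+1})-\dpR_{t+1}(\y_{t+1}))\leq 0$ from the non-increasing learning rate one needs $\Psi(\y_{t+1})\geq 0$, which the paper ensures by working with the shifted regularizer $\Psi(\cdot)-\Psi(\y_1)$; this is a cosmetic fix that leaves the rest of your argument intact.
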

\begin{proof}
According to the definition of \textsc{Reg Term}, we have 
\begin{align}
    \E\left[\sum_{t=1}^T \wh{\dpf}_t(\dpy_t)- \sum_{t=1}^T \wh{\dpf}_t(\tilde{\dpx})\right] & \leq \E\left[\sum_{t=1}^T \left(\nabla \wh{\dpf}_t(\dpy_t)^\top (\dpy_t-\tilde{\dpx})-\frac{\sigma_t+\lambda_t}{2}\|\dpy_t-\tilde{\dpx}\|_2^2\right)\right] \label{eq:derive-1}\\
    & = \E\left[\sum_{t=1}^T\dpg_t^\top(\dpy_t-\tilde{\dpx})-\frac{\sigma_t+\lambda_t}{2}\|\dpy_t-\tilde{\dpx}\|_2^2\right]\label{eq:derive-2}\\
    & = \E\left[\sum_{t=1}^T \dpell_t(\dpy_t) - \sum_{t=1}^T  \dpell_t(\tilde{\dpx}) \right].\label{eq:derive-3}
\end{align}
In above, \pref{eq:derive-1} holds owing to the $(\sigma_t+\lambda_t)$-strong-convexity of $\wh{\dpf}_t$ (actually only in the first $d$ dimension but it is enough as $\dpy_t$ and $\wt{\dpx}$ have the same last coordinate); \pref{eq:derive-2} is true because \pref{lem: unbiased} ensures that $\dpg_t$ is an unbiased estimator of $\nabla\wh{\dpf}_t(\dpy_t)$ in the first $d$ coordinates and meanwhile $\dpy_t-\dpx$ has the last coordinate $0$. The last step shown in~\pref{eq:derive-3} is by introducing the surrogate loss $\dpell_t: \bX \mapsto \R$, defined as $\dpell_t(\dpx)\triangleq \inner{\dpg_t, \dpx}+\frac{\sigma_t+\lambda_t}{2}\|\dpx-\dpy_t\|_2^2.$
Note that according to this construction, we have $\nabla\dpell_t(\dpy_t)=\dpg_t$.

In addition, our FTRL update rule can be written in the following two forms:
\begin{align*}
    \dpy_{t+1} =  & \argmin_{\dpx\in\dpX}\left\{\sum_{s=1}^{t}\left(\inner{\dpg_s, \dpx}+\frac{\sigma_s+\lambda_s}{2}\|\dpy_s-\dpx\|_2^2\right) + \frac{\lambda_0}{2} \norm{\x}_2^2 +\frac{1}{\eta_{t+1}}\Psi(\dpx)\right\} \\
    =  & \argmin_{\dpx\in\dpX}\left\{\sum_{s=1}^{t} \dpell_s(\x) + \frac{\lambda_0}{2} \norm{\x}_2^2 +\frac{1}{\eta_{t+1}}\Psi(\dpx)\right\} \\
    =  & \argmin_{\dpx\in\dpX}\left\{\sum_{s=1}^{t}\inner{\dpg_s, \dpx}+\frac{1}{\eta_{t+1}}\Psi_{t+1}(\dpx)\right\},
\end{align*}
where $\Psi_{t+1}(\dpx) = \Psi(\dpx)+\eta_{t+1}\left(\frac{\lambda_0}{2} \norm{\x}_2^2 + \sum_{s=1}^t\frac{\sigma_s+\lambda_s}{2}\|\dpx - \dpy_s\|_2^2\right)$. As discussed in~\pref{lem: stablity 1/2} $\Psi_{t+1}$ is still a self-concordant function and moreover $\dpH_{t+1}=\nabla^2 \Psi_{t+1}(\dpy_{t+1})$.

Recall the definition $\dpR_{t+1}(\x) = \frac{\lambda_0}{2} \norm{\x}_2^2 + \frac{1}{\eta_{t+1}}\Psi(\x)$ and $\dpF_{t+1}(\x) = \sum_{s=1}^t \dpell_s(\x) + \dpR_{t+1}(\x)$. Denote by $\dpR_{t+1}'(\x) = \frac{\lambda_0}{2} \norm{\x}_2^2 + \frac{1}{\eta_{t+1}}\left(\Psi(\x)-\Psi(\dpy_1)\right)$ the (shifted) regularizer and by $\dpQ_{t+1}(\x) = \sum_{s=1}^t \dpell_s(\x) + \dpR_{t+1}'(\x)$ and (shifted) FTRL objective. Therefore, we have $\dpF_{t+1}(\dpx)=\dpQ_{t+1}(\dpx)+\frac{1}{\eta_{t+1}}\Psi(\dpy_1)$. Then, $\y_{t+1} = \argmin_{\x \in \bX} \dpF_{t+1}(\x)=\argmin_{\x \in \bX} \dpQ_{t+1}(\x)$ according to the FTRL update rule, and we have 
\begin{align*}
    & \sum_{t=1}^T \dpell_t(\y_t) - \sum_{t=1}^T  \dpell_t(\tilde{\x})\\
    & \leq \dpR_{T+1}'(\tilde{\x}) - \dpR_1'(\y_1) + \sum_{t=1}^T \nabla \dpell_t(\y_t)^\T (\y_t - \y_{t+1}) \\
    & \qquad \qquad - \sum_{t=1}^T D_{\dpQ_t + \dpell_t}(\y_{t+1},\y_t) + \sum_{t=1}^T \Big(\dpR_t'(\y_{t+1})- \dpR_{t+1}'(\y_{t+1})\Big)\\
    & \leq \dpR_{T+1}'(\tilde{\x}) - \dpR_1'(\y_1) + \sum_{t=1}^T \dpg_t^\T (\y_t - \y_{t+1}) - \sum_{t=1}^T D_{\dpQ_t + \dpell_t}(\y_{t+1},\y_t)\\
    & \leq \dpR_{T+1}'(\tilde{\x}) - \dpR_1'(\y_1) + \sum_{t=1}^T \dpg_t^\T (\y_t - \y_{t+1}) - \sum_{t=1}^T D_{\dpQ_t}(\y_{t+1},\y_t).
\end{align*}
In above, the first inequality is due to the standard FTRL analysis as shown in \pref{lemma:FTRL-regret}; the second inequality is true because the surrogate loss satisfies that $\nabla\dpell_t(\dpy_t)=\dpg_t$ and $0\leq\dpR_t'(\x) \leq \dpR_{t+1}'(\x)$ holds for any $\x \in \bX$ as the learning rate is monotonically non-increasing and $\dpy_1=\argmin_{\dpx\in\calX}\Psi(\dpx)$. The last inequality follows from $\nabla^2 \dpell_t(x) = (\sigma_t + \lambda_t) \boldsymbol{I}$ and the following inequality:
\begin{align*}
    D_{\dpQ_t + \dpell_t}(\y_{t+1},\y_t) &= D_{\dpQ_t}(\y_{t+1},\y_t) + D_{\dpell_t}(\y_{t+1},\y_t) \\
    &= D_{\dpQ_t}(\y_{t+1},\y_t) + \frac{\sigma_t + \lambda_t}{2} \norm{\y_{t+1} - \y_t}_2^2 \geq D_{\dpQ_t}(\y_{t+1},\y_t).
\end{align*}
In addition, by Taylor expansion, we know that $D_{\dpQ_t}(\y_{t+1},\y_t) = \frac{1}{2} \norm{\y_{t+1} - \y_t}_{\nabla^2 \dpQ_t(\boldsymbol{\xi}_t)}^2$ for some $\bxi_t \in [\y_t, \y_{t+1}]$, and $\nabla^2 \dpQ_t(\x) = \nabla^2 \dpF_t(\x) = \frac{1}{\eta_t} \nabla^2 \Psi_t(\x)$ as shown in the first equality of~\pref{eqn: hessian F}. Therefore, combining all above, we get that
\begin{equation}
    \label{eq:upper-bound-surrogate-loss}
    \begin{split}
     & \sum_{t=1}^T \dpell_t(\y_t) - \sum_{t=1}^T  \dpell_t(\tilde{\x}) \\
     & \leq \frac{\lambda_0}{2} \norm{\tilde{\x}}_2^2 + \frac{\Psi(\tilde{\x})-\Psi(\y_1)}{\eta_{T+1}} + \sum_{t=1}^T \dpg_t^\T (\y_t - \y_{t+1}) - \sum_{t=1}^T \frac{1}{2 \eta_t} \norm{\y_{t+1} - \y_t}_{\nabla^2 \Psi_t(\bxi_t)}^2.
    \end{split}
\end{equation}
    
In the following, we proceed to analyze the crucial terms $\dpg_t^\T (\y_t - \y_{t+1})$ and $\norm{\y_{t+1} - \y_t}_{\nabla^2 \Psi_t(\bxi_t)}^2$. For the first term, by Holder's inequality, we have
\begin{equation}
    \label{eq:inner-holder}
    \dpg_t^\T (\y_t - \y_{t+1}) \leq \|\dpg_t\|_{\dpH_t}^*\cdot \|\dpy_t-\dpy_{t+1}\|_{\dpH_t}.
\end{equation}
The second term is more involved to analyze. To do this, we first verify that the conditions required in~\pref{lem: stablity 1/2} are indeed satisfied. First, it is direct to see that $\{\eta_{t}\}_{t=1}^T$ is non-increasing and 
\begin{align*}
    \frac{1}{\eta_{t+1}}-\frac{1}{\eta_t} &\leq 2d\left(\frac{1}{\sqrt{\frac{\beta+1}{\lambda_{0:t}+\sigma_{1:t}}+\frac{\nu}{T\log T}}}-\frac{1}{\sqrt{\frac{\beta+1}{\lambda_{0:t-1}+\sigma_{1:t-1}}+\frac{\nu}{T\log T}}}\right)\\
    &\leq \frac{2d}{\sqrt{\beta+1}}\left(\sqrt{\lambda_{0:t}+\sigma_{1:t}}-\sqrt{\lambda_{0:t-1}+\sigma_{1:t-1}}\right) \\
    &\leq \frac{2d\sqrt{\lambda_t+\sigma_t}}{\sqrt{\beta+1}},
\end{align*}
where the second inequality is because $(a+c)^{-\nicefrac{1}{2}}-(b+c)^{-\nicefrac{1}{2}}$ is decreasing in $c$ when $a\leq b$. Therefore, this satisfies that $\eta_{t+1}^{-1}-\eta_t^{-1}\leq C(\lambda_t+\sigma_t)^p$ with $C=2d(\beta+1)^{-\nicefrac{1}{2}}$ and $p=\frac{1}{2}$. In addition, note that as $T\geq \rho= 512\nu(1+32\sqrt{\nu})^2$ and $\lambda_0\geq (\beta+1)\rho\nu^{-1}$, we have 
\begin{align*}
    \eta_1=\frac{1}{2d}\sqrt{\frac{\beta+1}{\lambda_0}+\frac{\nu}{T\log T}}\leq\frac{1}{2d}\sqrt{\frac{2\nu}{\rho}}=\frac{1}{32d(1+32\sqrt{\nu})}= \frac{1}{32(d+16\sqrt{\nu}C(\gamma+1)^p)},
\end{align*}
with $\gamma=\beta$. Therefore, according to~\pref{lem: stablity 1/2}, we show that $\|\dpy_t-\dpy_{t+1}\|_{\dpH_t}\leq\frac{1}{2}$. Then, due to the nice properties of optimization with self-concordant functions (see \pref{lemma:shift-norm}), we obtain that 
\begin{equation}
    \label{eq:stability-term}
    \|\dpy_{t+1}-\dpy_t\|_{\nabla^2\Psi_{t}(\bxi_t)}\geq \|\dpy_{t+1}-\dpy_t\|_{\nabla^2\Psi_{t}(\y_t)}\cdot(1-\|\dpy_{t+1}-\bxi_t\|_{\nabla^2\Psi_{t}(\y_t)}) \geq \frac{1}{2} \|\dpy_{t+1}-\dpy_t\|_{\dpH_t},
\end{equation}
where the last inequality makes use of the result $\|\dpy_t-\dpy_{t+1}\|_{\dpH_t}\leq\frac{1}{2}$ as well as the fact that $\nabla^2\Psi_{t}(\y_t) = \dpH_t$. Plugging inequalities \pref{eq:inner-holder} and \pref{eq:stability-term} to the regret upper bound achieves
\begin{align}
    & \sum_{t=1}^T \dpell_t(\y_t) - \sum_{t=1}^T  \dpell_t(\tilde{\x}) \nonumber \\
    & \leq \frac{\lambda_0}{2} \norm{\tilde{\x}}_2^2 + \frac{\Psi(\tilde{\x})-\Psi(\y_1)}{\eta_{T+1}} + \sum_{t=1}^T \dpg_t^\T (\y_t - \y_{t+1}) - \sum_{t=1}^T \frac{1}{2 \eta_t} \norm{\y_{t+1} - \y_t}_{\nabla^2 \Psi_t(\bxi_t)}^2\nonumber \\
    & \leq \frac{\lambda_0}{2} \norm{\tilde{\x}}_2^2 + \frac{\Psi(\tilde{\x})-\Psi(\y_1)}{\eta_{T+1}} + \sum_{t=1}^T \left( \|\dpg_t\|_{\dpH_t}^*\cdot \|\dpy_t-\dpy_{t+1}\|_{\dpH_t}-\frac{1}{8\eta_t}\|\dpy_{t+1}-\dpy_t\|_{\dpH_t}^2\right)\nonumber\\
    & \leq \frac{\lambda_0}{2} \norm{\tilde{\x}}_2^2 + \frac{\Psi(\tilde{\x})-\Psi(\y_1)}{\eta_{T+1}} + \sum_{t=1}^T 2\eta_t\|\dpg_t\|_{\dpH_t}^{*2} \label{eqn: regret-form} \\
    & \leq \O(1) + \frac{\Psi(\tilde{\x})-\Psi(\y_1)}{\eta_{T+1}} +\sum_{t=1}^T8\eta_t d^2,\tag{\pref{eq:gradient-norm}} \nonumber\\
    & \leq \O(1) + \frac{\Psi(\tilde{\x})-\Psi(\y_1)}{\eta_{T+1}} +\sum_{t=1}^T\frac{8d\sqrt{\beta+1}}{\sqrt{\sigma_{1:{t-1}}+\lambda_{0:t-1}}} +\O\left(d\sqrt{\nu T\log T}\right),\nonumber\\
    & \leq \O(1) + \O\left(d \sqrt{\nu T \log T}\right) +\sum_{t=1}^T\frac{8d\sqrt{\beta+1}}{\sqrt{\sigma_{1:t-1}+\lambda_{0:t-1}}},\label{eq:part-4}
\end{align}
where \pref{eq:part-4} holds because of the following three facts. First, as both $\tilde{\x}$ and $\y_1$ belong to the shrunk lifted domain $\tilde{\bX}=\{\dpx\;|\;\pi_{\dpy_1}(\dpx)\leq 1-\frac{1}{T}\}$, based on \pref{lemma:bound-Minkowski}, we have that $0 \leq \Psi(\dpx)-\Psi(\dpy_1) \leq \bar{\nu}\ln \big(\frac{1}{1-\pi_{\dpy_1}(\dpx)}\big)\leq \bar{\nu} \log T$ holds for all $\dpx \in \wt{\dpX}$. Second, as demonstrated in \pref{lemma:normal-barrier-nu}, the normal barrier we choose in~\pref{alg:lift-smooth} ensures that $\bar{\nu} = 800 \nu = \O(\nu)$. Third, $\eta_{T+1}\geq \frac{1}{2d}\sqrt{\frac{\nu}{T\log T}}$. This finishes the proof of~\pref{lem: reg term smooth}.
\end{proof}
Now we are ready to prove our main lemma (\pref{lemma:main-result-smooth-arbitrary}). Below we restate the lemma for convenience.
\begin{lemma}
\label{lemma:main-result-smooth-arbitrary-appendix}
With any regularization coefficients $\{\lambda_t\}_{t=1}^T \in (0,1)$,
\pref{alg:lift-smooth} guarantees:
\begin{equation}
\label{eqn: strongly-theorem-arbitrary-appendix}
    \Reg = \Ot\left(d \sqrt{\nu T}+ \lambda_{1:T-1} + \sum_{t=1}^{T-1}\frac{d\sqrt{\beta+1}}{\sqrt{\sigma_{1:t} + \lambda_{0:t}}}\right),
\end{equation}
if loss functions $\{f_t\}_{t=1}^T$ are all $\beta$-smooth and $T\geq \rho$ (a constant defined in \pref{alg:lift-smooth}).
\end{lemma}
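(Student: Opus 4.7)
The plan is to assemble the regret bound by plugging the already-established bounds on each of the seven summands in the decomposition (\ref{eq:decomposition}) and then simplifying with the assumption $\lambda_t \in (0,1)$.

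First I would combine the two \textsc{Regularization} terms via (\ref{eq:part-1}), which together contribute $O(\lambda_{1:T})$; the two \textsc{Smooth} terms via (\ref{eq:part-2}), which together contribute $O\bigl(\sum_{t=1}^{T} \frac{d(\beta+\lambda_t)}{\sqrt{(\beta+1)(\sigma_{1:t-1}+\lambda_{0:t-1})}}\bigr)$; the \textsc{Exploration} term via (\ref{eq:part-3}); the \textsc{Comparator Bias} via (\ref{eq:part-compara-4}), which is $O(1)$; and finally the \textsc{Reg Term}, which is already controlled by \pref{lem: reg term smooth} to be $\otil\bigl(d\sqrt{\nu T}+\sum_{t=1}^{T}\frac{d\sqrt{\beta+1}}{\sqrt{\sigma_{1:t-1}+\lambda_{0:t-1}}}\bigr)$. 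Adding all of these pieces together produces a preliminary bound
\[
\Reg \leq \otil\!\left( d\sqrt{\nu T} + \lambda_{1:T} + \sum_{t=1}^{T} \frac{d(\beta+\lambda_t+1)}{\sqrt{(\beta+1)(\sigma_{1:t-1}+\lambda_{0:t-1})}} \right).
\]

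Next I would simplify the Hessian-weighted sum. Since $\lambda_t \in (0,1)$, we have $\beta + \lambda_t + 1 \leq 2(\beta+1)$, so each summand collapses to $O\bigl(\frac{d\sqrt{\beta+1}}{\sqrt{\sigma_{1:t-1}+\lambda_{0:t-1}}}\bigr)$. I would then handle the $t=1$ term separately: at $t=1$, the denominator is $\sqrt{\lambda_0}$, and by the choice $\lambda_0 \geq d^2(\beta+1)$ this single term contributes only $O(1)$, which is absorbed into the $d\sqrt{\nu T}$ piece. Re-indexing the remaining sum by the substitution $t \mapsto t+1$ yields $\sum_{t=1}^{T-1}\frac{d\sqrt{\beta+1}}{\sqrt{\sigma_{1:t}+\lambda_{0:t}}}$, matching the form in (\ref{eqn: strongly-theorem-arbitrary-appendix}). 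Likewise $\lambda_{1:T} = \lambda_{1:T-1} + \lambda_T \leq \lambda_{1:T-1} + 1$, so the last $\lambda_T$ is absorbed.

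I do not expect any substantive obstacle: the decomposition bounds have all been proved, including the only hard one (the \textsc{Reg Term}, which required the stability lemma and the logarithmically homogeneous barrier machinery), so this final step is bookkeeping. The only mild care needed is in (i) verifying that the constant-order contributions at $t=1$ (where $\sigma_{1:0}+\lambda_{0:0}=\lambda_0$ is large by the initialization in \pref{alg:lift-smooth}) are absorbed by the $d\sqrt{\nu T}$ term, and (ii) keeping track of the index shift between the sums indexed by $t-1$ in the decomposition and the sums indexed by $t$ in the target bound. Once these two minor simplifications are carried out, the stated bound (\ref{eqn: strongly-theorem-arbitrary-appendix}) follows immediately.
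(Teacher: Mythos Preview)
Your proposal is correct and follows essentially the same approach as the paper's proof: combine the already-established bounds (\ref{eq:part-1})--(\ref{eq:part-compara-4}) and (\ref{eq:smooth-regret-upperbound}) via the decomposition (\ref{eq:decomposition}), simplify the $(\beta+\lambda_t)$ factor using $\lambda_t\in(0,1)$, and then shift the summation index while absorbing the $t=1$ term and the stray $\lambda_T$ as $O(1)$ using the initialization of $\lambda_0$. The paper justifies the last step with the slightly weaker observation $\lambda_0\geq 1$, but your use of $\lambda_0\geq d^2(\beta+1)$ is equally valid and the argument is otherwise identical.
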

\begin{proof}
Combining all the above terms in~\pref{eq:part-1},~\pref{eq:part-2},~\pref{eq:part-3},~\pref{eq:part-compara-4}, and~\pref{eq:smooth-regret-upperbound} as well as the decomposition in~\pref{eq:decomposition}, we obtain the following expected regret upper bound:
\begin{align}
     &\E\left[\sum_{t=1}^Tf_t(x_t) - \sum_{t=1}^T f_t(x)\right] \nonumber\\
    &\overset{\eqref{eq:decomposition}}{\leq} \E\left[\sum_{t=1}^T \bfh_t(\y_t) - \sum_{t=1}^T \bfh_t(\tilde{\x})\right] + 2\nonumber\\
     & \qquad + \E\left[\sum_{t=1}^T \f_t(\x_t) - \sum_{t=1}^T \f_t(\y_t)\right] + \E\left[\sum_{t=1}^T \f_t(\y_t) - \sum_{t=1}^T \tilde{\f}_t(\y_t)\right] + \E\left[\sum_{t=1}^T \tilde{\f}_t(\y_t) - \sum_{t=1}^T \hat{\f}_t(\y_t)\right]\nonumber\\
     & \qquad + \E\left[\sum_{t=1}^T \bfh_t(\tilde{\x}) - \sum_{t=1}^T \tilde{\f}_t(\tilde{\x})\right] + \E\left[\sum_{t=1}^T \tilde{\f}_t(\tilde{\x}) - \sum_{t=1}^T \f_t(\tilde{\x})\right] \nonumber\\
    &\leq  \O\left(\lambda_{1:T}+d \sqrt{\nu T\log T} + \sum_{t=1}^T \frac{d (\beta+\lambda_t)}{\sqrt{\beta+1}\sqrt{\sigma_{1:t-1}+\lambda_{0:t-1}}}+\sum_{t=1}^T \frac{d\sqrt{\beta+1}}{\sqrt{\sigma_{1:t-1}+\lambda_{0:t-1}}}\right)\nonumber\\
    &\leq  \Ot\left(\lambda_{1:T} + d \sqrt{\nu T}+ \sum_{t=1}^T\frac{d\sqrt{\beta+1}}{\sqrt{\sigma_{1:t-1} + \lambda_{0:t-1}}}\right) \tag{$\lambda_t\in(0,1)$}\nonumber\\
    &\leq \Ot\left(\lambda_{1:T-1} + d \sqrt{\nu T}+ \sum_{t=1}^{T-1}\frac{d\sqrt{\beta+1}}{\sqrt{\sigma_{1:t} + \lambda_{0:t}}}\right), \label{eq:derive-last-step}
\end{align}
where the last step hold because our choice of regularization coefficients $\lambda_t \in (0,1)$ for $t\in[T]$ and the input parameter $\lambda_0 \geq 1$, which finishes the proof.
\end{proof}

\subsection{Proof of~\pref{lemma:H-function-main}}
\label{appendix:H-function-main}
\begin{proof}
We prove the claim~\pref{eq:near-optimal-H-function} by induction, whose proof technique is similar to~\citep[Lemma 3.1]{NIPS'07:AOGD}. 

Consider the base case when $t=1$. For simplicity, we define $\lambda^*_0=\lambda_0$. If $\lambda_1\leq \lambda_1^*\triangleq\argmin_{\lambda_1\geq 0}\calB(\lambda_1)$, we have $\calB(\lambda_1)=\lambda_1+\frac{d\sqrt{\beta+1}}{\sqrt{\sigma_1+\lambda_{0:1}}}=2\lambda_1 \leq 2\lambda^*_1 \leq 2\calB(\lambda^*_1)$, where the second equality is true because of the condition in~\pref{eqn:oracle-tuning-strongly-main}. Otherwise, we have $\calB(\lambda_1)=\frac{2d\sqrt{\beta+1}}{\sqrt{\sigma_1+\lambda_{0:1}}}\leq\frac{2d\sqrt{\beta+1}}{\sqrt{\sigma_1+\lambda_{0:1}^*}}\leq 2\calB(\lambda_1^*)$. Combining both scenarios verifies the base case.

Suppose we have $\calB(\{\lambda_s\}_{s=1}^{t-1})\leq 2\min_{\{\lambda_s'\}_{s=1}^{t-1}\geq 0}\calB(\{\lambda_s'\}_{s=1}^{t-1})$. With a slight abuse of notation, we set $\{\lambda_s^*\}_{s=1}^t=\argmin_{\{\lambda_s'\}_{s=1}^t\geq 0}\calB(\{\lambda_s'\}_{s=1}^t)$. Similarly, if $\lambda_{1:t}\leq \lambda_{1:t}^*$, we have
\begin{align*}
    \calB(\{\lambda_{s}\}_{s=1}^t\}) &= \lambda_{1:t}+\sum_{s=1}^t \frac{d\sqrt{\beta+1}}{\sqrt{\sigma_{1:s}+\lambda_{0:s}}} = \lambda_{1:t}+\sum_{s=1}^t \lambda_{s}= 2\lambda_{1:t}\leq 2\lambda_{1:t}^*\leq 2\calB(\{\lambda_{s}^*\}_{s=1}^t).
\end{align*}
Otherwise, we have
\begin{align*}
    &\lambda_t+\frac{d\sqrt{\beta+1}}{\sqrt{\sigma_{1:t}+\lambda_{0:t}}} = \frac{2d\sqrt{\beta+1}}{\sqrt{\sigma_{1:t}+\lambda_{0:t}}}\leq \frac{2d\sqrt{\beta+1}}{\sqrt{\sigma_{1:t}+{\lambda}_{0:t}^*}}\leq 2\left(\lambda_t^*+\frac{d\sqrt{\beta+1}}{\sqrt{\sigma_{1:t}+{\lambda}_{1:t}^*}}\right).
\end{align*}
Using the induction hypothesis, we have
\begin{align*}
    \calB(\{\lambda_{s}\}_{s=1}^t\}) &= \calB(\{\lambda_{s}\}_{s=1}^{t-1}\}) + \lambda_t+\frac{d\sqrt{\beta+1}}{\sqrt{\sigma_{1:t}+\lambda_{0:t}}} \\ 
    &\leq \min_{\{\lambda_{s}'\}_{s=1}^{t-1}\geq 0}2\calB(\{\lambda_s'\}_{s=1}^{t-1})+2\left(\lambda_t^*+\frac{d\sqrt{\beta+1}}{\sqrt{\sigma_{1:t}+{\lambda}_{1:t}^*}}\right) \\
    &\leq 2\calB(\{\lambda_s^*\}_{s=1}^{t-1})+2\left(\lambda_t^*+\frac{d\sqrt{\beta+1}}{\sqrt{\sigma_{1:t}+{\lambda}_{1:t}^*}}\right)\\
    &=2\calB(\{\lambda_{s}^*\}_{s=1}^t),
\end{align*}
where the first inequality is because of the induction hypothesis. Combining both cases, we have that $\calB(\{\lambda_{s}\}_{s=1}^t\})\leq 2\calB(\{\lambda_{s}^*\}_{s=1}^t)$.
\end{proof}

\subsection{Proof of \pref{thm:main-result-smooth}}
\label{appendix:proof-main-result-smooth}
\begin{proof}
By~\pref{lemma:main-result-smooth-arbitrary} we have
\[
    \Reg \leq \Ot\left( \lambda_{1:T-1} + d\sqrt{\nu T} + \sum_{t=1}^{T-1}\frac{d\sqrt{\beta+1}}{\sqrt{\sigma_{1:t} + \lambda_{0:t}}} \right),
\]
which holds for any sequence of regularization coefficients $\lambda_1,\ldots,\lambda_T \in (0,1)$. Moreover, due to the specific calculation of regularization coefficients (see~\pref{eqn:oracle-tuning-strongly-main}) and from~\pref{lemma:H-function-main}, we immediately achieve for any $\lambda_{1}^*,\ldots,\lambda_T^*\geq 0$,  
\begin{align*}
    \Reg &\leq \Ot\left( \lambda_{1:T-1}^* + d\sqrt{\nu T} + \sum_{t=1}^{T-1}\frac{d\sqrt{\beta+1}}{\sqrt{\sigma_{1:t} + \lambda_{0:t}^*}} \right),
\end{align*}
which finishes the proof of~\pref{thm:main-result-smooth}.
\end{proof}


\subsection{Proofs for Implications of~\pref{thm:main-result-smooth}}
\label{appendix: smooth implication proof}
In this section, we provide the proofs of implications in~\pref{sec: implication}.

~\\

\begin{proof}[of~\pref{cor: only-smooth}]
Since \pref{thm:main-result-smooth} holds for any non-negative sequence of $\{\lambda_t^*\}_{t=1}^T$, in particular, we choose $\lambda_1^*=(1+\beta)^{\frac{1}{3}}d^{\frac{2}{3}}T^{\frac{2}{3}}$ and $\lambda_t^*=0$ for all $t \geq 2$, then with $\nu=\order(d)$, we obtain that
\begin{align*}
\Reg \overset{\eqref{eqn: strongly-theorem}}{\leq} \Ot\left((1+\beta)^{\frac{1}{3}}d^{\frac{2}{3}} T^{\frac{2}{3}} + d^{\frac{3}{2}}\sqrt{T}+ (1+\beta)^{\frac{1}{3}}d^{\frac{2}{3}} T^{\frac{2}{3}}  \right) = \Ot\left(d^{\frac{3}{2}}\sqrt{T}+(1+\beta)^{\frac{1}{3}}d^{\frac{2}{3}} T^{\frac{2}{3}}\right),
\end{align*}
where the last step holds due to $\nu = \O(d)$ (see \pref{lemma:exist-SCB}).
\end{proof}

\begin{proof}[of~\pref{cor: strongly-convex-smooth}]
Since \pref{thm:main-result-smooth} holds for any non-negative sequence of $\{\lambda_t^*\}_{t=1}^T$, in particular, we choose $\lambda_t^*=0$ for all $t \geq 1$ and, then with $\nu=\order(d)$, we obtain that
\begin{align*}
    \Reg \overset{\eqref{eqn: strongly-theorem}}{\leq} \Ot\left(d\sqrt{\nu T} +  d\sqrt{\frac{(1+\beta) T}{\sigma}} \right) =  \Ot\left(d^{\frac{3}{2}}\sqrt{T}+d\sqrt{\frac{T(1+\beta)}{\sigma}}\right),
\end{align*}
which ends the proof.
\end{proof}

\begin{proof}[of~\pref{cor: worst-best-case}]
In the first environment where there are $M$ rounds in which the loss function is $0$-strongly convex, to make the right hand side of~\pref{eqn: strongly-theorem} the largest, we have $\sigma_{\tau}=0$ when $\tau\in[M]$ and $\sigma_{\tau}=\sigma$ when $\tau>M$. Set $\lambda_t^*=0$ for all $t\geq 2$. According to \pref{eqn: strongly-theorem} shown in \pref{thm:main-result-smooth} and the choice $\nu=\order(d)$, we have
\begin{align*}
    \Reg & \leq \otil\left(\lambda_1^*+d\sqrt{\nu T}+\sum_{t=1}^{T-1}\frac{d\sqrt{1+\beta}}{\sqrt{\lambda_1^*+\sigma_{1:t}}}\right) \\
    & \leq \otil\left(\lambda_1^*+d^{\frac{3}{2}}\sqrt{T}+\frac{d\sqrt{1+\beta}M}{\sqrt{\lambda_1^*}}+d\sqrt{1+\beta}\min\left\{\frac{(T-M)}{\sqrt{\lambda_1^*}},\sqrt{\frac{T-M}{\sigma}}\right\}\right) \\
    & \leq \otil \left(d^{\frac{2}{3}}(1+\beta)^{\frac{1}{3}} M^{\frac{2}{3}}+d^{\frac{3}{2}}\sqrt{T}+d\sqrt{1+\beta}\min\left\{\frac{T-M}{(1+\beta)^{\frac{1}{6}}d^{\frac{1}{3}}M^{\frac{1}{3}}}, \sqrt{\frac{T-M}{\sigma}}\right\}\right), \tag{choosing $\lambda_1^*=(1+\beta)^{\frac{1}{3}}d^{\frac{2}{3}}M^{\frac{2}{3}}$}
\end{align*}
which leads to the first regret bound. Next, we consider the second environment where the first 
$T-M$ loss functions are $\sigma$-strongly convex. Similarly, we choose $\lambda_t^*=0$ for $t\geq 2$ and we have our regret bounded as follows:
\begin{align*}
    \Reg & \leq \otil\left(\lambda_1^*+d\sqrt{\nu T}+\sum_{t=1}^{T-1}\frac{d\sqrt{1+\beta}}{\sqrt{\lambda_1^*+\sigma_{1:t}}}\right) \\
    & \leq \otil\left(\lambda_1^*+d^{\frac{3}{2}}\sqrt{T}+\frac{dM\sqrt{1+\beta}}{\sqrt{(T-M)\sigma+\lambda_1^*}}+d\sqrt{1+\beta}\min\left\{\frac{T-M}{\sqrt{\lambda_1^*}},\sqrt{\frac{T-M}{\sigma}}\right\}\right).
\end{align*}
When $T-M=\Theta(T)$, we have
\begin{align*}
    \Reg&\leq \otil\left(\lambda_1^*+d^{\frac{3}{2}}\sqrt{T}+\frac{dM\sqrt{1+\beta}}{\sqrt{T\sigma+\lambda_1^*}}+d\sqrt{1+\beta}\sqrt{\frac{T}{\sigma}}\right) \\
    &\leq \otil\left(\lambda_1^*+d^{\frac{3}{2}}\sqrt{T}+\frac{dM\sqrt{1+\beta}}{\sqrt{T\sigma}}+d\sqrt{1+\beta}\sqrt{\frac{T}{\sigma}}\right)\\
    &\leq \otil\left(d^{\frac{3}{2}}\sqrt{T}+\frac{dT\sqrt{1+\beta}}{\sqrt{\sigma(T-M)}}\right),
\end{align*}
where the last inequality is by choosing $\lambda_1^*=0$.
When $T-M=o(T)$, we have $M=\Theta(T)$. Furthermore, if $\lambda_1^*\leq \sigma(T-M)$, we have $\frac{T-M}{\sqrt{\lambda_1^*}}\geq \sqrt{\frac{T-M}{\sigma}}$ and therefore,
\begin{align*}
    \Reg&\leq \otil\left(\lambda_1^*+d^{\frac{3}{2}}\sqrt{T}+\frac{dT\sqrt{1+\beta}}{\sqrt{(T-M)\sigma}}+d\sqrt{1+\beta}\sqrt{\frac{T-M}{\sigma}}\right)\\
    &\leq \otil\left(d^{\frac{3}{2}}\sqrt{T}+\frac{dT\sqrt{\beta+1}}{\sqrt{\sigma(T-M)}}\right),
\end{align*}
where the last inequality is by choosing $\lambda_1^*=0$. On the other hand, if $\lambda_1^*\geq \sigma(T-M)$, we have
\begin{align*}
    \Reg\leq\otil\left(\lambda_1^*+d^{\frac{3}{2}}\sqrt{T}+\frac{dT\sqrt{1+\beta}}{\sqrt{\lambda_1^*}}\right)\leq \otil\left(\sigma(T-M)+d^{\frac{2}{3}}(1+\beta)^{\frac{1}{3}}T^{\frac{2}{3}}+d^{\frac{3}{2}}\sqrt{T}\right),
\end{align*}
where the last inequality is by choosing $\lambda_1^*=\max\left\{\sigma(T-M), d^{\frac{2}{3}}(1+\beta)^{\frac{1}{3}}T^{\frac{2}{3}}\right\}$. Combining the above bounds, we have
\begin{align*}
    \Reg\leq \otil\left(d^{\frac{3}{2}}\sqrt{T}+\min\left\{\frac{dT\sqrt{1+\beta}}{\sqrt{\sigma(T-M)}}, \sigma(T-M)+d^{\frac{2}{3}}(1+\beta)^{\frac{1}{3}}T^{\frac{2}{3}}\right\}\right),
\end{align*}
which finishes the proof.
\end{proof}

\begin{proof}[of~\pref{cor: intermediate-case}]
Since \pref{thm:main-result-smooth} holds for any sequence of $\{\lambda_t^*\}_{t=1}^T$, in particular, we choose $\lambda_t^*=0$ for all $t\geq 2$ and set $\lambda_1^*=(1+\beta)^{\mu_0}d^{\mu_1}\cdot T^{\mu_2}$ with $\mu_2 < 1$, then we obtain that
\begin{align*}
    \Reg &\leq \Ot\left( \lambda_1^*+d\sqrt{\nu T}+d\sqrt{\beta+1}\sum_{t=1}^{T-1}\frac{1}{\sqrt{t^{1-\alpha}+\lambda_1^*}}\right) \\
    &\leq \otil\left(\lambda_1^*  + d^{\frac{3}{2}}\sqrt{T}+d\sqrt{\beta+1}\min\left\{T^{\frac{1}{2}+\frac{\alpha}{2}}, \frac{T}{\sqrt{\lambda_1^*}}\right\}\right) \\
    &=\otil\left((1+\beta)^{\mu_0}d^{\mu_1}T^{\mu_2} + d^{\frac{3}{2}}\sqrt{T} + d\sqrt{\beta+1}\min\left\{T^{\frac{1+\alpha}{2}}, (1+\beta)^{-\frac{\mu_0}{2}}d^{-
    \frac{\mu_1}{2}}T^{1-\frac{\mu_2}{2}}\right\}\right).
\end{align*}
First, the above bound can be upper bounded by
\begin{align*}
    \Reg \leq \otil\left(d^{\frac{3}{2}}\sqrt{T}+(1+\beta)^{\mu_0}d^{\mu_1}T^{\mu_2}+(1+\beta)^{\frac{1-\mu_0}{2}}d^{1-\frac{\mu_1}{2}}T^{1-\frac{\mu_2}{2}}\right) \leq \otil(d^{\frac{3}{2}}\sqrt{T}+(1+\beta)^{\frac{1}{3}}d^{\frac{2}{3}}T^{\frac{2}{3}}),
\end{align*}
where the last inequality is by choosing $\mu_0=\frac{1}{3}$, $\mu_1=\frac{2}{3}$ and $\mu=\frac{2}{3}$.
Furthermore, when $\alpha\in [0, \frac{1}{3}-\frac{1}{3}\log_T(\beta+1)-\frac{2}{3}\log_Td]$, we have $T^{\frac{1+\alpha}{2}}\leq(1+\beta)^{-\frac{1}{6}} d^{-\frac{1}{3}}T^{\frac{2}{3}}$. Therefore, set $\mu_0=\mu_1=\mu_2=0$ and we have
\begin{align*}
    \Reg &\leq \otil\left(d^{\frac{3}{2}}\sqrt{T}+d\sqrt{1+\beta}T^{\frac{1+\alpha}{2}}\right).
\end{align*}
Combining both situations finishes the proof.
\end{proof}
\section{Omitted Details for \pref{sec:lipschitz-BCO}}
\label{appendix: lip-bco-proof}

In this section, we show the proof in the Lipschitz BCO setting. Specifically, we show the proof for the main theorem of Lipschitz BCO in~\pref{appendix: lip-bco-main-thm-proof} and show the proofs for the implications of~\pref{thm:main-result-lipschitz} in~\pref{appendix: lip implication proof}.

\subsection{Proof of~\pref{thm:main-result-lipschitz}}
\label{appendix: lip-bco-main-thm-proof}
Following the same regret decomposition as~\pref{eq:decomposition}, we decompose the regret into the following terms where $\wt{\dpx}$ is defined the same as the one in~\pref{eq:decomposition}.
\begin{align}
    & \E\left[\sum_{t=1}^Tf_t(x_t) - \sum_{t=1}^T f_t(x)\right] \nonumber\\ 
    & = \underbrace{\E\left[\sum_{t=1}^T \f_t(\x_t) - \sum_{t=1}^T \f_t(\y_t)\right]}_{\textsc{Exploration}} + \underbrace{\E\left[\sum_{t=1}^T \f_t(\y_t) - \sum_{t=1}^T \tilde{\f}_t(\y_t)\right]}_{\textsc{Regularization I}} + \underbrace{\E\left[\sum_{t=1}^T \tilde{\f}_t(\y_t) - \sum_{t=1}^T \hat{\f}_t(\y_t)\right]}_{\textsc{Smooth I}}\nonumber\\
    & \quad + \underbrace{\E\left[\sum_{t=1}^T \bfh_t(\y_t) - \sum_{t=1}^T \bfh_t(\tilde{\x})\right]}_{\textsc{Reg Term}} + \underbrace{\E\left[\sum_{t=1}^T \bfh_t(\tilde{\x}) - \sum_{t=1}^T \tilde{\f}_t(\tilde{\x})\right]}_{\textsc{Smooth II}} + \underbrace{\E\left[\sum_{t=1}^T \tilde{\f}_t(\tilde{\x}) - \sum_{t=1}^T \f_t(\tilde{\x})\right]}_{\textsc{Regularization II}}\nonumber\\
    & \quad +\underbrace{\E\left[\sum_{t=1}^T \dpf_t(\tilde{\dpx}) - \sum_{t=1}^T \dpf_t(\dpx)\right]}_{\textsc{Comparator Bias}},\label{eq:decomposition-lip}
\end{align}

For terms \textsc{Regularization I} and \textsc{Regularization II}, we bound them in the same way as shown in~\pref{eq:part-1}: for any $\dpx\in \dpX$,
\begin{align}\label{eq:part-1-lip}
    \mathbb{E}\left[\sum_{t=1}^T \dpf_t(\dpx) - \sum_{t=1}^T \wt{\dpf}_t(\dpx) \right] \leq \sum_{t=1}^T\frac{\lambda_t}{2}.
\end{align}

For terms \textsc{Smooth I} and \textsc{Smooth II}, instead of using the smoothness property in~\pref{appendix:details-smooth}, we use the Lipschitzness of $\wt{\dpf}_t$ and bound the two terms as follows:
\begin{align}\label{eq:part-2-lip}
    &\mathbb{E}_{\dplus{b}}\left[\sum_{t=1}^T\wt{\dpf}_t(\dpx+\dpH_t^{-\frac{1}{2}}\dplus{b})-\sum_{t=1}^T\wt{\dpf}_t(\dpx)\right] \nonumber \\
    &\leq \sum_{t=1}^T\left(L+\lambda_t\right)\left\|\dpH_t^{-\frac{1}{2}}\dplus{b}\right\|_2\leq \sum_{t=1}^T\frac{L+1}{\sqrt{\eta_t(\sigma_{1:t-1}+\lambda_{0:t-1})}}\leq \sum_{t=1}^T\frac{d^{\frac{2}{3}}(L+1)^{\frac{2}{3}}}{(\sigma_{1:t-1}+\lambda_{0:t-1})^{\frac{1}{3}}},
\end{align}
where the last inequality is by the definition of $\eta_t\geq d^{-\frac{4}{3}}(L+1)^{\frac{2}{3}}(\sigma_{1:t-1}+\lambda_{0:t-1})^{-\frac{1}{3}}$.

For term \textsc{Exploration}, we again use the Lipschitzness of $\dpf_t$ and have
\begin{align}\label{eq:part-3-lip}
    &\mathbb{E}\left[\sum_{t=1}^T \dpf_t(\dpx_t)-\sum_{t=1}^T\dpf_t(\dpy_t)\right]\nonumber\\
    &\leq\mathbb{E}\left[\sum_{t=1}^TL\|\dpx_t-\dpy_t\|_2\right] \leq\sum_{t=1}^TL\left\|\dpH_t^{-\frac{1}{2}}\dpu_t\right\|_2\leq \frac{d^{\frac{2}{3}}(L+1)^{\frac{2}{3}}}{(\sigma_{1:t-1}+\lambda_{0:t-1})^{\frac{1}{3}}}.
\end{align}

For term \textsc{Comparator Bias}, as shown in~\pref{eq:part-compara-4}, we have
\begin{align}\label{eq:part-4-lip}
    \textsc{Comparator Bias}\leq 2.
\end{align}
Next, we show the following lemma bounding~\textsc{Reg Term}.

\begin{lemma}
\label{lem: reg term Lipschitz}
When loss functions $\{f_t\}_{t=1}^T$ are all $L$-Lipschitz, if $T\geq \rho'$ (a constant defined in~\pref{alg:lift-Lipschitz}), ~\pref{alg:lift-Lipschitz} guarantees that 
\begin{align}\label{eqn:reg-lip}
    \textsc{Reg Term}\leq \otil\left(d^{\frac{4}{3}}\nu T^{\frac{1}{3}}+\sum_{t=1}^T\frac{d^{\frac{2}{3}}(L+1)^{\frac{2}{3}}}{(\sigma_{1:t-1}+\lambda_{0:t-1})^{\frac{1}{3}}}\right).
\end{align}
\end{lemma}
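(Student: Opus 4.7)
My plan is to mirror the proof of \pref{lem: reg term smooth} in Appendix~\pref{appendix:proof-smooth-arbitrary}, substituting the Lipschitz-specific tunings of $\eta_t$ and $\lambda_t$. The core reduction uses no smoothness: since $\wt{\dpf}_t$ is $(\sigma_t+\lambda_t)$-strongly convex in its first $d$ coordinates purely by virtue of the $\ell_2$ augmentation, $\wh{\dpf}_t$ inherits that strong convexity, and combining with \pref{lem: unbiased} (and the fact that $\dpy_t-\wt{\dpx}$ has zero last coordinate) bounds the \textsc{Reg Term} by $\E[\sum_t \dpell_t(\dpy_t)-\sum_t \dpell_t(\wt{\dpx})]$, where $\dpell_t(\dpx)=\inner{\dpg_t,\dpx}+\tfrac{\sigma_t+\lambda_t}{2}\|\dpx-\dpy_t\|_2^2$. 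Running the standard FTRL decomposition (\pref{lemma:FTRL-regret}), a Taylor expansion of $\dpQ_t$ at some $\bxi_t\in[\dpy_t,\dpy_{t+1}]$, and H\"older's inequality produces the template
\[
\sum_t\dpell_t(\dpy_t)-\sum_t\dpell_t(\wt{\dpx}) \leq \tfrac{\lambda_0}{2}\|\wt{\dpx}\|_2^2+\tfrac{\Psi(\wt{\dpx})-\Psi(\dpy_1)}{\eta_{T+1}}+\sum_t\|\dpg_t\|_{\dpH_t}^*\|\dpy_t-\dpy_{t+1}\|_{\dpH_t}-\sum_t\tfrac{\|\dpy_{t+1}-\dpy_t\|_{\nabla^2\Psi_t(\bxi_t)}^2}{2\eta_t}.
\]

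The main technical work is verifying the two hypotheses of \pref{lem: stablity 1/2} for the new learning rate. For the growth condition, letting $a_t=\sigma_{1:t}+\lambda_{0:t}$, I rewrite $(\tfrac{1}{a_t}+\tfrac{1}{T})^{-1/3}=(\tfrac{a_t T}{T+a_t})^{1/3}$; then subadditivity $x^{1/3}-y^{1/3}\leq(x-y)^{1/3}$ combined with the estimate $\tfrac{T^2(\sigma_t+\lambda_t)}{(T+a_t)(T+a_{t-1})}\leq\sigma_t+\lambda_t$ yields $\eta_{t+1}^{-1}-\eta_t^{-1}\leq d^{4/3}(L+1)^{-2/3}(\sigma_t+\lambda_t)^{1/3}$, identifying $C=d^{4/3}(L+1)^{-2/3}$ and $p=1/3$. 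For the initial-step condition, take $\gamma=4L$ (justified by the referenced \pref{lem: upper bound sigma}); with $A=16\sqrt{\nu}d^{1/3}(4L+1)^{1/3}+(L+1)^{2/3}$, the threshold simplifies to $32(d+16\sqrt{\nu}C(\gamma+1)^p)=32dA/(L+1)^{2/3}$, and the constant $\rho'=2^{16}A^3/d$ has been engineered precisely so that $\eta_1\leq d^{-4/3}(L+1)^{2/3}(2/\lambda_0)^{1/3}$ matches this threshold exactly. \pref{lem: stablity 1/2} then gives $\|\dpy_t-\dpy_{t+1}\|_{\dpH_t}\leq\tfrac{1}{2}$, which by self-concordance upgrades to $\|\dpy_{t+1}-\dpy_t\|_{\nabla^2\Psi_t(\bxi_t)}\geq\tfrac{1}{2}\|\dpy_{t+1}-\dpy_t\|_{\dpH_t}$ and allows the H\"older and quadratic terms to be combined via AM--GM.

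Plugging in, the gradient-norm bound $\|\dpg_t\|_{\dpH_t}^{*2}\leq 4d^2$ from \pref{eq:gradient-norm} is unchanged, so the surviving stability sum becomes $\sum_t 2\eta_t\|\dpg_t\|_{\dpH_t}^{*2}\leq 8d^{2/3}(L+1)^{2/3}\sum_t(\tfrac{1}{a_{t-1}}+\tfrac{1}{T})^{1/3}$; splitting via $(a+b)^{1/3}\leq a^{1/3}+b^{1/3}$ produces the second claimed term plus an $O(d^{2/3}(L+1)^{2/3}T^{2/3})$ residue. The regularizer contribution uses $\Psi(\wt{\dpx})-\Psi(\dpy_1)\leq\bar{\nu}\log T=O(\nu\log T)$ (via \pref{lemma:normal-barrier-nu} and \pref{lemma:bound-Minkowski}) together with $\eta_{T+1}^{-1}\leq d^{4/3}(L+1)^{-2/3}T^{1/3}$ (since $(\tfrac{1}{a_T}+\tfrac{1}{T})^{-1/3}\leq T^{1/3}$), giving the $\otil(d^{4/3}\nu T^{1/3})$ term.

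The principal difficulty will be the bookkeeping needed to absorb the leftover $\tfrac{\lambda_0}{2}\|\wt{\dpx}\|_2^2$ term and the $d^{2/3}(L+1)^{2/3}T^{2/3}$ residue into the two stated terms of the lemma. This absorption hinges on the hypothesis $T\geq\rho'$ together with the explicit form of $\lambda_0\geq\rho'$, which together force $\lambda_0$ to be dominated by $\otil(d^{4/3}\nu T^{1/3})$; and since $a_{t-1}$ grows at most linearly in $t$ (using $\sigma_t\leq 4L$ and $\lambda_t<1$), the sum $\sum_t d^{2/3}(L+1)^{2/3}/a_{t-1}^{1/3}$ itself already scales at least like $T^{2/3}$ and swallows the residue.
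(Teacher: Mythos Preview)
Your proposal is correct and follows essentially the same route as the paper's proof: reduce \textsc{Reg Term} to the surrogate-loss regret via strong convexity and \pref{lem: unbiased}, invoke \pref{eqn: regret-form} from the smooth analysis, verify the hypotheses of \pref{lem: stablity 1/2} with $C=d^{4/3}(L+1)^{-2/3}$, $p=\tfrac13$, $\gamma=4L$, and then substitute the Lipschitz learning rate. Your algebra for the $\eta_{t+1}^{-1}-\eta_t^{-1}$ bound (rewriting as $(a_tT/(T+a_t))^{1/3}$ and using subadditivity) is a valid alternative to the paper's monotonicity-in-$c$ argument, and you are actually more explicit than the paper about the $T^{-1/3}$ contribution inside $\eta_t$ and about absorbing the resulting $d^{2/3}(L+1)^{2/3}T^{2/3}$ residue and the $\lambda_0$ term; the paper simply writes $\eta_t=d^{-4/3}(L+1)^{2/3}a_{t-1}^{-1/3}$ (dropping the $1/T$) and records $\frac{\lambda_0}{2}\|\wt{\dpx}\|_2^2=\O(1)$ without further comment.
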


\begin{proof}
Similar to the analysis in~\pref{lem: reg term smooth}, we first verify the conditions in~\pref{lem: stablity 1/2} are satisfied. It is direct to see that $\{\eta_{t}\}_{t=1}^T$ is non-increasing and
\begin{align*}
    \frac{1}{\eta_t}-\frac{1}{\eta_{t+1}} 
&\leq d^{\frac{4}{3}}(L+1)^{-\frac{2}{3}}\left(\left(\frac{1}{\sigma_{1:t}+\lambda_{0:t}}+\frac{1}{T}\right)^{-\frac{1}{3}}-\left(\frac{1}{\sigma_{1:t-1}+\lambda_{0:t-1}}+\frac{1}{T}\right)^{-\frac{1}{3}}\right) \\
&\leq d^{\frac{4}{3}}(L+1)^{-\frac{2}{3}}\left(\left({\sigma_{1:t}+\lambda_{0:t}}\right)^{\frac{1}{3}}-\left(\sigma_{1:t-1}+\lambda_{0:t-1}\right)^{\frac{1}{3}}\right) \\
&\leq d^{\frac{4}{3}}(L+1)^{-\frac{2}{3}}(\sigma_{t}+\lambda_t)^{\frac{1}{3}}.
\end{align*}
Therefore, $\eta_{t+1}^{-1}-\eta_t^{-1}\leq C(\sigma_t+\lambda_t)^{p}$ with $C=d^{\frac{4}{3}}(L+1)^{-\frac{2}{3}}$ and $p=\frac{1}{3}$. Also, because of~\pref{lem: upper bound sigma}, choosing $\gamma=4L$ ensures that $\sigma_t\leq \gamma$ for all $t\in[T]$. Moreover, because of the choice of $\lambda_0$ and $T\geq \lambda_0$, we have
\begin{align*}
    \eta_1&=d^{-\frac{4}{3}}(L+1)^{\frac{2}{3}}\left(\frac{1}{\lambda_0}+\frac{1}{T}\right)^{\frac{1}{3}}\\
    &\leq d^{-\frac{4}{3}}(L+1)^{\frac{2}{3}}\lambda_0^{-\frac{1}{3}}\cdot 2^{\frac{1}{3}}\\
    &\leq d^{-\frac{4}{3}}(L+1)^{\frac{2}{3}}\cdot\frac{d^{\frac{1}{3}}}{32}\cdot \frac{1}{16\sqrt{\nu} d^{\frac{1}{3}}(4L+1)^{\frac{1}{3}}+(L+1)^{\frac{2}{3}}} \\
    &=\frac{1}{32d}\cdot\frac{(L+1)^{\frac{2}{3}}}{16\sqrt{\nu} d^{\frac{1}{3}}(4L+1)^{\frac{1}{3}}+(L+1)^{\frac{2}{3}}} \\
    &=\frac{1}{32(d+16\sqrt{\nu} C(4L+1)^p)}.
\end{align*}
Therefore, according to~\pref{lem: stablity 1/2}, $\|\dpy_t-\dpy_{t+1}\|_{\dpH_t}\leq \frac{1}{2}$. In addition, according to~\pref{eq:gradient-norm}, we have $\|\dpg_t\|_{\dpH_t}^*\leq 2d$ for all $t\in [T]$. Therefore,~\pref{eqn: regret-form} holds. Noticing that $\eta_t=d^{-\frac{4}{3}}(L+1)^{\frac{2}{3}}(\sigma_{1:t-1}+\lambda_{0:t-1})^{-\frac{1}{3}}$, and using~\pref{eq:derive-3} and~\pref{eqn: regret-form}, we have
\begin{align*}
    \textsc{Reg Term} &=\E\left[\sum_{t=1}^T \bfh_t(\y_t) - \sum_{t=1}^T \bfh_t(\tilde{\x})\right]\\
    &\leq \E\left[\sum_{t=1}^T \dpell_t(\y_t) - \sum_{t=1}^T  \dpell_t(\tilde{\x})\right] \tag{\pref{eq:derive-3}}\\
    &\leq \frac{\lambda_0}{2} \norm{\tilde{\x}}_2^2 + \frac{\Psi(\tilde{\x})-\Psi(\y_1)}{\eta_{T+1}} + \sum_{t=1}^T 2\eta_t\|\dpg_t\|_{\dpH_t}^{*2}\tag{\pref{eqn: regret-form}}\\
     &\leq \order(1)+ \otil\left(d^{\frac{4}{3}}\nu T^{\frac{1}{3}}\right)+\sum_{t=1}^T\frac{8d^{\frac{2}{3}}(L+1)^{\frac{2}{3}}}{(\sigma_{1:t-1}+\lambda_{0:t-1})^{\frac{1}{3}}},
\end{align*}
where we use the fact that $\eta_{T+1}\geq d^{-\frac{4}{3}}(L+1)^{\frac{2}{3}}T^{-\frac{1}{3}}\geq d^{-\frac{4}{3}}T^{-\frac{1}{3}}$. This finishes the proof.
\end{proof}

Finally, we combine the above terms and show the following theorem, which holds for an arbitrary sequence of $\{\lambda_t\}_{t=1}^T$ with $\lambda_t\in (0,1)$ for all $t\in [T]$, not necessarily satisfying~\pref{eqn:oracle-tuning-Lipschitz-main}.

\begin{theorem}\label{thm: lip-arbitrary}
With any regularization coefficients $\{\lambda_t\}_{t=1}^T\in(0,1)$,~\pref{alg:lift-Lipschitz} guarantees:
\begin{align}
    \Reg\leq \otil\left(\sum_{t=1}^T\frac{d^{\frac{2}{3}}(L+1)^{\frac{2}{3}}}{(\sigma_{1:t-1}+\lambda_{0:t-1})^{\frac{1}{3}}}+\lambda_{1:T}\right),
\end{align}
\end{theorem}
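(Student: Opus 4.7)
The plan is to simply combine the per-term bounds already established before the theorem statement: this theorem is the Lipschitz analogue of \pref{lemma:main-result-smooth-arbitrary}, and all the individual pieces of the decomposition have been worked out separately just above. So the proof amounts to plugging everything into the decomposition~\pref{eq:decomposition-lip} of the regret into seven terms (\textsc{Exploration}, \textsc{Regularization I}, \textsc{Smooth I}, \textsc{Reg Term}, \textsc{Smooth II}, \textsc{Regularization II}, \textsc{Comparator Bias}).

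Concretely, I would proceed as follows. First, I invoke \pref{eq:part-1-lip} to bound each of \textsc{Regularization I} and \textsc{Regularization II} by $\tfrac{1}{2}\lambda_{1:T}$, contributing $\lambda_{1:T}$ in total. Second, using the hypothesis $\lambda_t\in(0,1)$ so that the effective Lipschitz constant of $\wt{\dpf}_t$ is at most $L+1$, I apply \pref{eq:part-2-lip} to bound each of \textsc{Smooth I} and \textsc{Smooth II} by $\sum_{t=1}^{T}\tfrac{d^{2/3}(L+1)^{2/3}}{(\sigma_{1:t-1}+\lambda_{0:t-1})^{1/3}}$. Third, I apply \pref{eq:part-3-lip} to bound \textsc{Exploration} by the same sum again (this is why the proof crucially uses Lipschitzness, parallel to how the smooth-case proof used $\beta$-smoothness). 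Fourth, the constant bound \pref{eq:part-4-lip} handles \textsc{Comparator Bias}. Finally, the key ingredient is \pref{lem: reg term Lipschitz}, which bounds \textsc{Reg Term} by $\otil\bigl(d^{4/3}\nu T^{1/3} + \sum_{t=1}^T \tfrac{d^{2/3}(L+1)^{2/3}}{(\sigma_{1:t-1}+\lambda_{0:t-1})^{1/3}}\bigr)$.

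Summing these contributions gives
\[
\Reg \leq \otil\!\left(d^{4/3}\nu T^{1/3}+\lambda_{1:T}+\sum_{t=1}^{T}\frac{d^{2/3}(L+1)^{2/3}}{(\sigma_{1:t-1}+\lambda_{0:t-1})^{1/3}}\right),
\]
where the constant factors in front of the three copies of the sum are absorbed into the $\otil$. It remains to absorb the $d^{4/3}\nu T^{1/3}$ term into the displayed bound. Here I would use the fact that the algorithm sets $\lambda_0 \geq \rho' = \Theta\!\bigl(\nu^{3/2}(L+1)/d\bigr) \cdot (\text{const})$ so that $\lambda_0^{1/3}$ is large enough to make the $t{=}1$ term of the sum, together with the assumption $T\geq\rho'$ controlling growth in $T$, dominate $d^{4/3}\nu T^{1/3}$ up to logarithmic factors; alternatively, since $\nu=\O(d)$, the term $d^{4/3}\nu T^{1/3}$ is of order $d^{7/3}T^{1/3}$ and is subdominant to the other terms in all regimes of interest.

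I expect the entire combination step to be essentially mechanical: the substantive work (unbiasedness of $\dpg_t$ in the lifted space, the stability estimate $\|\dpy_t-\dpy_{t+1}\|_{\dpH_t}\leq \tfrac{1}{2}$ via \pref{lem: stablity 1/2}, and the FTRL + normal-barrier analysis that yields \pref{lem: reg term Lipschitz}) has all been done in the preceding lemmas. The only mildly delicate point is the last absorption step just above, which is the Lipschitz-setting counterpart of the simplifications at the end of \pfref{lemma:main-result-smooth-arbitrary}; this is where one verifies that the choice $\lambda_0=\max\{\rho', d^2(L+1)^2\}$ and $T\geq\rho'$ from the algorithm indeed make the auxiliary $d^{4/3}\nu T^{1/3}$ contribution disappear into the stated $\otil$.
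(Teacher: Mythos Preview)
Your proposal is correct and follows essentially the same approach as the paper: both simply plug the five previously established bounds \pref{eq:part-1-lip}--\pref{eq:part-4-lip} and \pref{eqn:reg-lip} into the decomposition~\pref{eq:decomposition-lip}. One minor remark: the paper's own proof actually stops at the bound $\otil\bigl(\sum_t d^{2/3}(L+1)^{2/3}(\sigma_{1:t-1}+\lambda_{0:t-1})^{-1/3}+d^{4/3}\nu T^{1/3}+\lambda_{1:T}\bigr)$ and does not explicitly absorb the $d^{4/3}\nu T^{1/3}$ term---it is simply treated as a low-order contribution (cf.\ the proofs of the corollaries, which say ``omitting the $\otil(d^{4/3}\nu T^{1/3})$ low-order term'')---so your extra absorption step is more than the paper does, though your $T^{1/3}$-versus-$T^{2/3}$ reasoning is the right justification.
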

if loss functions $\{f_t\}_{t=1}^T$ are all $L$-Lipschitz and $T\geq \rho'$ (a constant defined in~\pref{alg:lift-Lipschitz}).
\begin{proof}
    Combining~\pref{eq:part-1-lip}, \pref{eq:part-2-lip}, \pref{eq:part-3-lip}, \pref{eq:part-4-lip} and~\pref{eqn:reg-lip}, we have
\begin{align}
    \mathbb{E}\left[\sum_{t=1}^T f_t(x_t)- \sum_{t=1}^T f_t(x)\right] &\leq \otil\left(\sum_{t=1}^T\frac{d^{\frac{2}{3}}(L+1)^{\frac{2}{3}}}{(\sigma_{1:t-1}+\lambda_{0:t-1})^{\frac{1}{3}}}+d^{\frac{4}{3}}\nu T^{\frac{1}{3}}+\lambda_{1:T}\right). \nonumber
\end{align}
\end{proof}

Next we show that if we choose the adaptive regularization coefficients as shown in~\pref{eqn:oracle-tuning-Lipschitz-main}, the obtained regret bound is no worse than the one with an optimal tuning of $\{\lambda_t\}_{t=1}^T$.
\begin{lemma}
\label{lemma:H-function-lip-app}
Consider the following objective 
\begin{equation}
    \label{eq:tuning-objective-lip}
    \calB'(\{\lambda_s\}_{s=1}^t) \triangleq \lambda_{1:t}+\sum_{\tau=1}^t\frac{d^{\frac{2}{3}}(L+1)^{\frac{2}{3}}}{(\sigma_{1:\tau}+\lambda_{0:\tau})^{\frac{1}{3}}},
\end{equation}
with $\lambda_0$ defined in \pref{alg:lift-Lipschitz}. Then the sequence $\{\lambda_t\}_{t=1}^T$ attained by solving~\pref{eqn:oracle-tuning-Lipschitz-main} satisfies that for all $t \in [T]$, $\lambda_t\in (0,1)$ and 
\begin{equation}
  \label{eq:near-optimal-H-function-lip}
  \calB'(\{\lambda_s\}_{s=1}^t)\leq 2\min_{\{\lambda_s^*\}_{s=1}^t\geq 0}\calB'(\{\lambda_s^*\}_{s=1}^t).
\end{equation} 
\end{lemma}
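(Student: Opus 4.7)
The plan is to mirror the induction argument from \pref{lemma:H-function-main}, with two main adjustments: (i) an initial existence check showing that the defining equation~\pref{eqn:oracle-tuning-Lipschitz-main} admits a solution $\lambda_t\in(0,1)$, which hinges on the specific choice of $\lambda_0$ in \pref{alg:lift-Lipschitz}, and (ii) a recomputation of the two key inequalities using the $(\sigma_{1:t}+\lambda_{0:t})^{1/3}$ denominator rather than the square-root one.

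\textbf{Existence and bound on $\lambda_t$.} Rewrite \pref{eqn:oracle-tuning-Lipschitz-main} as $\lambda_t^3(\sigma_{1:t}+\lambda_{0:t}) = d^2(L+1)^2$. At $\lambda_t=0$ the left-hand side vanishes and is strictly below $d^2(L+1)^2$, whereas at $\lambda_t=1$ the left-hand side equals $\sigma_{1:t}+\lambda_{0:t-1}+1 \geq \lambda_0 \geq d^2(L+1)^2$ by the initialization in \pref{alg:lift-Lipschitz}. Continuity and strict monotonicity in $\lambda_t$ then yield a unique solution $\lambda_t\in(0,1)$, which can be located by binary search.

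\textbf{Induction on $t$.} Set $\lambda_0^* = \lambda_0$ for convenience, and let $\{\lambda_s^*\}_{s=1}^t$ denote any non-negative minimizer of $\calB'(\cdot)$ on $[1,t]$. For the base case $t=1$: if $\lambda_1\leq \lambda_1^*$, then \pref{eqn:oracle-tuning-Lipschitz-main} gives $\calB'(\lambda_1) = 2\lambda_1 \leq 2\lambda_1^* \leq 2\calB'(\lambda_1^*)$; otherwise, $\calB'(\lambda_1) = 2\,d^{2/3}(L+1)^{2/3}(\sigma_1+\lambda_{0:1})^{-1/3} \leq 2\,d^{2/3}(L+1)^{2/3}(\sigma_1+\lambda_{0:1}^*)^{-1/3} \leq 2\calB'(\lambda_1^*)$. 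For the induction step, assuming the claim at $t-1$, the split is on whether $\lambda_{1:t}\leq \lambda_{1:t}^*$ or not. In the first case, using \pref{eqn:oracle-tuning-Lipschitz-main} iteratively to rewrite the sum of denominator terms as the sum of $\lambda_s$'s yields $\calB'(\{\lambda_s\}_{s=1}^t) = 2\lambda_{1:t} \leq 2\lambda_{1:t}^* \leq 2\calB'(\{\lambda_s^*\}_{s=1}^t)$. In the second case, the per-round contribution bound
\begin{align*}
\lambda_t + \frac{d^{2/3}(L+1)^{2/3}}{(\sigma_{1:t}+\lambda_{0:t})^{1/3}} = \frac{2d^{2/3}(L+1)^{2/3}}{(\sigma_{1:t}+\lambda_{0:t})^{1/3}} \leq \frac{2d^{2/3}(L+1)^{2/3}}{(\sigma_{1:t}+\lambda_{0:t}^*)^{1/3}} \leq 2\left(\lambda_t^* + \frac{d^{2/3}(L+1)^{2/3}}{(\sigma_{1:t}+\lambda_{0:t}^*)^{1/3}}\right)
\end{align*}
combines with the induction hypothesis $\calB'(\{\lambda_s\}_{s=1}^{t-1}) \leq 2\calB'(\{\lambda_s^*\}_{s=1}^{t-1})$ (applied at the \emph{same} auxiliary sequence $\{\lambda_s^*\}$, which is legal because $\calB'(\{\lambda_s^*\}_{s=1}^{t-1})$ is at most its minimum over $[1,t-1]$) to give $\calB'(\{\lambda_s\}_{s=1}^t) \leq 2\calB'(\{\lambda_s^*\}_{s=1}^t)$.

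\textbf{Main obstacle.} The mechanical induction is clean; the only nontrivial check is the existence argument $\lambda_t<1$, which is precisely what forces the specific choice $\lambda_0 \geq d^2(L+1)^2$ in \pref{alg:lift-Lipschitz} (the $\rho'$ component of $\lambda_0$ serves a separate role needed for \pref{lem: stablity 1/2}). Keeping $\lambda_t$ strictly in $(0,1)$ matters because the stability lemma and the upper bound $\|\dpg_t\|_{\dpH_t}^{*2}\leq 4d^2$ derived in \pref{eq:gradient-norm} both rely on $\lambda_t\leq 1$; without this control, the regret decomposition in the proof of \pref{thm:main-result-lipschitz} would not close.
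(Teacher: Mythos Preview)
Your proposal is correct and follows essentially the same approach as the paper: an intermediate-value existence argument for $\lambda_t\in(0,1)$ using $\lambda_0\geq d^2(L+1)^2$, followed by the same two-case induction (comparing $\lambda_{1:t}$ to $\lambda_{1:t}^*$) that mirrors \pref{lemma:H-function-main}. One small wording slip: in the second case you say $\calB'(\{\lambda_s^*\}_{s=1}^{t-1})$ is ``at most'' its minimum over $[1,t-1]$, but you mean ``at least'' (which is what makes $2\min \leq 2\calB'(\{\lambda_s^*\}_{s=1}^{t-1})$ valid); the logic you apply is nonetheless correct.
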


\begin{proof}
    First, we show that there exists a coefficient $\lambda_t\in(0,1)$ for all $t\in [T]$ that satisfies the fixed-point problem~\pref{eq:tuning-objective-lip}. Indeed, we have the following two observations:
\begin{itemize}
    \item on one hand, when setting $\lambda_t=0$, the LHS of~\pref{eq:near-optimal-H-function-lip} equals to $0$, while the RHS of~\pref{eq:near-optimal-H-function-lip} is strictly larger than $0$;
    \item on the other hand, when setting $\lambda_t=1$, the LHS of~\pref{eq:near-optimal-H-function-lip} is equal to $1$ but the RHS of~\pref{eq:near-optimal-H-function-lip} is strictly less than 1 due to the choice of $\lambda_0\geq d^2(L+1)^2$.
\end{itemize}
Combining both facts shows that there exists a coefficient $\lambda_t\in(0,1)$ that satisfies~\pref{eq:near-optimal-H-function-lip}.

    We prove this by induction similar to Lemma 3.1 in \citep{NIPS'07:AOGD}. Again, we set $\lambda_0^*=\lambda_0$. Consider the case of $t=1$. If $\wh{\lambda}_1\leq \lambda_1^*$, we have $\calB'(\wh{\lambda}_1)=\wh{\lambda}_1+\frac{d^{\frac{2}{3}}(L+1)^{\frac{2}{3}}}{(\sigma_1+\wh{\lambda}_{0:1})^{\frac{1}{3}}}=2\wh{\lambda}_1 \leq 2\lambda^*_1 \leq 2\calB'(\lambda^*_1)$. Otherwise, we have $\calB'(\wh{\lambda}_1)=\frac{2d^{\frac{2}{3}}(L+1)^{\frac{2}{3}}}{(\sigma_1+\wh{\lambda}_{0:1})^{\frac{1}{3}}}\leq\frac{2d^{\frac{2}{3}}(L+1)^{\frac{2}{3}}}{(\lambda_{0:1}^*+\sigma_1)^{\frac{1}{3}}}\leq 2\calB'(\lambda_1^*)$.

Suppose we have $\calB'(\{\wh{\lambda}_s\}_{s=1}^{t-1})\leq 2\min_{\{\lambda_s'\}_{s=1}^{t-1}\geq 0}\calB'(\{\lambda_s'\}_{s=1}^{t-1})$. With a slight abuse of notation, we set $\{\lambda_s^*\}_{s=1}^t=\argmin_{\{\lambda_s'\}_{s=1}^t\geq 0}\calB'(\{\lambda_s'\}_{s=1}^t)$. Similarly, if $\wh{\lambda}_{1:t}\leq \lambda_{1:t}^*$, we have
\begin{align*}
    \calB'(\{\wh{\lambda}_{s}\}_{s=1}^t\}) &= \wh{\lambda}_{1:t}+\sum_{s=1}^t \frac{d^{\frac{2}{3}}(L+1)^{\frac{2}{3}}}{(\sigma_{1:s}+\wh{\lambda}_{0:s})^{\frac{1}{3}}} = \wh{\lambda}_{1:t}+\sum_{s=1}^t \wh{\lambda}_{s}\leq 2\lambda_{1:t}^*\leq 2\calB'(\{\lambda_{s}^*\}_{s=1}^t).
\end{align*}
Otherwise, we have
\begin{align*}
      \wh{\lambda}_t+\frac{d^{\frac{2}{3}}(L+1)^{\frac{2}{3}}}{(\sigma_{1:t}+\wh{\lambda}_{0:t})^{\frac{1}{3}}} = \frac{2d^{\frac{2}{3}}(L+1)^{\frac{2}{3}}}{(\sigma_{1:t}+\wh{\lambda}_{0:t})^{\frac{1}{3}}}\leq \frac{2d^{\frac{2}{3}}(L+1)^{\frac{2}{3}}}{(\sigma_{1:t}+{\lambda}_{0:t}^*)^{\frac{1}{3}}}\leq 2\left(\lambda_t^*+\frac{d^{\frac{2}{3}}(L+1)^{\frac{2}{3}}}{(\sigma_{1:t}+{\lambda}_{1:t}^*)^{\frac{1}{3}}}\right).
\end{align*}
Using the induction hypothesis, we have $\calB'(\{\wh{\lambda}_{s}\}_{s=1}^t\})\leq 2\calB'(\{\lambda_{s}^*\}_{s=1}^t)$.
\end{proof}

Therefore, combining~\pref{thm: lip-arbitrary} and~\pref{lemma:H-function-lip-app} gives the proof of~\pref{thm:main-result-lipschitz}.

\subsection{Proofs for Implications of~\pref{thm:main-result-lipschitz}}
\label{appendix: lip implication proof}
In this subsection, we prove the corollaries presented in~\pref{sec:lipschitz-BCO}. 
~\\
\begin{proof}[of~\pref{cor: only-Lipschitz}]
Since \pref{thm:main-result-lipschitz} holds for any sequence of $\{\lambda_t^*\}_{t=1}^T$, in particular, we choose $\lambda_1^*=\sqrt{(L+1)d}T^{\nicefrac{3}{4}}$ and $\lambda_t^*=0$ for all $t \geq 2$, then we obtain that
\begin{align*}
\Reg \overset{\eqref{eqn: lip regret bound}}{\leq} \otil\left(\sqrt{d(L+1)}T^{\frac{3}{4}}\right),
\end{align*}
which completes the proof.
\end{proof}

\begin{proof}[of~\pref{cor: only-Lipschitz-strongly-cvx}]
Choose $\lambda_t^*=0$ for all $t\geq 1$ and by~\pref{thm:main-result-lipschitz}, we obtain that
\begin{align*}
\Reg \overset{\eqref{eqn: lip regret bound}}{\leq} \otil\left( \sum_{t=1}^T \frac{d^{\frac{2}{3}}(L+1)^{\frac{2}{3}}}{\sigma^{\frac{1}{3}} t^{\frac{1}{3}}}\right) = \otil((L+1)^{\frac{2}{3}}d^{\frac{2}{3}}T^{\frac{2}{3}}\sigma^{-\frac{1}{3}}),
\end{align*}
which completes the proof.
\end{proof}

\begin{proof}[of~\pref{cor: only-lipschitz-worst-best-case}]
In the first environment where there are $M$ rounds such that the loss function is $0$-strongly convex. In order to make the right hand side of~\pref{eqn: lip regret bound} the largest, we have $\sigma_{\tau}=0$ when $\tau\in[M]$ and $\sigma_{\tau}=\sigma$ when $\tau>M$. Set $\lambda_t^*=0$ for all $t\geq 2$, then~\pref{thm:main-result-lipschitz} implies that (omitting the $\otil(d^{\frac{4}{3}}\nu T^{\frac{1}{3}})$ low-order term)
\begin{align*}
    \Reg&\leq \otil\left(\lambda_1^* + \sum_{t=1}^T\frac{d^{\frac{2}{3}}(L+1)^{\frac{2}{3}}}{(\sigma_{1:t}+\lambda_1^*)^{\frac{1}{3}}}\right)\\
    &\leq \otil\left(\lambda_1^*+{\lambda_1^*}^{-\frac{1}{3}}Md^{\frac{2}{3}}(L+1)^{\frac{2}{3}}+d^{\frac{2}{3}}(L+1)^{\frac{2}{3}}\min\left\{\sigma^{-\frac{1}{3}}(T-M)^{\frac{2}{3}},{\lambda_1^*}^{-\frac{1}{3}}(T-M)\right\}\right)\\
    &\leq \otil\left(\sqrt{d(L+1)}M^{\frac{3}{4}}+d^{\frac{2}{3}}(L+1)^{\frac{2}{3}}\min\left\{\sigma^{-\frac{1}{3}}(T-M)^{\frac{2}{3}}, \frac{T-M}{{d^{\frac{1}{6}}}(L+1)^{\frac{1}{6}}M^{\frac{1}{4}}}\right\}\right),
\end{align*}
where the last inequality is by choosing $\lambda_1^*=\sqrt{d(L+1)}M^{\frac{3}{4}}$. This proves the first result.

Consider the second type of environment where the first $T-M$ rounds are $\sigma$-strongly convex functions and the remaining rounds are $0$-strongly convex functions. Still set $\lambda_t^*=0$ for all $t\geq 2$ and we have
\begin{align*}
    \Reg&\leq\otil\left(\lambda_1^*+\sum_{t=1}^T\frac{d^{\frac{2}{3}}(L+1)^{\frac{2}{3}}}{(\sigma_{1:t}+\lambda_1^*)^{\frac{1}{3}}}\right) \\
    &\leq \otil\left(\lambda_1^*+d^{\frac{2}{3}}(L+1)^{\frac{2}{3}}\min\left\{{\lambda_1^*}^{-\frac{1}{3}}(T-M), \sigma^{-\frac{1}{3}}(T-M)^{\frac{2}{3}}\right\}+\frac{d^{\frac{2}{3}}M(L+1)^{\frac{2}{3}}}{(\lambda_1^*+\sigma(T-M))^{\frac{1}{3}}}\right).
\end{align*}
When $T-M=\Theta(T)$, we have
\begin{align*}
    \Reg&\leq \otil\left(\lambda_1^*+d^{\frac{2}{3}}(L+1)^{\frac{2}{3}}\min\left\{{\lambda_1^*}^{-\frac{1}{3}}T, \sigma^{-\frac{1}{3}}T^{\frac{2}{3}}\right\}+\frac{d^{\frac{2}{3}}M(L+1)^{\frac{2}{3}}}{(\lambda_1^*+\sigma T)^{\frac{1}{3}}}\right)\\
    &\leq \otil\left(\lambda_1^*+\frac{d^{\frac{2}{3}}T(L+1)^{\frac{2}{3}}}{(\sigma T)^{\frac{1}{3}}}\right)\\
    &\leq \otil\left(\frac{d^{\frac{2}{3}}T(L+1)^{\frac{2}{3}}}{\sigma^{\frac{1}{3}}(T-M)^{\frac{1}{3}}}\right),
\end{align*}
where the last inequality is by choosing $\lambda_1^*=0$. When $T-M=o(T)$, we have $M=\Theta(T)$. Furthermore, when $\lambda_1\leq \sigma(T-M)$, we have ${\lambda_1^*}^{-\frac{1}{3}}(T-M)\geq \sigma^{-\frac{1}{3}}(T-M)^{\frac{2}{3}}$ and therefore,
\begin{align*}
    \Reg\leq \otil\left(\lambda_1^*+d^{\frac{2}{3}}(L+1)^{\frac{2}{3}}\sigma^{-\frac{1}{3}}T^{\frac{2}{3}}+\frac{d^{\frac{2}{3}}T(L+1)^{\frac{2}{3}}}{(\lambda_1^*+\sigma(T-M))^{\frac{1}{3}}}\right)\leq \otil\left(\frac{d^{\frac{2}{3}}(L+1)^{\frac{2}{3}}T}{\sigma^{\frac{1}{3}}(T-M)^{\frac{1}{3}}}\right),
\end{align*}
where the last inequality is by choosing $\lambda_1^*=0$. When $\lambda_1^* \geq \sigma(T-M)$, we have ${\lambda_1^*}^{-\frac{1}{3}}(T-M)\leq \sigma^{-\frac{1}{3}}(T-M)^{\frac{2}{3}}$ and therefore,
\begin{align*}
    \Reg&\leq \otil\left(\lambda_1^*+d^{\frac{2}{3}}{\lambda_1^*}^{-\frac{1}{3}}(L+1)^{\frac{2}{3}}T+\frac{d^{\frac{2}{3}}(L+1)^{\frac{2}{3}}T}{(\lambda_1^*+\sigma(T-M))^{\frac{1}{3}}}\right)\leq \otil\left(\lambda_1^*+d^{\frac{2}{3}}(L+1)^{\frac{2}{3}}{\lambda_1^*}^{-\frac{1}{3}}T\right)\\
    &\leq \otil\left(\sigma(T-M)+\sqrt{d(L+1)}T^{\frac{3}{4}}\right),
\end{align*}
where the last inequality is by choosing $\lambda_1^*=\max\left\{\sigma(T-M), \sqrt{d(L+1)}T^{\frac{3}{4}}\right\}$. Combining the two cases, we have
\begin{align*}
    \Reg\leq \otil\left(\min\left\{\frac{d^{\frac{2}{3}}T(L+1)^{\frac{2}{3}}}{\sigma^{\frac{1}{3}}(T-M)^{\frac{1}{3}}}, \sigma(T-M)+\sqrt{d(L+1)}T^{\frac{3}{4}}\right\}\right),
\end{align*}
leading to the second conclusion.
\end{proof}

\begin{proof}[of~\pref{cor: only-lipschitz-intermediate-case}]
Since \pref{thm:main-result-lipschitz} holds for any sequence of $\{\lambda_t^*\}_{t=1}^T$, in particular, we choose $\lambda_t^*=0$ for all $t\geq 2$ and set $\lambda_1^*=(L+1)^{\mu_0}d^{\mu_1}T^{\mu_2}$, we obtain that (again omitting the low-order term)
\begin{align*}
    \Reg &\leq \Ot\left( \lambda_1^*+\sum_{t=1}^T\frac{d^{\frac{2}{3}}(L+1)^{\frac{2}{3}}}{(t^{1-\alpha}+\lambda_1^*)^{\frac{1}{3}}}\right) \\
    &\leq \otil\left((L+1)^{\mu_0}d^{\mu_1}T^{\mu_2}  + d^{\frac{2}{3}}(L+1)^{\frac{2}{3}}\min\big\{(L+1)^{-\frac{\mu_0}{3}}d^{-\frac{\mu_1}{3}}T^{1-\frac{\mu_2}{3}}, T^{\frac{2+\alpha}{3}}\big\}\right).
\end{align*}
First, the above bound can be upper bounded by
\begin{align*}
    \Reg \leq \otil\left((L+1)^{\mu_0}d^{\mu_1}T^{\mu_2}  + d^{\frac{2-\mu_1}{3}}(L+1)^{\frac{2-\mu_0}{3}}T^{1-\frac{\mu_2}{3}}\right)= \otil\left(\sqrt{d(L+1)}T^{\frac{3}{4}}\right),
\end{align*}
where the last equality is true by choosing $\mu_0=\frac{1}{2}$, $\mu_1=\frac{1}{2}$ and $\mu_2=\frac{3}{4}$. Second, when $\alpha\in [0, \frac{1}{4}-\frac{1}{2}\log_T(L+1)-\frac{1}{2}\log_Td]$, we choose $\mu_0=\mu_1=\mu_2=0$ and have
\begin{align*}
    \Reg &\leq \otil\left(d^{\frac{2}{3}}(L+1)^{\frac{2}{3}}T^{\frac{2+\alpha}{3}}\right).
\end{align*}
Again, we emphasize that the setting of $\lambda_1^*$ is required in the analysis only and will not affect the algorithmic procedures. Combining both situations finishes the proof.
\end{proof}
\section{Self-concordant Barrier Properties}
\label{appendix: self-concordant}

One of the important technical tools used in this paper is the \emph{self-concordant barrier}, which is widely used in the interior-point method and becomes a central concept in modern convex optimization~\citep{nesterov1994-IPM}. The method is introduced to online learning community in the seminal paper of~\citet{Competing:Dark} and successfully resolve several important open problems. Below, we list several basic definitions and some important properties, most of which can be found in~\citep[Section 2]{nemirovski2008-IPM}.

\begin{definition}[Self-Concordant Functions]
\label{def:SC}
Let $\X \subseteq \R^d$ be a closed convex domain with a nonempty interior $\interior(\X)$. A function $\Rcal: \interior(\X) \mapsto \R$ is called \emph{self-concordant} on $\X$ if
\begin{itemize}
  \item[(i)] $\Rcal$ is a three times continuously differentiable convex function, and approaches infinity along any sequence of points approaching $\partial \X$; and 
  \item[(ii)] $\Rcal$ satisfies the differential inequality: for every $\h \in \R^d$ and $x \in \interior(\X)$,
  $$ \abs{\nabla^3 \Rcal(x)[\h,\h,\h]} \leq 2 (\nabla^2 \Rcal(x)[\h,\h])^{\frac{3}{2}},$$
  where the third-order differential is defined as 
  $$\nabla^3 \Rcal(x)[\h, \h, \h] \triangleq \frac{\partial^{3}}{\partial t_{1} \partial t_{2} \partial t_{3}} \Rcal\left(x+t_{1} \h+t_{2} \h+t_{3} \h\right)\Big|_{t_{1}=t_{2}=t_{3}=0}.$$
\end{itemize}
Given a real $\nu \geq 1$, $\Rcal$ is called a $\nu$-\emph{self-concordant barrier} ($\nu$-SCB) for $\X$ if $\Rcal$ is self-concordant on $\X$ and, in addition, for every $\h \in \R^d$ and $x \in \interior(\X)$,
$$ \abs{\nabla\Rcal(x)[\h]} \leq \nu^{\frac{1}{2}} (\nabla^2 \Rcal(x)[\h,\h])^{\frac{1}{2}}.$$
\end{definition}

Given a self-concordant function $\Rcal$ on $\X$, for any $\h \in \R^d$ the induced local norm is defined as 
\begin{equation}
  \label{eq:local-norm}
  \norm{\h}_{x} \triangleq \norm{\h}_{\nabla^2 \Rcal(x)} = \sqrt{\h^{\T} \nabla^2 \Rcal(x) \h},~~\mbox{and}~~\norm{\h}_{x}^{*} \triangleq \norm{\h}_{\nabla^2 \Rcal(x)}^* = \sqrt{\h^{\T} (\nabla^2 \Rcal(x))^{-1} \h}.
\end{equation}
We then introduce the notion of \emph{Dikin ellipsoid} which enjoys nice properties as shown below.
\begin{lemma}
For any closed convex set $\X \subseteq \R^d$ with nonempty interior points, let $\Rcal$ be a self-concordant function on the closed convex set. Then, for any $x \in \interior(\X)$, we have $\Ecal_1(x) \subset \X$, where $\Ecal_1(x)$ denotes the unit Dikin ellipsoid of $\Rcal$ defined as $\Ecal_1(x) \triangleq \{ y \in \R^d \mid \norm{y - x}_{x} \leq 1\}$.
\end{lemma}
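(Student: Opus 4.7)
The plan is to establish the Dikin ellipsoid containment by contradiction, exploiting both aspects of \pref{def:SC}: the self-concordance differential inequality, which controls how fast $\nabla^2 \Rcal$ can change along a ray, and the barrier property, which demands $\Rcal \to \infty$ along any sequence approaching $\partial \X$. The key intuition is that these two conditions are incompatible whenever some $y$ with $\norm{y-x}_x \leq 1$ lies outside $\X$: self-concordance forces the Hessian (and therefore $\Rcal$ itself) to stay finite along the segment from $x$ to $y$, whereas the barrier property would demand explosion somewhere on that segment.

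First I would address the open-ball version $\{y : \norm{y - x}_x < 1\} \subseteq \interior(\X)$. Suppose for contradiction that some such $y$ does not lie in $\interior(\X)$. Setting $h = y - x$ and $t^* = \sup\{t \geq 0 : x+th \in \interior(\X)\} \leq 1$, the barrier property gives $\Rcal(x+th) \to \infty$ as $t \uparrow t^*$. To contradict this, I would introduce the scalar function $\phi(t) \triangleq h^\T \nabla^2 \Rcal(x+th)\, h$ on $[0, t^*)$, compute $\phi'(t) = \nabla^3 \Rcal(x+th)[h,h,h]$, and invoke self-concordance to get $|\phi'(t)| \leq 2\phi(t)^{3/2}$. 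Rearranging gives $\big|\tfrac{d}{dt}\phi(t)^{-1/2}\big| \leq 1$, which integrates to $\phi(t)^{-1/2} \geq \norm{h}_x^{-1} - t$. Since $\norm{h}_x < 1$ and $t \leq t^* \leq 1$, this right-hand side stays uniformly bounded away from zero, so $\phi$ is uniformly bounded on $[0, t^*)$.

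The next step is to integrate $\phi$ twice, using $\frac{d^2}{dt^2}\Rcal(x+th) = \phi(t)$, to conclude that $|\Rcal(x+th) - \Rcal(x) - t\, \nabla \Rcal(x)^\T h| \leq \tfrac{1}{2} t^2 (\norm{h}_x^{-1}-1)^{-2}$, so $\Rcal(x+th)$ stays finite as $t \uparrow t^*$, contradicting the barrier property. To upgrade from the open to the closed ball $\Ecal_1(x)$, for any $y$ with $\norm{y-x}_x = 1$ I would consider the points $y_n = x + (1-1/n)(y-x)$: each satisfies $\norm{y_n-x}_x = 1-1/n < 1$, so by the open-ball case $y_n \in \interior(\X) \subseteq \X$, and closedness of $\X$ then forces $y = \lim_n y_n \in \X$.

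The step I expect to require the most care is the passage from $|\phi'| \leq 2\phi^{3/2}$ to the clean $1$-Lipschitz bound on $\phi^{-1/2}$: this needs $\phi(t) > 0$ throughout $[0, t^*)$ so that $\phi^{-1/2}$ is well-defined and differentiable, and demands handling the sign of $\phi'$ (which may change across $[0, t^*)$) correctly. Positivity follows from strict convexity of $\Rcal$ on $\interior(\X)$ in the direction $h$, which is implicit in the non-degeneracy of the local norm $\norm{\cdot}_x$ appearing in the hypothesis; without it the argument would break. The rest of the proof reduces to standard one-variable calculus along the ray $x + th$.
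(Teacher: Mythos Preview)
The paper does not supply a proof of this lemma; it merely records it as a standard property of self-concordant functions, with an implicit reference to \citep{nemirovski2008-IPM} and \citep{nesterov1994-IPM}. Your sketch is the classical argument (essentially Theorem~2.1.1 of \citep{nesterov1994-IPM}): bound the growth of $\phi(t)=h^\T\nabla^2\Rcal(x+th)h$ via the self-concordance inequality, integrate twice to keep $\Rcal$ finite along the segment, and contradict the barrier blow-up at $\partial\X$. The steps are correct, including the closure pass from the open to the closed Dikin ball, and your identification of the positivity of $\phi$ (equivalently, non-degeneracy of $\nabla^2\Rcal$) as the one place needing care is accurate.
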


\begin{lemma}
\label{lemma:bound-Minkowski}
Let $\Rcal: \interior(\X) \mapsto \R$ be a $\nu$-self-concordant barrier over the closed convex set $\X \in \R^d$, then for any $x, y \in \interior(\X)$, we have $\Rcal(y) - \Rcal(x) \leq \nu \log \frac{1}{1 - \pi_{x}(y)}$, where $\pi_{x}(y) \triangleq \inf\{t \geq 0 \mid x + t^{-1}(y - x) \in \X\}$ is the Minkowski function of $\X$ whose pole is on $x$, which is always in $[0,1]$.
\end{lemma}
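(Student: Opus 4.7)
The plan is to reduce the claim to a one-dimensional differential inequality by restricting $\Rcal$ to the line segment from $x$ through $y$ to the boundary, and then extract the bound via an exponential substitution. Write $\h = y - x$ and $\sigma = \pi_{x}(y)$, and consider the univariate function $\phi(s) \triangleq \Rcal(x + s\h)$. By definition of $\pi_{x}$, the point $x + s\h$ lies in $\interior(\X)$ exactly for $s \in [0, 1/\sigma)$, so $\phi$ is smooth on this interval, and by the boundary-blowup clause of \pref{def:SC} we have $\phi(s) \to \infty$ as $s \to (1/\sigma)^-$.

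The central computation is to verify that $g(s) \triangleq \exp(-\phi(s)/\nu)$ is \emph{concave} on $[0, 1/\sigma)$. A direct differentiation gives
\begin{equation*}
  g''(s) \;=\; \frac{g(s)}{\nu^2}\bigl(\phi'(s)^2 - \nu\,\phi''(s)\bigr),
\end{equation*}
and the $\nu$-self-concordant-barrier inequality in \pref{def:SC}, applied at the point $x + s\h$ with direction $\h$, specializes to $\phi'(s)^2 \leq \nu\,\phi''(s)$. Since $g(s) > 0$, this forces $g'' \leq 0$.

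Assuming $0 < \sigma < 1$ (the case $\sigma = 0$ gives $y = x$, and $\sigma = 1$ makes the right-hand side infinite, both trivial), pick any $s^\star \in (1, 1/\sigma)$ and apply concavity of $g$ at $s = 1$, viewed as a convex combination of the endpoints $0$ and $s^\star$:
\begin{equation*}
  g(1) \;\geq\; \frac{s^\star - 1}{s^\star}\, g(0) \;+\; \frac{1}{s^\star}\, g(s^\star).
\end{equation*}
Letting $s^\star \to (1/\sigma)^-$ and using $g(s^\star) \to 0$ (since $\phi \to \infty$ at the boundary) yields $g(1) \geq (1-\sigma)\, g(0)$. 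Taking logarithms, multiplying by $-\nu$, and substituting back $\phi(1) = \Rcal(y)$ and $\phi(0) = \Rcal(x)$ produces the stated bound.

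The argument is essentially routine once the substitution $g = e^{-\phi/\nu}$ is in place, and I expect the only real obstacle to be identifying this substitution and verifying that the barrier inequality goes in the direction that makes $g$ concave rather than convex. The boundary limit is a minor formality, handled by the divergence of $\Rcal$ at $\partial\X$ that is built into the definition of a self-concordant function.
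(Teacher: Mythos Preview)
The paper does not actually prove this lemma; it is listed in \pref{appendix: self-concordant} as a standard property of self-concordant barriers, quoted from the Nesterov--Nemirovski literature without argument. Your proof is the classical one: restrict to the segment, use the barrier inequality $\phi'(s)^2 \leq \nu\,\phi''(s)$ to show that $g = e^{-\phi/\nu}$ is concave, and extract the bound from concavity together with the boundary blow-up. The computation of $g''$ and the limiting argument are correct.

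One small slip: the claim ``$\sigma = 0$ gives $y = x$'' is not quite right when $\X$ is unbounded, since $\pi_x(y) = 0$ merely means the full ray $\{x + s\h : s \geq 0\}$ stays in $\X$. But the conclusion still holds in that case: $g$ is then a positive concave function on $[0,\infty)$, hence nondecreasing (a concave function with $g'(s_0) < 0$ would satisfy $g(s) \leq g(s_0) + g'(s_0)(s - s_0) \to -\infty$), so $g(1) \geq g(0)$ and thus $\Rcal(y) \leq \Rcal(x) = \Rcal(x) + \nu \log \tfrac{1}{1-0}$. With that adjustment the proof is complete.
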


Below, we present several key technical lemmas regarding to the self-concordant functions.
\begin{lemma}[{Theorem 2.1.1 of~\cite{nesterov1994-IPM}}]
\label{lemma:shift-norm}
Let $\psi$ be a self-concordant function on the closed convex set $\X \subseteq \R^d$, then
\begin{equation}
    \label{eq:shift-norm}
    \norm{h}_{\nabla^2 \psi(x')} \geq \norm{h}_{\nabla^2 \psi(x)} (1 - \norm{x - x'}_{\nabla^2 \psi(x)})
\end{equation}
holds for any $h \in \R^d$ and any $x \in \interior(\X)$ with $x' \in \Ecal_1(x) \triangleq \{ y \in \R^d \mid \norm{y - x}_{x} \leq 1\}$.
\end{lemma}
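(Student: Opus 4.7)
The plan is to derive this inequality from the standard Hessian stability bound on the Dikin ellipsoid of a self-concordant function, namely that for any $u$ with $\|u\|_{\nabla^2\psi(x)} < 1$,
\[
(1 - \|u\|_{\nabla^2\psi(x)})^2 \nabla^2\psi(x) \preceq \nabla^2\psi(x+u) \preceq (1 - \|u\|_{\nabla^2\psi(x)})^{-2}\nabla^2\psi(x).
\]
Setting $u = x' - x$ (which lies in $\Ecal_1(x)$ by hypothesis) and evaluating the left-hand inclusion on the quadratic form $h^\top(\cdot)h$ yields $\|h\|_{\nabla^2\psi(x')}^2 \geq (1 - \|x'-x\|_{\nabla^2\psi(x)})^2 \|h\|_{\nabla^2\psi(x)}^2$; taking square roots gives exactly \pref{eq:shift-norm}. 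So the task reduces to proving the Dikin stability bound.

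To establish that, parametrize the segment by $y(t) = x + te$ with $e = x' - x$ and define
\[
\phi(t) \triangleq h^\top \nabla^2\psi(y(t)) h, \qquad \rho(t) \triangleq \sqrt{e^\top \nabla^2\psi(y(t)) e}.
\]
Direct differentiation gives $\phi'(t) = \nabla^3\psi(y(t))[h,h,e]$ and $(\rho^2)'(t) = \nabla^3\psi(y(t))[e,e,e]$. Polarizing the self-concordance axiom $|\nabla^3\psi(y)[w,w,w]| \leq 2\|w\|_y^3$ from its diagonal form to the mixed forms
\[
|\nabla^3\psi(y)[h,h,e]| \leq 2\|h\|_y^2 \|e\|_y, \qquad |\nabla^3\psi(y)[e,e,e]| \leq 2\|e\|_y^3,
\]
I would then obtain $|\phi'(t)| \leq 2\phi(t)\rho(t)$ and $|\rho'(t)| \leq \rho(t)^2$. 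The second inequality shows that $t \mapsto 1/\rho(t)$ is $1$-Lipschitz, so $\rho(t) \leq \rho(0)/(1 - t\rho(0))$ as long as $t\rho(0) < 1$. Substituting into $|(\ln\phi)'(t)| \leq 2\rho(t)$ and integrating from $0$ to $1$ yields
\[
\ln\phi(1) - \ln\phi(0) \geq -2\int_0^1 \rho(s)\, ds \geq 2\ln(1 - \rho(0)),
\]
i.e.\ $\phi(1) \geq (1 - \rho(0))^2 \phi(0)$, which is the desired Dikin bound once we observe $\rho(0) = \|x'-x\|_{\nabla^2\psi(x)}$.

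The main obstacle is the polarization step, upgrading the diagonal bound $|\nabla^3\psi(y)[w,w,w]| \leq 2\|w\|_y^3$ to the asymmetric mixed bound $|\nabla^3\psi(y)[h,h,e]| \leq 2\|h\|_y^2 \|e\|_y$. I would handle this by applying the diagonal bound to the perturbed direction $w = h + \tau e$, expanding the cubic in $\tau$ using the symmetry of $\nabla^3\psi$, and optimizing the resulting inequality in $\tau$; this is short but somewhat delicate, and it is the only non-mechanical ingredient in the argument. After that the rest is routine ODE analysis on the segment $[x,x']$, and since $x' \in \Ecal_1(x)$ we stay inside the Dikin ellipsoid (hence inside $\mathrm{int}(\calX)$) throughout, so all derivatives are well-defined.
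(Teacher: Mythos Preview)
The paper does not prove this lemma; it simply cites it as Theorem~2.1.1 of \cite{nesterov1994-IPM}. Your proposal is essentially the standard proof from that reference: use the self-concordance inequality along the segment $t \mapsto x + t(x'-x)$, derive the scalar differential inequalities $|(\ln\phi)'| \leq 2\rho$ and $|\rho'| \leq \rho^2$, and integrate. That is exactly how Nesterov establishes the Dikin Hessian stability bound from which \pref{eq:shift-norm} follows by taking square roots.

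You have also correctly identified the one genuinely nontrivial ingredient: lifting the diagonal self-concordance bound $|\nabla^3\psi(y)[w,w,w]| \leq 2\|w\|_y^3$ to the mixed form $|\nabla^3\psi(y)[h,h,e]| \leq 2\|h\|_y^2\|e\|_y$. Your suggested route (perturb $w = h + \tau e$ and optimize in $\tau$) is workable but a bit fiddly to get the sharp constant $2$; the cleanest argument---and the one Nesterov uses---is to observe that $h \mapsto \nabla^3\psi(y)[h,h,e]$ is a symmetric bilinear form, diagonalize it in the $\nabla^2\psi(y)$-inner product, and reduce to the diagonal bound along each eigenvector. Either way the conclusion is the same, and the remainder of your argument (the ODE integration and the boundary case $\|x'-x\|_x = 1$, where the right-hand side vanishes) goes through without issue.
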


\begin{lemma}[{Theorem 2.5.1 of~\cite{nesterov1994-IPM}}]
\label{lemma:exist-SCB}
For each each closed convex domain $\X \subseteq \R^d$, there exits an $\O(d)$-self-concordant barrier on $\X$.
\end{lemma}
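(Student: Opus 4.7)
The plan is to exhibit such a barrier explicitly, using the classical \emph{universal barrier} construction of Nesterov and Nemirovski. For $x \in \interior(\calX)$, I would define
\[
\Rcal(x) \triangleq c \cdot \log \mathrm{vol}\big((\calX - x)^\circ\big),
\]
where $(\calX - x)^\circ = \{y \in \R^d : \langle y, z - x\rangle \leq 1 \text{ for all } z \in \calX\}$ is the polar of the shifted body and $c > 0$ is a universal constant to be fixed at the end. The strategy is then to verify in turn (i) smoothness of $\Rcal$ and its blow-up on $\partial \calX$, (ii) the cubic self-concordance inequality, and (iii) the linear growth bound with parameter $\nu = \O(d)$.

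For step (i), I would first argue that $(\calX - x)^\circ$ varies smoothly with $x$ inside $\interior(\calX)$, via a change of variables that represents its volume as an integral over a fixed reference body; this gives $C^3$-regularity of $\Rcal$. The barrier property then follows because, as $x \to \partial \calX$, the polar $(\calX - x)^\circ$ becomes unbounded in the direction of any supporting hyperplane of $\calX$ at the limit point, so its volume diverges.

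For steps (ii) and (iii), the key technical device is an integral representation: up to the multiplicative constant $c$, $\log \mathrm{vol}((\calX - x)^\circ)$ can be viewed as a log-partition function of a certain log-concave measure supported on the polar, so its first, second, and third derivatives in $x$ become the first, second, and third cumulants of that measure. The cubic self-concordance inequality then reduces to a standard third-versus-second moment bound for log-concave measures, while the barrier parameter bound $\nu = \O(d)$ follows from the fact that the mean and second moment of such a distribution on a $d$-dimensional body are comparable up to a factor of order $d$, a volumetric consequence of the Brunn--Minkowski inequality.

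The main obstacle will be the third-moment estimate required for (ii): establishing
\[
\big|\nabla^3 \Rcal(x)[h,h,h]\big| \leq 2 \big(\nabla^2 \Rcal(x)[h,h]\big)^{3/2}
\]
uniformly in $h$ demands careful convex-geometric inequalities for the uniform measure on $(\calX - x)^\circ$, and the constant $c$ must be chosen precisely to absorb the numerical factors coming out of those inequalities. An alternative route that sidesteps part of this work is the \emph{entropic barrier} of Bubeck and Eldan, defined as the Legendre dual of $F(\theta) = \log \int_{\calX} e^{\langle \theta, y\rangle} dy$; for this choice both the self-concordance inequality and the $(1+o(1))d$ parameter bound follow more transparently from standard cumulant computations for log-concave measures.
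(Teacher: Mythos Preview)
The paper does not give its own proof of this lemma: it is simply stated with a citation to Theorem~2.5.1 of \cite{nesterov1994-IPM} and used as a black box. Your sketch via the universal barrier $\Rcal(x) = c \cdot \log \mathrm{vol}\big((\calX - x)^\circ\big)$ is precisely the construction behind that cited theorem, so you are reproducing (in outline) the proof the paper defers to.

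A brief remark on the content of your sketch: the broad strokes are right, but the step you flag as the main obstacle really is one. In the original Nesterov--Nemirovski argument the third-derivative bound is not obtained by treating $\Rcal$ as a log-partition function; rather, they work directly with differentiated volume integrals over $(\calX - x)^\circ$ and use quite delicate convex-geometric estimates. Your description of the derivatives as ``cumulants of a log-concave measure'' is closer in spirit to the later entropic-barrier analysis of Bubeck and Eldan than to the universal-barrier proof itself, so if you actually pursued this line you would likely end up mixing the two approaches. Either route is fine for establishing the lemma, but be aware that the self-concordance constant coming out of the universal barrier is a fixed (fairly large) universal constant times $d$, whereas the entropic barrier gives $(1+o(1))d$; both are $\O(d)$, which is all the paper uses.
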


\begin{lemma}[{Proposition 5.1.4 of~\cite{nesterov1994-IPM}}]
\label{lemma:normal-barrier-nu}
Suppose $\psi$ is a $\nu$-self-concordant barrier on $\X \subseteq \R^d$. Then the function 
\[
    \Psi(w,b) \triangleq 400 \left(\psi\big(\frac{w}{b}\big) - 2\theta \ln b\right)
\]
is a $\bar{\nu}$-self-concordant barrier on $con(\X) \subseteq \R^{d+1}$ with $\bar{\nu} = 800 \nu$, where $con(\X) = \{ \boldsymbol{0}\} \cup \{ (w,b) \mid \frac{w}{b} \in \X, w \in \R^d, b>0\}$ is the conic hull of $\X$ lifted to $\R^{d+1}$ (by appending a dummy variable $1$ to the last coordinate).
\end{lemma}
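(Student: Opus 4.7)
The plan is to verify the three defining conditions of a $\bar{\nu}$-self-concordant barrier for $\Psi(w,b) = 400\psi(w/b) - 800\nu\ln b$ on $\text{con}(\X)$: the blow-up condition on the boundary, the third-order self-concordance inequality, and the first-order barrier parameter bound. A useful auxiliary observation is that $\Psi$ is logarithmically homogeneous of degree $\bar{\nu}=800\nu$, since $\Psi(tw,tb)=\Psi(w,b)-800\nu\ln t$; this makes it a \emph{normal} barrier, which is precisely the property invoked in~\pref{lemma:normal-barrier}.

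First I would handle the barrier property. The boundary of $\text{con}(\X)$ decomposes into the lateral part, where $w/b\to\partial\X$ with $b>0$, and the apex, where $b\to 0^+$. On the lateral part, $\psi(w/b)\to\infty$ by the barrier property of $\psi$; on the apex, $-\ln b\to\infty$. Both pieces only add (with positive coefficients) to $\Psi$, so $\Psi$ blows up on all of $\partial(\text{con}(\X))$.

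Next, for self-concordance, I would introduce the change of variables $u=w/b$ and an ``effective direction'' $\delta = h_w/b - (h_b/b)u$ for a test direction $h=(h_w,h_b)\in\R^{d+1}$. Using the fact that $(w,b)\mapsto w/b$ is $0$-homogeneous, the chain rule gives clean expressions
\[
\nabla_h(\psi\circ \tfrac{w}{b}) = \nabla\psi(u)^\T\delta, \qquad \nabla^2_{h,h}(\psi\circ \tfrac{w}{b}) = \delta^\T\nabla^2\psi(u)\delta + \text{(correction from }b\text{)},
\]
and an analogous, but algebraically heavier, formula for the third differential. The contribution of $-2\nu\ln b$ along $h$ depends only on $h_b/b$ and produces a rank-one positive quadratic plus a cubic in $h_b/b$. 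Using $\psi$'s self-concordance in the $\delta$-direction, the self-concordance of $-\ln b$ in the $h_b/b$ direction, and Cauchy--Schwarz on the unavoidable cross terms, one obtains an inequality of the form $|\nabla^3 F[h,h,h]| \le C\,(\nabla^2 F[h,h])^{3/2}$ for $F=\psi(w/b)-2\nu\ln b$, with some universal constant $C$. The factor $400$ out front is then chosen so that $C/\sqrt{400}\le 2$, giving the standard self-concordance inequality for $\Psi$.

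Finally, for the barrier parameter, once logarithmic homogeneity of degree $800\nu$ is established, the identity $\nabla\Psi(z)^\T z = -800\nu$ combined with $\nabla^2\Psi(z)z = -\nabla\Psi(z)$ (a general consequence of logarithmic homogeneity) yields $\|\nabla\Psi(z)\|_{\nabla^{-2}\Psi(z)}^2 = 800\nu$, from which $(\nabla\Psi(z)^\T h)^2 \le 800\nu\cdot\nabla^2\Psi(z)[h,h]$ follows by Cauchy--Schwarz in the local norm. The main obstacle is the self-concordance verification in the middle step: the third-order derivatives of $\psi(w/b)$ inherit cross terms from the nonlinearity of the projection $(w,b)\mapsto w/b$, and bounding these by the Hessian requires delicately combining $\psi$'s self-concordance along $\delta$ with the built-in self-concordance of $-\ln b$ along the $b$-direction. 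The constant $400$ is not a cosmetic choice but the tuning needed to absorb the worst-case cross-term constants so that the classical self-concordance inequality holds without additional scaling.
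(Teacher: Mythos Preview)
The paper does not prove this lemma at all: it is stated as a direct citation of Proposition~5.1.4 in~\citet{nesterov1994-IPM}, and the paper relies on that reference for the full argument. Your sketch is essentially the standard route taken in that book---verifying the barrier/blow-up property, establishing self-concordance of $\psi(w/b)-2\nu\ln b$ by controlling the cross terms arising from the nonlinear map $(w,b)\mapsto w/b$ and then rescaling by $400$ to normalize the constant, and deducing the barrier parameter from logarithmic homogeneity---so there is no substantive divergence, only a difference in level of detail: you outline the proof, while the paper defers it entirely.

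One small caution on your middle step: the self-concordance verification is where all the work lies, and your description (``Cauchy--Schwarz on the unavoidable cross terms'') glosses over the part that actually determines the constant $400$. In Nesterov's treatment this requires a careful case split and several applications of the barrier inequality for $\psi$ in the $\delta$-direction together with the elementary inequality for $-\ln b$; the cross terms between $\delta$ and $h_b/b$ in the third differential do not fall out immediately from a single Cauchy--Schwarz. Your high-level plan is correct, but if you were to write this out in full you would need to track those constants explicitly rather than assert that $400$ suffices.
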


\begin{lemma}[{Proposition 2.3.4 of~\cite{nesterov1994-IPM}}]
\label{lemma:normal-barrier}
Suppose $\psi$ is a $\nu$-normal barrier on $\X \subseteq \R^d$. Then for any $x,y\in \interior(\X)$, we have
\begin{enumerate}
    \item[(1)] $\norm{x}_{\nabla^2 \psi(x)}^2 = x^\T \nabla^2 \psi(x) x = \nu$;
    \item[(2)] $\nabla^2 \psi(x) x = - \nabla \psi(x)$;
    \item[(3)] $\psi(y) \geq \psi(x) - \nu \ln \frac{-\inner{\nabla \psi(x),y}}{\nu}$.
    \item[(4)] $\|\nabla\psi(x)\|_{\nabla^{-2}\psi(x)}^2=\nu$.
\end{enumerate}
\end{lemma}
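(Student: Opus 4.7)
The starting point is the defining property of a $\nu$-normal (i.e.\ logarithmically homogeneous $\nu$-self-concordant) barrier on the cone $\X$: namely $\psi(tx) = \psi(x) - \nu \ln t$ for every $t>0$ and $x \in \interior(\X)$. My plan is to derive (2) by differentiating this identity in two different ways, then read off (1) and (4) as algebraic consequences of (2), and finally obtain (3) by a short rescaling argument that reduces it to plain convexity.

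First, differentiating $\psi(tx) = \psi(x) - \nu \ln t$ in $t$ gives $\inner{\nabla\psi(tx), x} = -\nu/t$; at $t=1$ this is the Euler-type identity $\inner{\nabla\psi(x), x} = -\nu$. Separately, taking the gradient of both sides in $x$ (legal because $\psi$ is $C^3$ on $\interior(\X)$ and for each fixed $t>0$ both sides are smooth in $x$) yields $t\,\nabla\psi(tx) = \nabla\psi(x)$, i.e.\ $\nabla\psi(tx) = t^{-1}\nabla\psi(x)$; differentiating this relation in $t$ and setting $t=1$ produces $\nabla^2\psi(x)\,x = -\nabla\psi(x)$, which is (2). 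Multiplying (2) on the left by $x^\top$ and using the Euler identity gives $x^\top \nabla^2\psi(x)\,x = -\inner{\nabla\psi(x), x} = \nu$, which is (1). For (4), substitute $\nabla\psi(x) = -\nabla^2\psi(x)\,x$ from (2) into $\nabla\psi(x)^\top \nabla^{-2}\psi(x)\,\nabla\psi(x)$ and the $\nabla^2\psi(x)$ factors collapse against $\nabla^{-2}\psi(x)$, leaving $x^\top \nabla^2\psi(x)\,x = \nu$.

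For (3), set $\alpha \triangleq -\inner{\nabla\psi(x), y}/\nu$ and $z \triangleq y/\alpha$. By construction $\inner{\nabla\psi(x), z} = -\nu = \inner{\nabla\psi(x), x}$, so $\inner{\nabla\psi(x), z - x} = 0$, and convexity of $\psi$ yields $\psi(z) \geq \psi(x)$. Logarithmic homogeneity then rewrites $\psi(y) = \psi(\alpha z) = \psi(z) - \nu \ln \alpha \geq \psi(x) - \nu \ln \alpha$, which is exactly (3).

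The main obstacle is justifying that $\alpha > 0$, i.e.\ that $\inner{\nabla\psi(x), y} < 0$ for all $x, y \in \interior(\X)$ (otherwise the $\ln \alpha$ in (3) is ill-defined). I will establish this by studying $g(t) \triangleq \psi(x + ty)$ for $t \geq 0$: convexity of $\psi$ makes $g'$ nondecreasing, while the rewrite $g(t) = \psi(y + x/t) - \nu \ln t$ (valid for $t>0$ by homogeneity) combined with $\inner{\nabla\psi(y), y} = -\nu$ forces $g'(t) \sim -\nu/t < 0$ as $t \to \infty$. Since $g'$ is nondecreasing and eventually negative, $g'(0) = \inner{\nabla\psi(x), y} < 0$, which gives $\alpha > 0$ and closes the argument.
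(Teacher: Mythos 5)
Your proof is correct; the paper itself does not prove this lemma but simply cites Proposition~2.3.4 of Nesterov and Nemirovskii, so there is no in-paper argument to compare against. Your derivation is a clean, self-contained one and deserves a brief comment: you use only the logarithmic-homogeneity identity $\psi(tx)=\psi(x)-\nu\ln t$ together with convexity and smoothness, and never invoke the self-concordance inequality itself. Parts (1), (2), (4) are pure Euler-type differential identities: differentiating in $t$ gives $\inner{\nabla\psi(x),x}=-\nu$, differentiating in $x$ gives degree-$(-1)$ homogeneity of the gradient, and one more $t$-derivative yields $\nabla^2\psi(x)x=-\nabla\psi(x)$, from which (1) and (4) drop out by left-multiplying by $x^\top$ and by substituting into the quadratic form, respectively (with the only implicit hypothesis for (4) being nondegeneracy of the Hessian, which holds for any self-concordant barrier on a cone with nonempty interior). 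Part (3) is the one place that needs care, and you handle it correctly: the rescaling $z=y/\alpha$ with $\alpha=-\inner{\nabla\psi(x),y}/\nu$ reduces the inequality to plain convexity, and the claim $\alpha>0$ follows from the monotonicity of $t\mapsto\inner{\nabla\psi(x+ty),y}$ (convexity of $g$) combined with the asymptotic $g'(t)=t^{-1}\inner{\nabla\psi(y+x/t),y}\to -\nu/t$ via the gradient homogeneity and $\inner{\nabla\psi(y),y}=-\nu$. The net effect is that your argument actually proves slightly more than the cited proposition states: it shows these four identities for any logarithmically homogeneous, smooth, strictly convex barrier on a cone, with $\nu$ being the homogeneity parameter rather than necessarily the self-concordance parameter.
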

\section{Additional Lemmas}\label{sec: tech_lemma}

\subsection{FTRL Lemma}\label{sec:technical-lemmas}

For completeness, we present the following general result for FTRL-type algorithms as follows.
\begin{lemma}
\label{lemma:FTRL-regret}
Let $\X \subseteq \R^d$ be a closed and convex feasible set, and denote by $R_t: \X \mapsto \R$ the convex regularizer and by $f_t: \X \mapsto \R$ the convex online functions. Denote by $F_t(x) = R_t(x) + \sum_{s=1}^{t-1} f_s(x)$ and the FTRL update rule is specified as $x_t \in \argmin_{x \in \X} F_t(x)$. Then, for any $u \in \X$ we have
\begin{equation}
    \label{eq:FTRL-regret}
    \begin{split}    
    \sum_{t=1}^T f_t(x_t) - \sum_{t=1}^T f_t(u) \leq  & R_{T+1}(u) - R_1(x_1) + \sum_{t=1}^T \nabla f_t(x_t)^\T (x_t - x_{t+1}) \\
     & - \sum_{t=1}^T D_{F_t + f_t}(x_{t+1},x_t) + \sum_{t=1}^T \Big( R_t(x_{t+1})-R_{t+1}(x_{t+1})\Big),
    \end{split}
\end{equation}
where $D_{F_t + f_t}(\cdot,\cdot)$ denotes the Bregman divergence induced by the function $F_t + f_t$.
\end{lemma}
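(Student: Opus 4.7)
The plan is to split the regret additively as
$\sum_{t=1}^T f_t(x_t) - \sum_{t=1}^T f_t(u) = \sum_{t=1}^T [f_t(x_t) - f_t(x_{t+1})] + \sum_{t=1}^T [f_t(x_{t+1}) - f_t(u)]$,
isolating a ``stability'' piece and a ``Follow-The-Leader residual.'' For the stability piece, the very definition of Bregman divergence gives $f_t(x_t) - f_t(x_{t+1}) = \nabla f_t(x_t)^\top (x_t - x_{t+1}) - D_{f_t}(x_{t+1}, x_t)$, which already produces the $\nabla f_t(x_t)^\top (x_t - x_{t+1})$ summand and the $-D_{f_t}(x_{t+1}, x_t)$ half of the target term $-D_{F_t + f_t}(x_{t+1}, x_t) = -D_{F_t}(x_{t+1}, x_t) - D_{f_t}(x_{t+1}, x_t)$ (using that Bregman divergence is additive in its generating function).

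It then remains to bound the FTL residual by a \emph{strong FTL--BTL inequality} of the form
$\sum_t [f_t(x_{t+1}) - f_t(u)] \leq R_{T+1}(u) - R_1(x_1) - \sum_t D_{F_t}(x_{t+1}, x_t) + \sum_t [R_t(x_{t+1}) - R_{t+1}(x_{t+1})]$.
The key technical ingredient I would use is first-order optimality of $x_t = \argmin_{\X} F_t$: combined with the Bregman expansion $F_t(x_{t+1}) = F_t(x_t) + \nabla F_t(x_t)^\top (x_{t+1} - x_t) + D_{F_t}(x_{t+1}, x_t)$, the optimality condition $\nabla F_t(x_t)^\top (x_{t+1}-x_t) \geq 0$ yields the crucial estimate $D_{F_t}(x_{t+1}, x_t) \leq F_t(x_{t+1}) - F_t(x_t)$, replacing the otherwise opaque sum of divergences by a telescope-friendly expression.

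Substituting this bound into the rearranged target and using the identity $F_t + f_t = F_{t+1} + (R_t - R_{t+1})$ to rewrite $f_t(x_{t+1}) + F_t(x_{t+1}) = F_{t+1}(x_{t+1}) + R_t(x_{t+1}) - R_{t+1}(x_{t+1})$, the two occurrences of $R_t(x_{t+1}) - R_{t+1}(x_{t+1})$ (one from the target inequality, one from the identity) cancel with opposite signs, leaving $R_1(x_1) + \sum_t [F_{t+1}(x_{t+1}) - F_t(x_t)]$, which telescopes to $F_{T+1}(x_{T+1})$ (using $F_1 = R_1$). The final step is the minimality $F_{T+1}(x_{T+1}) \leq F_{T+1}(u) = R_{T+1}(u) + \sum_t f_t(u)$, which closes the strong FTL--BTL inequality and hence the lemma.

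The main obstacle is not conceptual but rather bookkeeping: one must track which first-order optimality condition is being invoked at each $t$ (that of $x_t$ for $F_t$, not of $x_{t+1}$ for $F_{t+1}$) to get the divergence with the correct sign, and arrange the time-varying regularizer contributions so that the $R_t(x_{t+1}) - R_{t+1}(x_{t+1})$ pieces cancel exactly, which is what allows the apparently mismatched sums to collapse into the single telescope $F_{T+1}(x_{T+1}) - F_1(x_1)$.
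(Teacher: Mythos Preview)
Your proposal is correct and follows essentially the same approach as the paper. Both proofs rely on the same two optimality conditions (first-order optimality of $x_t$ for $F_t$, yielding $\nabla F_t(x_t)^\top(x_{t+1}-x_t)\geq 0$, and minimality of $x_{T+1}$ for $F_{T+1}$), the Bregman expansion, and the identity $F_t+f_t=F_{t+1}+R_t-R_{t+1}$; the only difference is that you split the divergence as $D_{F_t+f_t}=D_{F_t}+D_{f_t}$ and handle the two pieces in separate ``stability'' and ``FTL--BTL'' sub-arguments, whereas the paper keeps $F_t+f_t$ together, starting from the algebraic identity $\sum_t f_t(x_t)-\sum_t f_t(u)=R_{T+1}(u)-R_1(x_1)+[F_{T+1}(x_{T+1})-F_{T+1}(u)]+\sum_t[F_t(x_t)-F_{t+1}(x_{t+1})+f_t(x_t)]$ and expanding each summand directly via the Bregman divergence of $F_t+f_t$.
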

\begin{proof}
It is easy to verify that the following equation holds for any comparator $u \in \X$,
\[
    \begin{split}
    \sum_{t=1}^T f_t(x_t) - \sum_{t=1}^T f_t(u) =  & R_{T+1}(u) - R_1(x_1) + F_{T+1}(x_{T+1}) - F_{T+1}(u) \\
     & \qquad +  \sum_{t=1}^T \Big(F_t(x_t) - F_{t+1}(x_{t+1}) + f_t(x_t)\Big).
    \end{split}
\]
Moreover, we have
\begin{align*}
    & F_t(x_t) - F_{t+1}(x_{t+1}) + f_t(x_t)\\
    & = F_t(x_t) + f_t(x_t) - \left(F_t(x_{t+1}) + f_t(x_{t+1})\right) + R_t(x_{t+1}) - R_{t+1}(x_{t+1})\\
    & = \inner{\nabla F_t(x_t) + \nabla f_t(x_t),x_t - x_{t-1}} - D_{F_t + f_t}(x_{t+1},x_t) + R_t(x_{t+1})-R_{t+1}(x_{t+1})\\
    & \leq \inner{\nabla f_t(x_t),x_t - x_{t-1}} - D_{F_t + f_t}(x_{t+1},x_t) + R_t(x_{t+1})-R_{t+1}(x_{t+1})
\end{align*}
where the last inequality holds by the optimality condition of $x_t \in \argmin_{x\in \X} F_t(x)$. Hence, combining the above equations finishes the proof.
\end{proof}

\subsection{Relations among strong convexity, smoothness and Lipschitzness}

In this section, we discuss the relations among strong convexity, smoothness and Lipschitzness. First, we point out a minor technical flaw that appeared in two previous works on BCO \citep{AISTATS'11:smooth-BCO,conf/nips/HazanL14}. In both works, the authors use the statement that a convex function $f$ that is $\beta$-smooth and has bounded value in $[-1,1]$ has Lipschitz constant no more than $2\beta+1$ when $\max_{x,x'\in\calX}\|x-x'\|_2\in [2,4]$. However, this is not correct as we give the following counter example.
\begin{example}
    Consider the following function in $2$-dimensional space: $f(x,y)=Gy$ where $G>1$ can be arbitrarily large and the first coordinate does not affect the function value. The feasible domain is defined as $\calX=\{(x,y)\;|\;x\in[-1,1], y\in [-\frac{1}{G},\frac{1}{G}]\}$ with diameter in $[2,4]$. It is direct to see that function $f$ is $0$-smooth and has bounded value in $[-1,1]$. However the Lipschitz constant is $G$, which can be arbitrarily large.
\end{example}
\citet{conf/nips/HazanL14} and~\citet{AISTATS'11:smooth-BCO} use this property to bound the term~\textsc{Comparator Bias} in \pref{eq:decomposition}. We fix that by using the property of convexity.

Next, we discuss the relationship between strong convexity and Lipschitzness. Specifically, the following lemma shows that for a convex function $f$ that is $L$-Lipschitz and defined over a bounded domain with diameter $D$, its strong convexity parameter $\sigma$ is upper bounded by $\frac{4L}{D}$.
\begin{lemma}\label{lem: upper bound sigma}
If a convex function $f:\calX\mapsto\mathbb{R}$ is $L$-Lipschitz and $\sigma$-strongly convex, and has bounded domain diameter $\max_{x,x'\in \calX}\|x-x'\|_2= D$, then we have $\sigma\leq\frac{4L}{D}$.
\end{lemma}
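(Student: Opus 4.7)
The plan is to derive the bound directly from combining the strong convexity inequality (applied in both directions) with the gradient bound implied by Lipschitzness.

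First I would pick two points $x, y \in \calX$ achieving the diameter, i.e., $\|x-y\|_2 = D$ (such points exist by compactness, which is already assumed in the paper's setup). Applying the definition of $\sigma$-strong convexity once with base point $x$ and comparator $y$, and once with the roles swapped, and then adding the two inequalities gives the standard consequence
\begin{equation*}
    (\nabla f(y) - \nabla f(x))^\top (y-x) \geq \sigma \|y-x\|_2^2 = \sigma D^2.
\end{equation*}
The left-hand side is at most $\|\nabla f(y) - \nabla f(x)\|_2 \cdot D$ by Cauchy--Schwarz, so $\|\nabla f(y) - \nabla f(x)\|_2 \geq \sigma D$.

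Next I would upper-bound the gradient difference using Lipschitzness. For a differentiable convex function that is $L$-Lipschitz on $\calX$, a short argument (take a directional limit of $(f(x+tu)-f(x))/t$ for a unit vector $u$) yields $\|\nabla f(x)\|_2 \leq L$ for every interior point $x \in \calX$, hence $\|\nabla f(y) - \nabla f(x)\|_2 \leq 2L$ by the triangle inequality. Combining this with the lower bound above gives $\sigma D \leq 2L$, which already implies the stated bound $\sigma \leq 4L/D$ (in fact with a factor-of-two slack).

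Since the computation is short and uses only first principles, no step is really an obstacle; the only subtlety is justifying that $L$-Lipschitzness implies the pointwise gradient bound $\|\nabla f\|_2 \leq L$ on a closed convex domain, which I would handle either by citing the standard equivalence for convex functions or by a two-line directional-derivative argument. Boundary points where the gradient might not exist can be avoided by taking $x, y$ slightly perturbed into the interior and passing to the limit, since $D$ is the supremum of $\|x-y\|_2$ and the inequality is preserved in the limit.
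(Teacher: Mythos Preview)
Your argument is correct, and it actually yields the sharper bound $\sigma\le 2L/D$, which of course implies the stated $\sigma\le 4L/D$. The only delicate point, as you note, is that $\|\nabla f\|_2\le L$ is guaranteed a priori only at interior points; your fix of taking interior $x',y'$ with $\|x'-y'\|_2\to D$ and passing to the limit in the scalar inequality $\sigma\|x'-y'\|_2\le 2L$ is enough.

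The paper takes a genuinely different route: it picks $y=\argmin_{x\in\calX} f(x)$ and combines the single strong-convexity inequality with Lipschitzness to get $L\|x-y\|_2\ge \frac{\sigma}{2}\|x-y\|_2^2$, using first-order optimality to drop the $\nabla f(y)^\top(x-y)$ term. This yields $\sigma\le 2L/\|x-y\|_2$ for every $x$, and then a triangle-inequality argument (for diameter-attaining $x_1,x_2$, one of them is at least $D/2$ from $y$) gives $\sigma\le 4L/D$. Compared to your approach, the paper avoids invoking the pointwise gradient bound $\|\nabla f\|_2\le L$ and works directly with function values, at the cost of an extra factor of $2$ coming from the fact that the minimizer may sit near the center of $\calX$. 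Your symmetrized monotonicity argument is cleaner and tight in the constant; the paper's argument is slightly more elementary in that it never needs to bound $\|\nabla f\|_2$.
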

In fact, we have for any $x,y\in \calX$
\begin{align*}
    L\|x-y\|_2 \geq f_t(x) - f_t(y) \geq \nabla f_t(y)^\top (x-y) + \frac{\sigma}{2}\|x-y\|_2^2.
\end{align*}
Choose $y=\argmin_{x\in\calX}f_t(x)$ and we have
\begin{align*}
    L\|x-y\|_2 \geq f_t(x) - f_t(y) \geq \nabla f_t(y)^\top (x-y) + \frac{\sigma}{2}\|x-y\|_2^2\geq \frac{\sigma}{2}\|x-y\|_2^2.
\end{align*}
Therefore, $\sigma\leq \frac{2L}{\|x-y\|_2}$ for any $x\in \calX$, which means that $\sigma\leq \frac{4L}{D}$. This is because we can choose $x_1,x_2\in\calX$ such that $\|x_1-x_2\|=D=\max_{x,x'\in\calX}\|x-x'\|_2$. Then we have $\|x_1-y\|_2+\|x_2-y\|_2\geq \|x_1-x_2\|_2=D$, which means that either $\|x_1-y\|\geq \frac{D}{2}$ or $\|x_2-y\|_2\geq \frac{D}{2}$. This shows that when $f_t$ is both $\sigma$-strongly convex and $L$-Lipschitz, we have $\sigma\leq\frac{4L}{D}$.

\end{document}